\def\shownotes{1}  
\newcommand{\authnote}[2]{{$\ll$\textsf{\footnotesize #1 notes: #2}$\gg$}}
\newcommand{\authnote}[2]{}
\theoremstyle{plain}
\newtheorem{thm}{Theorem}
\newtheorem{property}[thm]{Property}
\newtheorem{lem}[thm]{Lemma}
\newtheorem{claim}[thm]{Claim}
\theoremstyle{definition}
\newtheorem*{defn}{Definition}
\title{
Recovery Guarantee of Non-negative Matrix Factorization via Alternating Updates: Supplementary Material
}
\author{
	Yuanzhi Li, Yingyu Liang, Andrej Risteski \\
	Computer Science Department at Princeton University\\
	35 Olden St, Princeton, NJ 08540 \\
	\texttt{\{yuanzhil, yingyul, risteski\}@cs.princeton.edu}
}
\begin{document}

\maketitle

\begin{abstract}
Non-negative matrix factorization is a popular tool for  decomposing data into feature and weight matrices under non-negativity constraints. It enjoys practical success but is poorly understood theoretically. This paper proposes an algorithm that alternates between decoding the weights and updating the features, and shows that assuming a generative model of the data, it provably recovers the ground-truth under fairly mild conditions. In particular, its only essential requirement on features is linear independence. Furthermore, the algorithm uses ReLU to exploit the non-negativity for decoding the weights, and thus can tolerate adversarial noise that can potentially be as large as the signal, and can tolerate unbiased noise much larger than the signal. The analysis relies on a carefully designed coupling between two potential functions, which we believe is of independent interest.
\end{abstract}

\section{Introduction} \label{sec:intro}

In this paper, we study the problem of non-negative matrix factorization (NMF), where given a matrix $\bY \in \mathbb{R}^{m \times N}$, the goal to find a matrix $\bA \in \mathbb{R}^{m \times n}$ and a \emph{non-negative} matrix  $\bX \in \mathbb{R}^{n \times N}$ such that $\bY \approx \bA \bX$.\footnote{In the usual formulation of the problem, $\bA$ is also assumed to be non-negative, which we will not require in this paper.}  $\bA$ is often referred to as \emph{feature matrix} and $\bX$ referred as \emph{weights}. 
NMF has been extensively used in extracting  a parts representation of the data (e.g.,~\cite{LeeSeu97,LeeSeu99,LeeSeu01}). Empirically it is observed that the non-negativity constraint on the coefficients forcing features to combine, but not cancel out, can lead to much more interpretable features and improved downstream performance of the learned features. 

Despite all the practical success, however, this problem is poorly understood theoretically, with only few provable guarantees known. 
Moreover, many of the theoretical algorithms are based on heavy tools from algebraic geometry (e.g., \cite{AroGeKanMoi12}) or tensors (e.g. \cite{anandkumar2012two}), which are still not as widely used in practice primarily because of computational feasibility issues or sensitivity to assumptions on $\bA$ and $\bX$. Some others depend on specific structure of the feature matrix, such as separability~\cite{AroGeKanMoi12} or similar properties~\cite{bhattacharyya2016nonnegative}.

A natural family of algorithms for NMF alternate between decoding the weights and updating the features. More precisely, in the decoding step, the algorithm represents the data as a non-negative combination of the current set of features; in the updating step, it updates the features using the decoded representations. This meta-algorithm is popular in practice due to ease of implementation, computational efficiency, and empirical quality of the recovered features. However, even less theoretical analysis exists for such algorithms. 

This paper proposes an algorithm in the above framework with provable recovery guarantees. To be specific, the data is assumed to come from a generative model $y = \bAg x^* + \nu$. Here, $\bAg$ is the ground-truth feature matrix, $x^*$ are the non-negative ground-truth weights generated from an unknown distribution, and $\nu$ is the noise. Our algorithm can provably recover $\bAg$ under mild conditions, even in the presence of large adversarial noise.  

\textbf{Overview of main results.}
The existing theoretical results on NMF can be roughly split into two categories. In the first category, they make heavy structural assumptions on the feature matrix $\bAg$ such as separability (\cite{arora1}) or allowing running time exponential in $n$ ( \cite{AroGeKanMoi12}). In the second one, they impose strict distributional assumptions on $x^*$ (\cite{anandkumar2012two}), where the methods are usually based on the method of moments and tensor decompositions and have poor tolerance to noise, which is very important in practice. 

In this paper, we present a very simple and natural alternating update algorithm that achieves the best of both worlds. First, we have minimal assumptions on the feature matrix $\bAg$: the only essential condition is linear independence of the features. Second, it is robust to adversarial noise $\nu$ which in some parameter regimes can potentially be on the same order as the signal $\bAg x^*$, and is robust to unbiased noise potentially even higher than the signal by a factor of $O(\sqrt{n})$. The algorithm does not require knowing the distribution of $x^*$, and allows a fairly wide family of interesting distributions. We get this at a rather small cost of a mild ``warm start''. Namely, we initialize each of the features to be ``correlated'' with the ground-truth features. 
This type of initialization is often used in practice as well, for example in LDA-c, the most popular software for topic modeling (\cite{ldac}).   
  


A major feature of our algorithm is the significant robustness to noise. 
In the presence of adversarial noise on each entry of $y$ up to level $\cnoise$, the noise level $\|\nu\|_1$ can be in the same order as the signal $\bAg x^*$. Still, our algorithm 
is able to output a matrix $\bA$ such that the \emph{final} $\|\bAg - \bA \|_1 \le O(\| \nu \|_1)$ in the order of the noise in \emph{one} data point. 
If the noise is unbiased (i.e., $\E[\nu|x^*] =0$), the noise level $\|\nu\|_1$ can be $\Omega(\sqrt{n})$ times larger than the signal $\bAg x^*$, while we can still guarantee $\| \bAg - \bA \|_1 \le O\rbr{ \| \nu \|_1 \sqrt{\ntopic} }$ -- so our algorithm is not only tolerant to noise, but also has very strong denoising effect. Note that even for the unbiased case the noise can potentially be correlated with the ground-truth in very complicated manner, and also, all our results are obtained only requiring the columns of $\bAg$ are independent. 

\textbf{Technical contribution.}
The success of our algorithm crucially relies on exploiting the non-negativity of $x^*$ by a ReLU thresholding step during the decoding procedure. Similar techniques have been considered in prior works on matrix factorization, however to the best of our knowledge, the analysis (e.g., \cite{aroradictionary1}) requires that the decodings are correct in all the intermediate iterations, in the sense that the supports of $x^*$ are recovered with no error. Indeed, we cannot hope for a similar guarantee in our setting, since we consider adversarial noise that could potentially be the same order as the signal. Our major technical contribution is a way to deal with the erroneous decoding throughout all the intermediate iterations. We achieve this by a coupling between two potential functions that capture different aspects of the working matrix $\bA$. 
While analyzing iterative algorithms like alternating minimization or gradient descent in non-convex settings is a popular topic in recent years, the proof usually proceeds by showing that the updates are approximately performing gradient descent on an objective with some local or hidden convex structure. Our technique diverges from the common proof strategy, and we believe is interesting in its own right. 

\textbf{Organization.} 
After reviewing related work, we define the problem in Section~\ref{sec:problem} and describe our main algorithm in Section~\ref{sec:algo_purification}. To emphasize the key ideas, we first present the results and the proof sketch for a simplified yet still interesting case in Section~\ref{sec:result_simplified}, and then present the results under much more general assumptions in Section~\ref{sec:result_general}. The complete proof is provided in the appendix. 

\section{Related work} \label{sec:relatedwork}

Non-negative matrix factorization relates to several different topics in machine learning. We provide a high level review, and discuss in more details in the appendix. 
  
\textbf{Non-negative matrix factorization.}  The area of non-negative matrix factorization (NMF) has a rich empirical history, starting with the practical algorithm of \cite{LeeSeu97}.
On the theoretical side, \cite{AroGeKanMoi12} provides a fixed-parameter tractable algorithm for NMF, which solves algebraic equations and thus has poor noise tolerance. 
\cite{AroGeKanMoi12} also studies NMF under separability assumptions about the features. \cite{bhattacharyya2016nonnegative} studies NMF under heavy noise, but also needs assumptions related to separability, such as the existence of dominant features. Also, their noise model is different from ours. 

\textbf{Topic modeling.} A closely related problem to NMF is topic modeling, a common generative model for textual data~\cite{blei2003latent,blei2012probabilistic}. Usually, $\|x^*\|_1 = 1$ while there also exist work that assume $x^*_i \in [0,1]$ and are independent~\cite{zhu2012sparse}. A popular heuristic in practice for learning $\bA^*$ is \emph{variational inference}, which can be interpreted as alternating minimization in KL divergence norm. On the theory front, there is a sequence of works by 
based on either spectral or combinatorial approaches, which need certain ``non-overlapping'' assumptions on the topics. For example, 
\cite{arora2} assume the topic-word matrix contains ``anchor words'': words which appear in a single topic. 
Most related is the work of \cite{AwaRis15} who analyze a version of the variational inference updates when documents are long.  However, they require strong assumptions on both the warm start, and the amount of ``non-overlapping'' of the topics in the topic-word matrix.



\textbf{ICA.} Our generative model for $x^*$ will assume the coordinates are independent, therefore our problem can be viewed as a non-negative variant of ICA with high levels of noise. 
Results here typically are not robust to noise, with the exception of \cite{arora2012provable} that tolerates Gaussian noise. However, to best of our knowledge, no result in this setting is provably robust to adversarial noise.
%


\textbf{Non-convex optimization.} The framework of having a ``decoding'' for the samples, along with performing an update for the model parameters has proven successful for dictionary learning as well. The original empirical work proposing such an algorithm (in fact, it suggested that the V1 layer processes visual signals in the same manner) was due to \cite{olshausen1997sparse}. Even more, similar families of algorithms based on ``decoding'' and gradient-descent are believed to be neurally plausible as mechanisms for a variety of tasks like clustering, dimension-reduction, NMF, etc (\cite{mitya1,mitya4}). 
A theoretical analysis came latter for dictionary learning due to \cite{aroradictionary1}
under the assumption that the columns of $\bA^*$ are incoherent. The technique is not directly applicable to our case, as we don't wish to have any assumptions on the matrix $\bA^*$. For instance,  if $\bA^*$ is non-negative and columns with $l_1$ norm 1, incoherence effectively means the the columns of $\bA^*$ have very small overlap.



\section{Problem definition and assumptions} \label{sec:problem}

Given a matrix $\bY \in \Real^{\nword \times N}$, the goal of non-negative matrix factorization (NMF) is to find a matrix $\bA \in \Real^{\nword \times \ntopic}$ and a  non-negative matrix $\bX \in \Real^{\ntopic \times N}$, so that $\bY \approx \bA \bX$. The columns of $\bY$ are called data points, those of $\bA$ are features, and those of $\bX$ are weights.  We note that in the original NMF, $\bA$ is also assumed to be non-negative, which is not required here. We also note that typically $m \gg n$, i.e., the features are a few representative components in the data space. This is different from dictionary learning where overcompleteness is often assumed. 

The problem in the worst case is NP-hard~\cite{AroGeKanMoi12}, so some assumptions are needed to design provable efficient algorithms. In this paper, we consider a generative model for the data point 
\begin{align}
  y = \bAg x^* + \noise \label{eqn:model}
\end{align}
where $\bAg$ is the ground-truth feature matrix, $x^*$ is the ground-truth non-negative weight from some unknown distribution, and $\nu$ is the noise. Our focus is to recover $\bAg$ given access to the data distribution, assuming some properties of $\bAg$, $x^*$, and $\noise$. 
To describe our assumptions, we let $[\bM]^i$ denote the $i$-th row of a matrix $\bM$, $[\bM]_j$ its $i$-th column, $\bM_{i,j}$ its $(i,j)$-th entry. Denote its column norm, row norm, and symmetrized norm as
$
  \nbr{\bM}_1 = \max_j {\sum_i \abr{ \bM_{i,j}}},   
	\nbr{\bM}_\infty = \max_i {\sum_j \abr{ \bM_{i,j}}},
$
and
$
	\sym{\bM} = \max\cbr{ \nbr{\bM}_1, \nbr{\bM}_\infty},
$
respectively.

We assume the following hold for parameters $\xexpc, \xsl, \xsu, \ell, \cnoise$ to be determined in our theorems. 
\begin{itemize}
\item[(\textbf{A1})] The columns of $\bA^*$ are linearly independent.
\item[(\textbf{A2})] For all $i \in [\ntopic]$, $x_i^* \in [0, 1]$, $\E[x^*_i] \le \frac{\xexpc}{\ntopic} $ and $\frac{\xsl}{\ntopic} \le \E[(x^*_i)^2] \le \frac{\xsu}{\ntopic}$, and $x^*_i$'s are independent. 
\item[(\textbf{A3})] The initialization $\titime{\bA}{0} = \bA^* ( \titime{\bSigma}{0} + \titime{\bE}{0}) + \bN^{(0)}$, where $\titime{\bSigma}{0}$ is diagonal, $\titime{\bE}{0}$ is off-diagonal, and
\[
  \titime{\bSigma}{0} \succeq (1 -\ell) \bI, \quad \sym{\titime{\bE}{0}} \le \ell. 
\] 
\end{itemize}

We consider two noise models.
\begin{itemize}
\item[(\textbf{N1})]  Adversarial noise: only assume that $\max_i |\noise_i| \le \cnoise$ almost surely.
\item[(\textbf{N2})] Unbiased noise: $\max_i |\noise_i| \le \cnoise$ almost surely, and $\E[\noise | x^*] = 0$.
\end{itemize}

\textbf{Remarks.} We make several remarks about each of the assumptions. \\
(\textbf{A1}) is the assumption about $\bAg$. It only requires the columns of $\bAg$ to be linear independent, which is very mild and needed to ensure identifiability. Otherwise, for instance, if $(\bA^*)_3 = \lambda_1 (\bA^*)_1 + \lambda_2 (\bA^*)_2$, it is impossible to distinguish between the case when $x^*_3 = 1$ and the case when $x^*_2 = \lambda_1$ and $x^*_1 = \lambda_2$.
In particular, we do not restrict the feature matrix to be non-negative, which is more general than the traditional NMF and is potentially useful for many applications. We also do not make incoherence or anchor word assumptions that are typical in related work.  

(\textbf{A2}) is the assumption on $x^*$. First, the coordinates are non-negative and bounded by 1; this is simply a matter of scaling. Second, the assumption on the moments requires that, roughly speaking, each feature should appear with reasonable probability. This is expected: if the occurrences of the features are extremely unbalanced, then it will be difficult to recover the rare ones. 
The third requirement on independence is motivated by that the features should be different so that their occurrences are not correlated. Here we do not stick to a specific distribution, since the moment conditions are more general, and highlight the essential properties our algorithm needs. Example distributions satisfying our assumptions will be discussed later.

The warm start required by (\textbf{A3}) means that each feature $\bA^{(0)}_i$ has a large fraction of the ground-truth feature $\bAg_i$ and a small fraction of the other features, plus some noise outside the span of the ground-truth features.  We emphasize that $\bN^{(0)}$ is the component of $\bA^{(0)}$ outside the column space of $\bAg$, and is not the difference between $\bA^{(0)}$ and $\bAg$. This requirement is typically achieved in practice by setting the columns of $\bA^{(0)}$ to reasonable ``pure'' data points that contains one major feature and a small fraction of some other features (e.g. \cite{ldac,AwaRis15}); in this initialization, it is generally believed that $\bN^{(0)} = 0$. But we state our theorems to allow some noise $\bN^{(0)}$ for robustness in the initialization.

The adversarial noise model (\textbf{N1}) is very general, only imposing an upper bound on the entry-wise noise level. Thus, $\noise$ can be correlated with $x^*$ in some complicated unknown way. (\textbf{N2}) additionally requires it to be zero mean, which is commonly assumed and will be exploited by our algorithm to tolerate larger noise.

\section{Main algorithm} \label{sec:algo_purification}

\newsavebox{\algopurification}
\savebox{\algopurification}{
\begin{minipage}{\textwidth}
\begin{algorithm}[H]
\caption{Purification}\label{alg:main_sim_noise}
\begin{algorithmic}[1]
\REQUIRE{initialization $\bA^{(0)}$, threshold $\alpha$, step size $\eta$, scaling factor $\etaratio$, sample size $N$, iterations $T$}
\FOR{$t = 0, 1, 2, ..., T - 1$}
\STATE Draw examples $y_1, \dots, y_N$.
\STATE (Decode) Compute $\bA^\dagger$, the pseudo-inverse of $\bA^{(t)}$ with minimum $\|(\bA)^\dagger\|_\infty$.\\
\qquad\qquad~Set $x = \phi_{\alpha} (\bA^{\dagger} y)$ for each example $y$. 
\hfill \textit{// $\phi_\alpha$ is ReLU activation; see \eqnref{eqn:relu} for the definition}
\STATE (Update)
Update the feature matrix \\
$
  \qquad\qquad \qquad\qquad 
	\bA^{(t + 1)} = \left(1 - \eta \right)\bA^{(t)} + \etaratio \eta \hat{\E}\left[ (y - y')(x - x')^{\top}\right]
$\\
where $\hat{\E}$ is over independent uniform $y, y'$ from $\cbr{y_1, \dots, y_N}$,  and $x, x'$ are their decodings.  
\ENDFOR
\ENSURE $\bA = \bA^{(T)}$ 
\end{algorithmic}
\end{algorithm}
\end{minipage}
}
\usebox{\algopurification}

Our main algorithm is presented in Algorithm~\ref{alg:main_sim_noise}.
It keeps a working feature matrix and operates in iterations. In each iteration, it first compute the weights for a batch of $N$ examples (\emph{decoding}), and then uses the computed weights to update the feature matrix (\emph{updating}). 

The decoding is simply multiplying the example by the pseudo-inverse of the current feature matrix and then passing it through the rectified linear unit (ReLU) $\phi_\alpha$ with offset $\alpha$.  The pseudo-inverse with minimum infinity norm is used so as to maximize the robustness to noise (see the theorems). The ReLU function $\phi_\alpha$ operates element-wise on the input vector $v$, and for an element $v_i$, it is defined as
\begin{align}
   \phi_\alpha(v_i) & = \max\cbr{v_i - \alpha, 0}. \label{eqn:relu}
\end{align}
To get an intuition why the decoding makes sense, suppose the current feature matrix is the ground-truth. Then 
$
  \bA^{\dagger} y = \bA^{\dagger} \bAg x^* + \bA^{\dagger} \noise = x^* + \bA^{\dagger} \noise.
$
So we would like to use a small $\bA^{\dagger}$ and use threshold to remove the noise term.

In the encoding step, the algorithm move the feature matrix along the direction $\E\left[ (y - y')(x - x')^{\top}\right]$. To see intuitively why this is a good direction, note that when the decoding is perfect and there is no noise, $\E\left[ (y - y')(x - x')^{\top}\right] = \bA^*$, and thus it is moving towards the ground-truth.  
Without those ideal conditions, we need to choose a proper step size, which is tuned by the parameters $\eta$ and $r$.

\section{Results for a simplified case} \label{sec:result_simplified}

We will state and demonstrate our results and proof intuition in a simplified setting first, with assumptions (\textbf{A1}), (\textbf{A2'}), (\textbf{A3}), and (\textbf{N1}), where 
\begin{itemize}
\item[(\textbf{A2'})] $x^*_i$'s are independent, and $x^*_i = 1$ with probability $s/n$ and $0$ otherwise for a constant $s > 0$.
\end{itemize}
Furthermore, we will assume $\bN^{(0)} = 0$. 

Note this is a special case of our general assumptions, with $C_1 = c_2 = C_2 = s$ where $s$ is the parameter in (\textbf{A2'}). It is still an interesting setting: to the best of our knowledge there is no existing guarantee of alternating type algorithms for it. Moreover, we will present the general result in Section~\ref{sec:result_general} which will be easier to digest after we have presented this simplified setting. 

For notational convenience, let $(\bAg)^\dagger$ denote the matrix satisfying $(\bAg)^\dagger \bAg = \bI$. If there are multiple such matrices we let it denote the one with minimum $\|(\bAg)^\dagger\|_\infty$.

\begin{thm}[Simplified case, adversarial noise] \label{thm:main_sim_adv}
There exists an \emph{absolute constant} $\abscc$ such that if Assumptions (\textbf{A1}),(\textbf{A2}'),(\textbf{A3}) and (\textbf{N1}) are satisfied with $l = 1/10$, $\cnoise \le \frac{ \abscc c}{\max\cbr{\nword, \ntopic  \nbr{ \rbr{\bAg}^\dagger}_\infty} }$ for some $0 \le c\le 1$  and $\titime{\bN}{0} = 0$, then there is a choice of parameters $\alpha, \eta, r$ such that for every $0< \epsilon, \delta < 1$ and  $N  = \mathrm{poly}(\ntopic, \nword, 1/\epsilon, 1/\delta)$ the following holds with probability at least $1-\delta$:

After $T = O\rbr{ \ln \frac{1}{\epsilon}}$ iterations, Algorithm~\ref{alg:main_sim_noise} outputs a solution $\bA = \bAg (\bSigma + \bE) + \bN$ where $\bSigma \succeq (1-\ell) \bI$ is diagonal, $\nbr{\bE}_1 \le \epsilon+c$ is off-diagonal, and $\nbr{\bN}_1 \le c$. 
\end{thm}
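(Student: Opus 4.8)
The plan is to track, across iterations, a decomposition of the working matrix in the same form as the output, namely $\bA^{(t)} = \bAg(\bSigma^{(t)} + \bE^{(t)}) + \bN^{(t)}$ with $\bSigma^{(t)}$ diagonal, $\bE^{(t)}$ off-diagonal, and $\bN^{(t)}$ lying outside the column span of $\bAg$; in the simplified case we in fact maintain $\bN^{(t)} = 0$ throughout, since $\bN^{(0)}=0$ and one checks the update keeps us in $\mathrm{span}(\bAg)$ up to the noise contribution, whose column-norm we bound by $O(c)$. The first step is to understand one decoding step: writing $\bA^{(t)\dagger} y = (\bSigma^{(t)}+\bE^{(t)})^{-1} x^* + \bA^{(t)\dagger}\nu$, I would use (A3) (or its inductive analogue) to show $(\bSigma^{(t)}+\bE^{(t)})^{-1}$ is close to a nonnegative diagonal-dominant matrix, so that on coordinate $i$ the ``signal'' $x_i^*$ is recovered up to a multiplicative factor near $1$ plus a cross-talk term of size $O(\ell)\|x^*\|_\infty$ plus a noise term bounded using $\|\bA^{(t)\dagger}\|_\infty \le O(\|(\bAg)^\dagger\|_\infty)$ and $\lambda \le \abscc c/\max\{m, n\|(\bAg)^\dagger\|_\infty\}$. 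Choosing the ReLU offset $\alpha$ to be a small constant times $s/n$ (the typical scale of the signal), the thresholding zeroes out most of the cross-talk and noise while keeping the ``on'' coordinates essentially intact.

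The second step is to analyze the update $\bA^{(t+1)} = (1-\eta)\bA^{(t)} + r\eta\,\hat\E[(y-y')(x-x')^\top]$. I would first pass to the population version (deferring the sampling error to a routine concentration argument giving the $\mathrm{poly}(n,m,1/\epsilon,1/\delta)$ bound on $N$), and expand $\E[(y-y')(x-x')^\top] = 2\E[yx^\top] - 2\E[y]\E[x]^\top$, substituting $y = \bAg x^* + \nu$ and the decoding formula for $x$. The key computation is $\E[yx^\top]$: its ``diagonal-in-expectation'' part contributes $\bAg$ times a nonnegative diagonal matrix (from $\E[x_i^* x_i]$ terms), its off-diagonal part is controlled by independence of the $x_i^*$ (A2') together with the smallness of cross-talk from step one, and the noise contributes a term of column-norm $O(c)$ under (N1). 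With $r$ chosen to calibrate the diagonal scaling so that $\bSigma$ stays $\succeq (1-\ell)\bI$, this shows the update is a contraction on the off-diagonal error $\bE^{(t)}$: schematically $\|\bE^{(t+1)}\|_1 \le (1-\Omega(\eta))\|\bE^{(t)}\|_1 + O(\eta)(\|\bE^{(t)}\|_1^2 + c)$, which iterated $T = O(\ln(1/\epsilon))$ times drives $\|\bE^{(t)}\|_1$ down to $\epsilon + O(c)$ while $\bSigma^{(t)}$ never drifts below $(1-\ell)\bI$.

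The induction hypothesis to carry is therefore: $\bSigma^{(t)} \succeq (1-\ell)\bI$, $\|\bE^{(t)}\|_1 \le \ell$, and $\|\bN^{(t)}\|_1 \le O(c)$ — exactly the invariant that makes the decoding analysis of step one valid at iteration $t+1$ — with the additional monotone quantity $\|\bE^{(t)}\|_1$ shrinking geometrically toward its floor. The main obstacle, and where I expect the real work to lie, is step one under erroneous decoding: because the noise is adversarial and potentially as large as the signal, the support of $x$ is \emph{not} equal to the support of $x^*$ in any iteration, so I cannot argue ``the decoding is exactly correct.'' Instead I must show that the \emph{aggregate} effect of all the miscoded coordinates — both false positives (noise pushed above threshold) and the multiplicative distortion on true positives — contributes only $O(\ell)$-sized cross-talk to $\E[yx^\top]$ after taking expectations over $x^*$ and $\nu$. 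This is precisely the place the paper advertises its ``coupling between two potential functions'': I anticipate needing one potential that measures the diagonal strength / correlation with the ground truth and another measuring the off-diagonal and noise contamination, and showing that the bad decoding events can only transfer a controlled amount of mass from the first to the second per iteration, so that neither runs away. Getting the constants to close — in particular that the $O(\eta)\|\bE^{(t)}\|_1^2$ term never overwhelms the contraction given only $\|\bE^{(0)}\|_1 \le \ell = 1/10$ — is the delicate quantitative core.
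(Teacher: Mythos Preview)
Your overall scaffolding (track $\bA^{(t)}=\bAg(\bSigma^{(t)}+\bE^{(t)})+\bN^{(t)}$, analyze one decode/update, induct) matches the paper, but the heart of the argument --- the contraction step --- has a genuine gap. You propose to show a recurrence of the form
\[
\|\bE^{(t+1)}\|_1 \le (1-\Omega(\eta))\|\bE^{(t)}\|_1 + O(\eta)\bigl(\|\bE^{(t)}\|_1^2 + c\bigr),
\]
but this is precisely what \emph{fails}: to first order the update acts like $\bE \leftarrow (1-\eta)\bE - \eta\,\bE^\top$ (plus ReLU corrections), and since you have no control over the signs of $\bE$, the column norm $\|\bE\|_1$ can \emph{increase} after an update whenever a row of $\bE$ has negative entries that get transposed into a column. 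The paper says this explicitly in its intuition section.

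You correctly anticipate that two coupled potentials are needed, but you guess the wrong pair. It is not ``diagonal strength'' versus ``off-diagonal contamination'' --- the diagonal $\bSigma$ is handled separately and stays in range by a direct argument. The actual coupling is between the \emph{positive} and \emph{negative} parts of the off-diagonal matrix itself: set $a_t=\|\bE^{(t)}_+\|_s$ and $b_t=\|\bE^{(t)}_-\|_s$. The ReLU creates an asymmetry: when $\bE_{i,j}>0$ the corresponding entry of $(\bSigma+\bE)^{-1}x^*$ tends to be negative and is zeroed out, so $\bE_{j,i}$ picks up only a tiny negative contribution; when $\bE_{i,j}<0$ that entry tends to be positive, survives the threshold, and pushes $\bE_{j,i}$ positive by an amount comparable to $|\bE_{i,j}|$. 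This yields a system of the shape
\[
a_{t+1}\le(1-\Omega(\eta))a_t + O(\eta)\,b_t + \eta h,\qquad
b_{t+1}\le(1-\Omega(\eta))b_t + \varepsilon\eta\,a_t + \eta h
\]
with $\varepsilon\ll 1$, and one then finds a $\beta\in(1,8)$ so that $a_t+\beta b_t$ contracts. Without isolating $\bE_+$ from $\bE_-$ you cannot see the small $\varepsilon$ and the argument does not close.
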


\textbf{Remarks.} 
Consequently, when $\nbr{\bAg}_1 = 1$, we can do normalization $\bhA_i = \bA_i/ \nbr{\bA_i}_1$, and the normalized output $\bhA$ satisfies 
\[
  \|\bhA - \bAg \|_1 \le \epsilon + 2c.
\]
In particular, under mild conditions and with proper parameters, our algorithm recovers the ground-truth in a geometric rate. It can achieve arbitrary small recovery error in the noiseless setting, and achieve error up to the noise limit even with adversarial noise whose level is comparable to the signal. 

The condition on $\ell$ means that a constant warm start is sufficient for our algorithm to converge, which is much better than previous work such as~\cite{AwaRis15}: indeed, there $\ell$ depends on the dynamic range of the entries of $\bA^*$ which is problematic in practice. 

The result implies that with large adversarial noise, the algorithm can still recover the features up to the noise limit. When $m \ge n \|\rbr{\bAg}^\dagger\|_\infty$, each data point has adversarial noise with $\ell_1$ norm as large as $\| \nu \|_1 = \cnoise \nword = \Omega(c)$, which is in the same order as the signal $\|\bAg x^*\|_1 = O(1)$. Our algorithm still works in this regime. Furthermore, the \emph{final} error $\nbr{\bA - \bAg}_1$ is $O(c)$, in the same order as the \emph{adversarial} noise in \emph{one} data point. 

Note the appearance of $\|\rbr{\bAg}^\dagger\|_\infty$ is not surprising. The case when the columns are the canonical unit vectors for instance, which corresponds to $\|\rbr{\bAg}^\dagger\|_\infty = 1$, is expected to be easier than the case when the columns are nearly the same, which corresponds to large $\|\rbr{\bAg}^\dagger\|_\infty$.  

A similar theorem holds for the unbiased noise model.
\begin{thm}[Simplified case, unbiased noise] \label{thm:main_sim_unbiased}
If Assumptions (\textbf{A1}),(\textbf{A2}'),(\textbf{A3}) and (\textbf{N2}) are satisfied with $\cnoise =  \frac{\mathcal{G} c\sqrt{\ntopic}}{\max\cbr{ \nword, \ntopic \nbr{\rbr{\bAg}^\dagger}_\infty } }$, then the same guarantee as Theorem~\ref{thm:main_sim_adv} holds. 
 \end{thm}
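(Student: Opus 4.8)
The plan is to re-run the proof of Theorem~\ref{thm:main_sim_adv} essentially line by line, locate the (few) places where the bound $\max_i|\noise_i|\le\cnoise$ is invoked, and replace the worst-case noise estimates used there by sharper ones that exploit $\E[\noise\mid x^*]=0$. There are only two such places. The first is the decoding step: for the ReLU threshold to suppress the noise, $\alpha$ must dominate the per-example noise $\nbr{\bA^\dagger\noise}_\infty\le\cnoise\nbr{\bA^\dagger}_\infty$, and throughout the run $\nbr{\bA^\dagger}_\infty=\Theta(\nbr{(\bAg)^\dagger}_\infty)$ because $\bSigma^{(t)}+\bE^{(t)}$ stays close to $\bI$ and $\bN^{(t)}$ stays small (an invariant already maintained in the proof of Theorem~\ref{thm:main_sim_adv}). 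With the stated $\cnoise$ this bound is $O(c/\sqrt{\ntopic})$, so we set $\alpha$ of that order; one then checks that the decoding conclusions of Theorem~\ref{thm:main_sim_adv} still hold verbatim with this (now larger) $\alpha$ -- on-support decodings equal $1-\alpha$ up to lower-order terms, off-support decodings are thresholded to $0$ -- since $\alpha$ still dominates the off-support leakage coming from $\bE^{(t)},\bN^{(t)}$.

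The second, and crucial, place is the update $\bA^{(t+1)}=(1-\eta)\bA^{(t)}+\etaratio\eta\,\hat\E[(y-y')(x-x')^\top]$. Expanding $y=\bAg x^*+\noise$ and using that the two batch draws are independent, the expectation of the bracket equals $2\bAg\,\E[x^* x^\top]+2\,\E[\noise x^\top]-2\,\E[y]\,\E[x]^\top$ up to sampling error. Write the decoding as $x=g(x^*)+\xi$ with $g(x^*):=\phi_\alpha(\bA^\dagger\bAg x^*)$; since $\phi_\alpha$ is $1$-Lipschitz coordinate-wise, $|\xi_j|\le|(\bA^\dagger\noise)_j|\le\cnoise\nbr{\bA^\dagger}_\infty$ pointwise, regardless of how $\noise$ is correlated with $x^*$. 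Then: (i) $\E[\noise\,g(x^*)^\top]=\E[\E[\noise\mid x^*]\,g(x^*)^\top]=0$, so $\E[\noise x^\top]=\E[\noise\,\xi^\top]$ has entries of size $O(\cnoise^2\nbr{(\bAg)^\dagger}_\infty)$ and hence $\nbr{\E[\noise x^\top]}_1=O(\nword\,\cnoise^2\nbr{(\bAg)^\dagger}_\infty)$; (ii) $\E[\noise]=0$, so $\E[y]=\bAg\,\E[x^*]$ is noise-free; (iii) in $\E[x^* x^\top]=\E[x^* g(x^*)^\top]+\E[x^*\xi^\top]$ the last term has entries $\le\cnoise\nbr{\bA^\dagger}_\infty\,\E[x^*_i]=O(\cnoise\nbr{(\bAg)^\dagger}_\infty\,s/\ntopic)$, so $\nbr{\E[x^*\xi^\top]}_1=O(\cnoise\nbr{(\bAg)^\dagger}_\infty\,s)$. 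Each of these is a first- or second-order functional of $\cnoise$; plugging $\cnoise=\mathcal{G}c\sqrt{\ntopic}/\max\{\nword,\ntopic\nbr{(\bAg)^\dagger}_\infty\}$ and splitting on which branch of the max is active (using $\ntopic\nbr{(\bAg)^\dagger}_\infty\le\nword$ or $>\nword$) makes all of them $O(c)$ -- exactly the noise budget that appears in Theorem~\ref{thm:main_sim_adv}. The remaining, noise-free pieces $\bAg\,\E[x^* g(x^*)^\top]$ and $\bAg\,\E[x^*]\,\E[g(x^*)]^\top$ are treated precisely as in the adversarial proof.

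With these replacements the rest of the argument is unchanged. In particular, the two coupled potential functions on $(\bE^{(t)},\bN^{(t)})$ (or their $\nbr{\cdot}_1$-norms) and the recursion proving geometric contraction to $\nbr{\bE}_1\le\epsilon+c$ and $\nbr{\bN}_1\le c$ after $T=O(\ln(1/\epsilon))$ iterations are essentially those of Theorem~\ref{thm:main_sim_adv}; only the constant ``noise forcing'' terms entering the two coupled inequalities are recomputed, and by the above they are $O(c)$ as required. The sampling error from the $N$ examples is handled by the same standard concentration argument: every entry of $(y-y')(x-x')^\top$ is bounded (the decodings are $O(1)$ and $\cnoise=O(1)$), so $N=\mathrm{poly}(\ntopic,\nword,1/\epsilon,1/\delta)$ suffices. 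The main obstacle is the bookkeeping in the second step: one must check that the substitution $x=g(x^*)+\xi$ is carried through every occurrence of $x$ in the expansion of the update (including the centering terms and the component of the update orthogonal to $\mathrm{col}(\bAg)$, which feeds $\bN^{(t+1)}$), and that setting $\alpha=\Theta(c/\sqrt{\ntopic})$ and tolerating the larger $\cnoise$ does not break the invariants ($\bSigma^{(t)}\succeq(1-\ell)\bI$, $\bE^{(t)},\bN^{(t)}$ small, $\nbr{\bA^\dagger}_\infty=\Theta(\nbr{(\bAg)^\dagger}_\infty)$) on which the coupling argument of Theorem~\ref{thm:main_sim_adv} relies.
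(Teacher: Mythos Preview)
Your high-level plan---reuse the adversarial proof and replace only the estimate on the noise contribution $\btN$ to the update---is exactly what the paper does, and your decomposition $x=g(x^*)+\xi$ with $\E[\noise\,g(x^*)^\top]=0$ is morally the same as the case analysis in the paper's Lemma~\ref{lem:bound_error}(2).

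The gap is in your bound on $\E[\noise\,\xi^\top]$. You invoke only the pointwise Lipschitz estimate $|\xi_j|\le|(\bA^\dagger\noise)_j|\le\cnoise\nbr{\bA^\dagger}_\infty$ and conclude $|\E[\noise_i\xi_j]|=O(\cnoise^2\nbr{(\bAg)^\dagger}_\infty)$. This throws away a factor of order $s/(\ntopic\alpha)$: both $x_j$ and $g(x^*)_j$ are ReLU outputs, so $\xi_j=0$ whenever the pre-activation $[\bA^\dagger\bAg x^*]_j$ stays below $\alpha-\cnoise\nbr{\bA^\dagger}_\infty$, and by Markov that fails only with probability $O\!\left(\frac{\xexpc}{\ntopic\alpha}\right)$. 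The paper's Lemma~\ref{lem:bound_error}(2) keeps precisely this factor, obtaining $|\btN_{i,j}|\lesssim \cnoise\rho'\cdot\frac{\xexpc}{\ntopic(\alpha-\rho')}$ with $\rho'=\nbr{\bA^\dagger\noise}_\infty$. The missing factor matters because the update scales $\btN$ by $\etaratio=\ntopic/s$: with your crude bound one ends up with $\nbr{\bN}_1=O(\ntopic c^2/s)$ rather than $O(c)$, and the induction does not close for constant $s$ and $c$ near~$1$. Relatedly, there is no need to shrink $\alpha$ to $\Theta(c/\sqrt{\ntopic})$; the paper keeps $\alpha$ a fixed constant (of order $\xsl/\xexpc$), which already dominates $\rho'=O(c/\sqrt{\ntopic})$, and making $\alpha$ smaller would only worsen every $1/(\alpha-\rho)$ factor in the $\btE,\btSigma,\btN$ bounds.
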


\textbf{Remarks.}
With unbiased noise which is commonly assumed in many applications, the algorithm can tolerate noise level $\sqrt{n}$ larger than the adversarial case. 
When $m \ge n \|\rbr{\bAg}^\dagger\|_\infty$, each data point has noise with $\ell_1$ norm as large as $\| \nu \|_1 = \cnoise \nword = \Omega(c\sqrt{n})$, which can be $\Omega(\sqrt{n})$ times larger than the signal $\|\bAg x^*\|_1 = O(1)$. The algorithm can recover the ground-truth in this heavy noise regime. Furthermore, the \emph{final} error $\nbr{\bA - \bAg}_1$ is $O\rbr{\| \nu \|_1 / \sqrt{n} }$, which is only $O(1/\sqrt{n})$ fraction of the noise in \emph{one} data point. This is a strong denoising effect and a bit counter-intuitive. It is possible since we exploit averaging of the noise for cancellation, as well as thresholding to remove noise spread out in the coordinates.

\subsection{Analysis: intuition}

A natural approach typically employed to analyze algorithms for non-convex problems is to define a function on the intermediate solution $\bA$ and the ground-truth $\bAg$ measuring their distance and then show that the function decreases at each step. However, a single potential function will not be enough in our case, as we argue below, so we introduce a novel framework of maintaining two potential functions which capture different aspects of the intermediate solutions.

Let us denote the intermediate solution and the update as (omitting the superscript $(t)$)
\begin{equation} \bA  = \bAg (\bSigma + \bE) + \bN, \quad  \hat{\E}[(y - y')(x - x')^{\top}]  = \bAg(\btSigma + \btE) + \btN,  \label{eqn:intuitionupdate02} \end{equation}
where $\bSigma$ and $\btSigma$ are diagonal, $\bE$ and $\btE$ are off-diagonal, and $\bN$ and $\btN$ are the terms outside the span of $\bAg$ which is caused by the noise. To cleanly illustrate the intuition behind ReLU and the coupled potential functions, we focus on the noiseless case and assume that we have infinite samples. 

Since $\bA_i = \bSigma_{i,i} \bAg_i + \sum_{j \neq i}\bE_{j,i} \bAg_j$,  if the ratio between $\nbr{\bE_i}_1  = \sum_{j \neq i} \abr{\bE_{j,i}}$  and  $\bSigma_{i,i}$ gets smaller, then the algorithm is making progress; if the ratio is large at the end, a normalization of $\bA_i$ gives a good approximation of $\bAg_i$. 
So it suffices to show that $\bSigma_{i,i}$ is always about a constant while $\nbr{\bE_i}_1$ decreases at each iteration. 
We will focus on $\bE$ and consider the update rule in more detail to argue this. After some calculation, we have
\begin{align} 
  & \bE  \leftarrow (1- \eta)  \bE +  r \eta \btE, & \btE = \E [ \rbr{x^* - (x')^*} \rbr{x - x'}^{\top} ], \label{eqn:intuition:update2}
\end{align}
where $x, x'$ are the decoding for $x^*, (x')^*$ respectively:
\begin{align} \label{eqn:intuition:decode}
  & x = \phi_\alpha\rbr{ (\bSigma + \bE)^{-1} x^* }, 
	& x' = \phi_\alpha\rbr{ (\bSigma + \bE)^{-1} (x')^* }.
\end{align}

To see why the ReLU function matters, consider the case when we do not use it. 
\begin{align*} 
 \btE & =  \E (x^*-(x')^*)\left[\bA^{\dagger}\bA^*(x^*-(x')^*)\right]^\top = \E \left[(x^*-(x')^*) (x^*-(x')^*)^\top\right] \left[(\bSigma + \bE)^{-1}\right]^\top 
\\
& \propto \left[(\bSigma + \bE)^{-1}\right]^\top \approx \bSigma^{-1} - \bSigma^{-1}\bE\bSigma^{-1}.
\end{align*}
where we used Taylor expansion and the fact that $\E \left[(x^*-(x')^*) (x^*-(x')^*)^\top\right]$ is a scaling of identity.
Hence, if we think of $\bSigma$ as approximately $\bI$ and take an appropriate $r$, the update to the matrix $\bE$ is approximately 
$\bE \leftarrow \bE - \eta \bE^{\top}.$ 
Since we do not have control over the signs of $\bE$ throughout the iterations, the problematic case is when the entries of $\bE^{\top}$ and $\bE$ roughly match in signs, which would lead to the entries of $\bE$ increasing. 

Now we consider the decoding to see why the ReLU is helpful. Ignoring the higher order terms and regarding $\bSigma = \bI$, we have
\begin{align} \label{eqn:intuition:inverse}
  x & = \phi_\alpha\rbr{ (\bSigma + \bE)^{-1} x^* } \approx \phi_\alpha\rbr{ \bSigma^{-1} x^* - \bSigma^{-1}\bE\bSigma^{-1} x^*} \approx \phi_\alpha\rbr{ x^* - \bE x^*}.
\end{align}
The problematic term is $\bE x^*$. These errors when summed up will be comparable or even larger than the signal, and the algorithm will fail. However, since the signal coordinates are non-negative and most coordinates with errors only have small values, the hope is that thresholding with ReLU can remove those errors while keeping a large fraction of the signal coordinates. This leads to large $\btSigma_{i,i}$ and small $\btE_{j,i}$'s, and then we can choose an $r$ such that $\bE_{j,i}$'s keep decreasing while $\bSigma_{i,i}$'s stay in a certain range. 

To quantify the intuition above, we need to divide $\bE$ into its positive part $\bE_+$ and its negative part $\bE_-$:
\begin{align}
 & \sbr{\bE_+}_{i,j} = \max\cbr{\bE_{i,j}, 0}, 
& \sbr{\bE_-}_{i,j} = \max\cbr{-\bE_{i,j}, 0}.
\end{align}
The reason to do so is the following: when $\bE_{i,j}$ is negative, by the Taylor expansion approximation, $\sbr{(\bSigma + \bE)^{-1}x^*}_i$ will tend to be more positive and will not be thresholded to 0 by the ReLU most of the time. Therefore, $\bE_{j, i}$ will become more positive at next iteration. On the other hand, when $\bE_{i,j}$ is positive, $\sbr{(\bSigma + \bE)^{-1}x^*}_i$ will tend to be more negative and zeroed out by the ReLU function. Therefore, $\bE_{j, i}$ will \emph{not} be more negative at next iteration. Informally, we will show for positive and negative parts of $\bE$:
$$
\text{postive}^{(t + 1)} \leftarrow (1 - \eta) \text{positive}^{(t)} + (\eta) \text{negative}^{(t)},\text{negative}^{(t + 1)} \leftarrow (1 - \eta) \text{negative}^{(t)} + (\varepsilon \eta) \text{positive}^{(t)}
$$
for a small $\varepsilon \ll 1$. Due to the appearance of $\varepsilon$ in the above updates, we can ``couple'' the two parts, namely show that a weighted average of them will decrease, which implies that $\|\bE\|_s$ is small at the end. This leads to our coupled potential function.\footnote{Note that since intuitively, $\bE_{i,j}$ gets affected by $\bE_{j,i}$ after an update, if we have a row which contains negative entries, it is possible that $\|\bA_i - \bA^*_i\|_1$ increases. So we cannot simply  use $\max_i \|\bA_i - \bA^*_i\|_1$ as a potential function.}

\subsection{Analysis: proof sketch}

We now provide a proof sketch for the simplified case presented above. The complete proof of the results for the general case (which is stated in the next section) is presented in the appendix. The lemmas here are direct corollaries of those in the appendix.

\textbf{One iteration.}
We focus on one update and omit the superscript $(t)$. 
Recall the definitions of $\bE$, $\bSigma$, $\bN$ and $\btE$, $\btSigma$ and $\btN$ from \eqref{eqn:intuitionupdate02}. Our goal is to derive lower and upper bounds for $\btE$, $\btSigma$ and $\btN$, assuming that $\bSigma_{i,i}$ falls into some range around 1, while $\bE$ and $\bN$ are small. 
This will allow us to do induction on $t$. 

First, begin with the decoding. A simple calculation shows that the decoding for $y = \bA^* x^* + \noise$ is
\begin{align}
x & = \phi_{\alpha}\rbr{ \forder x^* + \xi }, \text{~~where~}
  \forder  = \left( \bSigma + \bE \right)^{-1}, ~
	\xi = - \bA^\dagger \bN \forder x^* +  \bA^\dagger \noise.
\end{align}

Now, we can present our key lemmas bounding $\btE$, $\btSigma$, and $\btN$. Before doing this, we add that the particular value for $r$ we will choose is $r = \frac{n}{s}$ (recalling $s$ is the sparsity of $x^*$ according to Assumption (\textbf{A2'})). We also set the threshold of the ReLU as $\rho < \alpha \ll \frac{s}{n}$. Then, we get:   

\begin{lem}[Simplified bound on $\btE$, informal] \label{lem:update_sim_E}
(1) if $\forder_{i, j} < 0$, then 
$
  \left| \btE_{j, i} \right|  \leq o\rbr{ \frac{s}{n} \left(|\forder_{i,j}| + \rho \right)},
$
\\
(2) if $\forder_{i, j} \ge 0$, then 
$
   -O\left(\left(\frac{s}{n}\right)^2  \forder_{i,j} + \rho \forder_{i,j} \right)
  \leq 
  \btE_{j, i}  
	\leq  
	O\rbr{(\frac{s}{n} + \rho) | \forder_{i, j} | }.
$
\end{lem}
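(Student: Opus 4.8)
\textbf{Proof proposal for Lemma~\ref{lem:update_sim_E}.}

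The plan is to compute $\btE_{j,i} = \E[(x^*_i - (x')^*_i)(x_j - x'_j)]$ directly from the decoding formula, exploiting the independence of the coordinates of $x^*$ and the sparsity from (\textbf{A2}'). Write the (noiseless, infinite-sample) decoding as $x = \phi_\alpha(\forder x^*)$ with $\forder = (\bSigma+\bE)^{-1}$, and similarly for $x'$ with an independent copy $(x')^*$. Since $x^*_i, (x')^*_i \in \{0,1\}$, I would condition on the four possible values of the pair $(x^*_i,(x')^*_i)$; the factor $(x^*_i - (x')^*_i)$ kills the two diagonal cases, so $\btE_{j,i}$ is a difference of two conditional expectations of $x_j$, one where $x^*_i = 1$ and one where $x^*_i=0$ (times $s/n$ up to the symmetric contribution from the $(0,1)$ case). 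The point is that switching $x^*_i$ from $0$ to $1$ perturbs the $j$-th decoded coordinate by exactly $\forder_{j,i}$ inside the ReLU — wait, I need $\forder_{i,j}$ according to the statement, so I should be careful with the transpose: the relevant entry is $[\forder x^*]_j$ as a function of $x^*_i$, whose coefficient is $\forder_{j,i}$, but the lemma is phrased in terms of $\forder_{i,j}$; I would reconcile this by noting the update uses $(x-x')^\top$ so $\btE_{j,i}$ picks up the $(i,j)$ pattern of $\forder$ after the transpose, i.e. the sign condition is on $\forder_{i,j}$ and the perturbation is $\forder_{j,i}$, which has the same sign in the regime $\bSigma\approx\bI$, $\bE$ small (to leading order $\forder_{i,j}\approx -\bE_{i,j}$ and $\forder_{j,i}\approx -\bE_{j,i}$, but the coupling argument in the intuition section only needs the sign correlation). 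So I would carry the argument with the perturbation $\forder_{j,i}$ and translate at the end.

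The key steps, in order: (i) expand $\btE_{j,i}$ into conditional expectations and reduce to analyzing $\E[\phi_\alpha([\forder x^*]_j)\mid x^*_i]$ for the two values of $x^*_i$; (ii) split $[\forder x^*]_j = \forder_{j,i}x^*_i + \sum_{k\neq i}\forder_{j,k}x^*_k =: \forder_{j,i}x^*_i + Z_j$ where $Z_j$ is independent of $x^*_i$; (iii) bound the "typical size" of $Z_j$ — here is where $\forder\approx\bI + (\text{small off-diagonal})$ and sparsity enter: $Z_j$ is dominated by $\forder_{j,j}x^*_j\approx x^*_j$ which is nonzero only with probability $s/n$, and the remaining off-diagonal contributions are tiny because $\|\bE\|$ is small and only $\sim s$ of the $x^*_k$ are nonzero; (iv) case analysis on the sign of $\forder_{j,i}$ (equivalently $\forder_{i,j}$): if $\forder_{j,i}<0$, then raising $x^*_i$ from $0$ to $1$ pushes $[\forder x^*]_j$ down by $|\forder_{j,i}|$, and since $[\forder x^*]_j$ is already below the threshold $\alpha$ most of the time (it's essentially $x^*_j$, which is $0$ w.p. $1-s/n$, plus noise of size $\ll \alpha$), the ReLU output barely changes — this gives the $o(\frac{s}{n}(|\forder_{i,j}|+\rho))$ bound, the extra $\rho$ absorbing the rare events where $x^*_j=1$ or small-probability tail terms; if $\forder_{j,i}\ge 0$, raising $x^*_i$ can only push $[\forder x^*]_j$ up, so $x_j$ weakly increases with $x^*_i$, forcing $\btE_{j,i}$ to be essentially nonnegative up to a lower-order $O((s/n)^2\forder_{j,i} + \rho\forder_{j,i})$ slack, and the upper bound $O((s/n+\rho)|\forder_{j,i}|)$ follows because $x_j$ changes by at most $|\forder_{j,i}|$ and this only matters on the event $\{x^*_i \text{ or } x^*_j \text{ active}\}$ of probability $O(s/n)$ plus threshold-boundary events of probability $O(\rho/\alpha)\cdot(\ldots)$; (v) collect the $s/n$ prefactors from the probability that $x^*_i = 1$ and translate $\forder_{j,i}$-statements into $\forder_{i,j}$-statements.

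The main obstacle I anticipate is step (iv) done carefully: controlling the probability that the ReLU threshold is actually "crossed" by the $\forder_{j,i}$-perturbation, i.e. the event $\{[\forder x^*]_j - \forder_{j,i} < \alpha \le [\forder x^*]_j\}$ or its reverse. This requires an anti-concentration / small-ball statement for $Z_j = \sum_{k\neq i}\forder_{j,k}x^*_k$ near the level $\alpha$, which is delicate because $Z_j$ is a sum of a few heavy terms (the $\forder_{j,j}x^*_j\approx x^*_j$ term in particular is atomic at $0$ and $1$), not a nicely spread-out sum. The saving grace — and the reason the parameter choice $\rho < \alpha \ll s/n$ is made — is that we only need an \emph{upper} bound on this crossing probability, and we can afford to lose factors of $s/n$ and $\rho/\alpha$; so rather than true anti-concentration I would use a crude union bound over which $x^*_k$ are active together with $\alpha \ll s/n$ to ensure the "signal" term $x^*_j = 1$ dominates the threshold decision, making the contribution of the perturbation negligible except on a set of probability $O(s/n)$. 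Handling the higher-order terms in the Taylor expansion of $(\bSigma+\bE)^{-1}$ uniformly (to justify treating $\forder$ as $\bI$ plus a controlled perturbation throughout) is a second, more routine, bookkeeping burden that the appendix version presumably discharges.
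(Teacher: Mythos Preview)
Your proposal has a genuine error at the very first step: the formula for $\btE_{j,i}$. From the definition $\btSigma + \btE = \E\big[(x^* - (x')^*)(x - x')^\top\big]$, the $(j,i)$ entry is
\[
  \btE_{j,i} \;=\; \E\big[(x_j^* - (x'_j)^*)(x_i - x'_i)\big],
\]
not $\E[(x_i^* - (x'_i)^*)(x_j - x'_j)]$ as you wrote. With the correct indices you condition on $x_j^*$ (not $x_i^*$) and study the $i$-th decoded coordinate $x_i$ (not $x_j$). The relevant perturbation coefficient is then $[\forder x^*]_i$ as a function of $x_j^*$, whose coefficient is exactly $\forder_{i,j}$. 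This is why the lemma's hypothesis is on the sign of $\forder_{i,j}$: there is no transpose mismatch to reconcile. Your attempt to patch things by arguing that $\forder_{i,j}$ and $\forder_{j,i}$ ``have the same sign'' is both unnecessary and false in general (to leading order $\forder_{i,j} \approx -\bSigma_{i,i}^{-1}\bE_{i,j}\bSigma_{j,j}^{-1}$, and there is no reason $\bE_{i,j}$ and $\bE_{j,i}$ share a sign).

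Once the indices are fixed, the paper's proof differs from your outline in one structural way worth noting. Rather than conditioning on the four values of $(x_j^*,(x'_j)^*)$, the paper uses a \emph{coupling}: define $\tilde{x}^*$ to agree with $x^*$ everywhere except at coordinate $j$, where it takes the value $(x'_j)^*$. By independence, $\tilde{x}^*$ has the same law as $(x')^*$, so $\btE_{j,i} = 2\E[x_j^*(x_i - \tilde{x}_i)]$. Now $x_i$ and $\tilde{x}_i$ share the common random part $w = \forder_{i,i}x_i^* + \sum_{l\neq i,j}\forder_{i,l}x_l^*$ and differ only through $\forder_{i,j}x_j^*$ versus $\forder_{i,j}(x'_j)^*$ (plus the $\xi$ terms). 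This makes the case analysis clean: when $\forder_{i,j}<0$, both $x_i$ and $\tilde{x}_i$ vanish whenever $w < \alpha - \rho$, and one bounds $\Pr[w \ge \alpha - \rho]$ by Markov, $\Pr[w\ge\alpha-\rho]\le \E|w|/(\alpha-\rho)\le \tfrac{s}{n}\|\forder^i\|_1/(\alpha-\rho)$. No anti-concentration or small-ball estimate is needed; the ``threshold-crossing probability'' you worried about is replaced by this one-sided Markov bound on $w$ alone. In the $\forder_{i,j}\ge 0$ case the paper introduces an auxiliary $\tilde{u}_i = \phi_\alpha(w + \forder_{i,j}x_j^* + \tilde{\xi}_i)$ to separate the contribution of $\forder_{i,j}(x_j^*-(x'_j)^*)$ (which is monotone and gives the $2\E[(x_j^*)^2]\forder_{i,j}$ term) from the $\xi$-difference (which is again controlled via Markov on the event that the ReLU is active).
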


Note that $\forder \approx \bSigma^{-1} - \bSigma^{-1}\bE\bSigma^{-1}$, so $\forder_{i, j}<0$ corresponds roughly to $\bE_{i,j} > 0$. In this case, keeping in mind that $r = \frac{n}{s}$, the upper bound on $|\btE_{j, i}|$ is small enough to ensure $|\bE_{j, i}|$ decreases, as described in the intuition.

On the other hand, when $\forder_{i, j}\ge 0$ (roughly $\bE_{i,j} < 0$), the upper bound on $\btE_{j,i}$ is large enough that $r \btE_{j,i}$ can be on the same order as $\bE_{i,j}$, 
corresponding to the intuition that negative $\bE_{i,j} $ can contribute a large positive value to $\bE_{j,i}$. Fortunately, the lower bound on $\btE_{j,i}$ is of much smaller absolute value, which allows us to show that a potential function that couples Case (1) and Case (2) in Lemma~\ref{lem:update_sim_E} actually decreases; see the induction below.
 
\begin{lem}[Simplified bound on $\btSigma$, informal] \label{lem:update_sim_diag}
$ \btSigma_{i, i } \ge  \Omega((\bSigma_{i,i}^{-1} - \alpha)/n)$.
%
\end{lem}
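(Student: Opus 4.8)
\emph{Proof plan.} The plan is to identify $\btSigma_{i,i}$ with a positive multiple of the covariance between the true weight $x_i^*$ and its decoding $x_i$, reduce that covariance to a gap between two conditional expectations of $x_i$, and lower bound the gap using the one–sided bound $\phi_\alpha(u)\ge u-\alpha$ together with the fact that the diagonal gain $\forder_{i,i}$ is a constant close to $\bSigma_{i,i}^{-1}$ while every cross term is tiny.

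\emph{Step 1: reduce to a covariance.} As in the intuition section, work in the noiseless, infinite–sample regime; the true noise term $\xi$ and the finite–sample error only contribute lower–order quantities (see the obstacle below). Then $\hat\E[(y-y')(x-x')^\top]=\bAg\,\E[(x^*-(x')^*)(x-x')^\top]$, hence $\btSigma_{i,i}=\E[(x_i^*-(x_i')^*)(x_i-x_i')]$. Since the two draws are i.i.d. and $x,x'$ are deterministic functions of $x^*,(x')^*$, expanding the product and cancelling the four cross terms gives $\btSigma_{i,i}=2\rbr{\E[x_i^*x_i]-\E[x_i^*]\E[x_i]}$. By (\textbf{A2}'), $x_i^*\in\cbr{0,1}$ with $\Pr[x_i^*=1]=s/n$, so conditioning on the value of $x_i^*$ turns this into
\[
  \btSigma_{i,i}=2\,\tfrac{s}{n}\rbr{1-\tfrac{s}{n}}\rbr{\E[x_i\mid x_i^*=1]-\E[x_i\mid x_i^*=0]}.
\]
As $s$ is an absolute constant and $1-s/n\ge\tfrac12$, it suffices to show $\E[x_i\mid x_i^*=1]-\E[x_i\mid x_i^*=0]=\Omega(\bSigma_{i,i}^{-1}-\alpha)$.

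\emph{Step 2: lower bound the conditional gap.} Write $x_i=\phi_\alpha\rbr{\forder_{i,i}x_i^*+Z_i}$, with $Z_i:=\sum_{j\neq i}\forder_{i,j}x_j^*+\xi_i$ collecting cross–talk and decoding noise. By the independence in (\textbf{A2}') the conditional law of $\cbr{x_j^*:j\neq i}$ does not depend on $x_i^*$, and $\abr{\xi_i}=\abr{[\bA^\dagger\noise]_i}$ is bounded almost surely irrespective of $x_i^*$. Using $\phi_\alpha(u)\ge u-\alpha$ on the branch $x_i^*=1$ and $\phi_\alpha(u)\le\abr{u}$ on the branch $x_i^*=0$, and then the triangle inequality,
\[
  \E[x_i\mid x_i^*=1]-\E[x_i\mid x_i^*=0]\ \ge\ \forder_{i,i}-\alpha-\E\sbr{\abr{Z_i}\mid x_i^*=1}-\E\sbr{\abr{Z_i}\mid x_i^*=0}.
\]

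\emph{Step 3: the error terms are negligible.} From the Neumann expansion $\forder=\bSigma^{-1}-\bSigma^{-1}\bE\bSigma^{-1}+\cdots$ (valid since $\bE$ is small by the inductive hypothesis) and the inductive bounds $1-\ell\le\bSigma_{i,i}\le O(1)$ and $\sym{\bE}$ small, we get $\forder_{i,i}\ge\bSigma_{i,i}^{-1}-O(\sym{\bE}^2)$ and $\sum_{j\neq i}\abr{\forder_{i,j}}=O(\nbr{\bE}_\infty)$; hence, by independence, $\E\sbr{\abr{\sum_{j\neq i}\forder_{i,j}x_j^*}\mid x_i^*=b}\le\tfrac{s}{n}\sum_{j\neq i}\abr{\forder_{i,j}}=O(\nbr{\bE}_\infty/n)$. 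For the decoding noise, $\abr{\xi_i}\le\nbr{\bA^\dagger}_\infty\cnoise=O\rbr{\nbr{(\bAg)^\dagger}_\infty}\cnoise=O(c/n)$, using that with $\titime{\bN}{0}=0$ one has $\nbr{\bA^\dagger}_\infty=O\rbr{\nbr{(\bAg)^\dagger}_\infty}$ and the bound on $\cnoise$ in the hypothesis. Therefore the gap is at least $\bSigma_{i,i}^{-1}-\alpha-o(1)$; since $\bSigma_{i,i}\le O(1)$ and $\alpha\ll s/n$ we have $\bSigma_{i,i}^{-1}-\alpha=\Omega(1)$, so the $o(1)$ errors are at most half of it and the gap is $\ge\tfrac12\rbr{\bSigma_{i,i}^{-1}-\alpha}$. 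Plugging into Step 1 gives $\btSigma_{i,i}\ge\Omega\rbr{(\bSigma_{i,i}^{-1}-\alpha)/n}$.

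\emph{Where the difficulty lies.} Steps 2--3 are mechanical once the inductive control on $\bSigma$ and $\bE$ is available, so the substance is elsewhere. The delicate ingredients are the estimate $\nbr{\bA^\dagger}_\infty=O\rbr{\nbr{(\bAg)^\dagger}_\infty}$ and the exact scaling of $\cnoise$, which together are what make the minimum–$\infty$–norm pseudo-inverse plus ReLU decoding robust. A further point I glossed over: $\noise$ need not be orthogonal to $\mathrm{span}(\bAg)$, so the in–span part of $\hat\E[(y-y')(x-x')^\top]$ picks up a small noise–correlated contribution to $\btSigma$, and one must show this contribution is $O(\cnoise\cdot\mathrm{poly})$ and hence lower order (again using the choice of pseudo-inverse and the entry-wise bound (\textbf{N1})). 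Finally, replacing $\hat\E$ by $\E$ up to the claimed accuracy needs the standard $N=\mathrm{poly}(n,m,1/\epsilon,1/\delta)$ concentration argument.
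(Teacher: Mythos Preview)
Your proposal is correct and follows essentially the same approach as the paper: the identity $\btSigma_{i,i}=2\bigl(\E[x_i^*x_i]-\E[x_i^*]\E[x_i]\bigr)$, the one-sided ReLU bounds $\phi_\alpha(u)\ge u-\alpha$ and $\phi_\alpha(u)\le|u|$, and then control of the off-diagonal contribution $\sum_{j\ne i}\forder_{i,j}x_j^*$ and the noise term $\xi_i$ via the inductive hypotheses on $\bSigma,\bE$ and the choice of $\cnoise$. Your conditioning on the Bernoulli value of $x_i^*$ is simply the natural specialization of the paper's general argument (its Lemma~\ref{lem:update_diag}) to assumption (\textbf{A2}'); the paper instead bounds $\E[x_i^*x_i]$ and $\E[x_i]$ separately, which is what is needed for non-Bernoulli $x_i^*$ but collapses to your computation here.
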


\begin{lem}[Simplified bound on $\btN$, adversarial noise, informal] \label{lem:bound_error}
$\abr{ \btN_{i,j} }\le O(\cnoise/n)$. 
%
%
\end{lem}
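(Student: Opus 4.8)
The plan is to trace exactly where $\btN$ enters the update, reduce it to a population expectation, and then exploit that the noise of one sample is independent of the decoded weights of a fresh sample. Writing $y-y' = \bAg\rbr{x^*-(x')^*} + \rbr{\noise-\noise'}$, the update direction splits as
\[
  \hat{\E}\sbr{(y-y')(x-x')^\top} = \bAg\,\hat{\E}\sbr{(x^*-(x')^*)(x-x')^\top} \;+\; \hat{\E}\sbr{(\noise-\noise')(x-x')^\top},
\]
and the first summand, carrying $\bAg$ as a left factor, lies in $\mathrm{col}(\bAg)$ and hence contributes nothing to $\btN$. Thus $\btN$ equals the component of $M := \hat{\E}\sbr{(\noise-\noise')(x-x')^\top}$ lying outside $\mathrm{col}(\bAg)$. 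Since $\noise$ and $x$ (post-ReLU) are bounded, a standard Bernstein/Hoeffding argument over the $N=\mathrm{poly}(\cdot)$ samples replaces $\hat{\E}$ by $\E$ up to negligible error, so it suffices to bound $\abr{\E\sbr{(\noise-\noise')_i(x-x')_j}}$ and then check that projecting off $\mathrm{col}(\bAg)$ does not inflate it.

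For the population quantity I would use the independence structure: $\noise$ may be correlated with $x^*$ of its own sample, but $x'$ is a function of an independent sample, so $\noise_i \perp x'_j$, and symmetrically $\noise'_i \perp x_j$. Expanding the product and using that the two samples are i.i.d., the four cross terms collapse to $\E\sbr{(\noise-\noise')_i(x-x')_j} = 2\rbr{\E[\noise_i x_j] - \E[\noise_i]\E[x_j]}$, which (using $\abr{\noise_i}\le\cnoise$ a.s. and $x_j\ge 0$) has absolute value at most $4\cnoise\,\E\abr{x_j}$. So the whole lemma reduces to the estimate $\E\abr{x_j}=O(1/\ntopic)$.

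To obtain $\E\abr{x_j}=O(1/\ntopic)$ I would use the decoding identity $x=\phi_\alpha(\forder x^*+\xi)$ with $\forder=(\bSigma+\bE)^{-1}$, $\xi=-\bA^\dagger\bN\forder x^*+\bA^\dagger\noise$, together with the elementary $0\le\phi_\alpha(v)\le\abr{v}$, so that $\E\abr{x_j}\le\sum_k\abr{\forder_{j,k}}\E[x^*_k] + \E\abr{\xi_j}$. The induction hypotheses ($\bSigma\succeq(1-\ell)\bI$ with $\bSigma_{i,i}=O(1)$, $\sym{\bE}$ bounded away from $1$) give via a Neumann series $\nbr{\forder}_\infty=O(1)$, so the first term is $\le\nbr{\forder}_\infty\max_k\E[x^*_k]=O(1/\ntopic)$ by (\textbf{A2'}). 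For $\E\abr{\xi_j}$: a perturbation bound for the minimum-$\ell_\infty$ pseudo-inverse gives $\nbr{\bA^\dagger}_\infty=O\rbr{\nbr{(\bAg)^\dagger}_\infty}$, so the direct-noise term satisfies $\E\abr{[\bA^\dagger\noise]_j}\le\cnoise\nbr{(\bAg)^\dagger}_\infty=O(1/\ntopic)$ precisely by the hypothesis $\cnoise\le\abscc c/(\ntopic\nbr{(\bAg)^\dagger}_\infty)$ with $c\le1$; the $\bN$-feedback term $\bA^\dagger\bN\forder x^*$ is controlled using that the columns of $\bN$ sit outside $\mathrm{col}(\bAg)$ while $\mathrm{col}(\bA)\approx\mathrm{col}(\bAg)$, which makes $\bA^\dagger\bN$ small — morally second order in $\bN$ — so it stays below the target even though $\sym{\bN}$ is only $O(c)$. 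Combining, $\E\abr{x_j}=O(1/\ntopic)$, hence $\abr{\btN_{i,j}}=O(\cnoise/\ntopic)$.

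The step I expect to be the main obstacle is making all of this consistent with the induction and with the subsequent use of $\btN$ as a matrix. Three points deserve care: (i) $\nbr{\bA^\dagger}_\infty=O(\nbr{(\bAg)^\dagger}_\infty)$ is a genuine — and slightly delicate — perturbation statement about the (non-smooth) minimum-$\ell_\infty$-norm pseudo-inverse, not a black box; (ii) controlling the $\bN$-feedback term forces one to carry, as an induction hypothesis, not merely ``$\sym{\bN}$ small'' but the finer fact that $\bN$ is (near-)orthogonal to $\mathrm{col}(\bAg)$, and to check that the update $(1-\eta)\bN+\etaratio\eta\,\btN$ with $\etaratio=\ntopic/s$ preserves this; and (iii) passing from the entrywise bound $\abr{\btN_{i,j}}=O(\cnoise/\ntopic)$ to a usable bound on $\btN$ in the $\nbr{\cdot}_1/\nbr{\cdot}_\infty$ norms goes through the projection onto $\mathrm{col}(\bAg)^\perp$, which does not commute with those norms — this is exactly where the $\nword$ factor in the hypothesis on $\cnoise$ is consumed (it keeps $\nbr{\btN}_1=O(\nword\cnoise/\ntopic)=O(c/\ntopic)$, hence $\sym{\bN}=O(c)$ after the update). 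The $\hat{\E}\to\E$ concentration is routine but should be applied bearing in mind that $x$ is a ReLU — Lipschitz, not smooth — function of the sample.
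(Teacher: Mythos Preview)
Your reduction to $\abr{\btN_{i,j}}\le 4\cnoise\,\E[x_j]$ is correct and matches the paper.  The gap is in how you bound $\E[x_j]$.  Using $\phi_\alpha(v)\le |v|$ gives
\[
  \E[x_j]\le \sum_k |\forder_{j,k}|\,\E[x^*_k]+\E|\xi_j|\;=\;O(1/\ntopic)+\rho,
\]
so you need $\rho=O(1/\ntopic)$.  But once $\bN\neq 0$ (i.e.\ from the second iteration on) the induction only gives $\nbr{\bN}_\infty=O\bigl(1/\nbr{(\bAg)^\dagger}_\infty\bigr)$, hence $|\xi_j|\le O(\nbr{(\bAg)^\dagger}_\infty)\cdot\nbr{\bN}_\infty=O(1)$, not $O(1/\ntopic)$.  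Your attempt to rescue this via ``$\bA^\dagger\bN$ is morally second order because $\bN$ sits outside $\mathrm{col}(\bAg)$'' does not go through here: the algorithm uses the minimum-$\ell_\infty$ pseudoinverse, which does not annihilate $\mathrm{col}(\bAg)^\perp$, and moreover $\bN$ in the paper is \emph{defined} recursively as $\btN=\E[(\noise-\noise')(x-x')^\top]$ (plus sampling error), not as an orthogonal projection, so there is no reason for $(\bAg)^\dagger\bN$ or $\bA^\dagger\bN$ to be small.

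What the paper does instead is exploit the threshold: since $|\xi_j|\le\rho<\alpha$, the event $\{x_j>0\}$ forces $|[\forder x^*]_j|\ge\alpha-\rho$, and Markov gives
\[
  \Pr\bigl[x_j>0\bigr]\le\frac{\E|[\forder x^*]_j|}{\alpha-\rho}\le\frac{C_1\,\nbr{\forder}_\infty}{\ntopic(\alpha-\rho)}=O\!\left(\frac{1}{\ntopic}\right).
\]
On that event $x_j\le |[\forder x^*]_j|+\rho-\alpha\le O(1)$, so $\E[x_j]=O(1/\ntopic)$ regardless of the size of $\rho$ (as long as $\rho<\alpha$).  This is the missing idea: the ReLU threshold, not smallness of $\xi$, is what buys the $1/\ntopic$ factor.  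Your bound $\phi_\alpha(v)\le|v|$ discards exactly this information.  (As a side remark, your concern (iii) about a projection inflating the entrywise bound is a non-issue: $\btN$ \emph{is} the raw noise-times-decoding term, no projection is taken.)
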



\textbf{Induction by iterations.}
We now show how to use the three lemmas to prove the theorem for the adversarial noise. The proof for the unbiased noise statement is similar. 

Let $a_t := \sym{ \pt{\bE} } $ and $b_t := \sym{ \nt{\bE} } $, and choose $\eta = \ell/6$.
We begin with proving the following three claims by induction on $t$: at the beginning of iteration $t$, 
\begin{itemize}
\item[(1)] $(1 - \ell)\bI \preceq \titime{\bSigma}{t} $ 
\item[(2)]  $\sym{\titime{\bE}{t}} \le  1/8$, and if $t > 0$, then
$
  a_{t} + \beta b_{t} \le  \rbr{1-\frac{1}{25}\eta }  \rbr{ a_{t-1} + \beta b_{t-1}} + \eta h,
$
for some $\beta \in (1, 8)$, and some small value $h$,
\item[(3)] $\sym{\titime{\bN}{t}} \le c/10$. 
\end{itemize}
The most interesting part is the second claim. At a high level, by Lemma~\ref{lem:update_sim_E}, we can show that
\begin{align*}
& a_{t+1} \le  \rbr{1-\frac{3}{25}\eta  } a_{t}  +  7\eta b_t + \eta h, 
& b_{t+1}  \le  \rbr{1- \frac{24}{25}\eta  } b_{t}  +  \frac{1}{100}\eta a_t + \eta h. 
\end{align*}
Notice that the contribution of $b_t$ to $a_{t+1}$ is quite large (due to the larger upper bound in Case (2) in Lemma~\ref{lem:update_sim_E}), but the other contributions are all small.  This allows to choose a $\beta \in (1,8)$ so that 
$a_{t+1} + \beta b_{t+1}$ leads to the desired recurrence in the second claim. In other words, $a_{t+1} + \beta b_{t+1}$ is our potential function which decreases at each iteration up to the level $h$.  The other claims can also be proved by the corresponding lemmas.
Then the theorem follows from the induction claims.


\section{More general results} \label{sec:result_general}

\paragraph{More general weight distributions.} Our argument holds under more general assumptions on $x^*$. 

\begin{restatable}[Adversarial noise]{thm}{purificationadv}  \label{thm:main_correlated_noise} 
There exists an absolute constant $\abscc$ such that if Assumption (\textbf{A0})-(\textbf{A3}) and (\textbf{N1}) are satisfied with $l = 1/10$, $\xsu \le 2 \xsl$, $\xexpc^3 \le \abscc \xsl^2 n$, 
$\cnoise   \le \cbr{ \frac{c_2^2 \abscc c }{ C_1^2 m}, \frac{c_2^4 \abscc c }{ C_1^5\ntopic  \nbr{ \rbr{\bAg}^\dagger}_\infty} } $
  for $0 \le c\le 1$,  and $\nbr{\titime{\bN}{0}}_{\infty} \le \frac{c_2^2 \abscc c}{ C_1^3 \nbr{(\bAg)^\dagger}_\infty}$, then there is a choice of parameters $\alpha, \eta, r$ such that for every $0< \epsilon, \delta < 1$ and  $N  = \mathrm{poly}(\ntopic, \nword, 1/\epsilon, 1/\delta)$, with probability at least $1-\delta$ the following holds:

After $T = O\rbr{ \ln \frac{1}{\epsilon}}$ iterations, Algorithm~\ref{alg:main_sim_noise} outputs a solution $\bA = \bAg (\bSigma + \bE) + \bN$ where $\bSigma \succeq (1-\ell) \bI$ is diagonal, $\nbr{\bE}_1 \le \epsilon+c/2$ is off-diagonal, and $\nbr{\bN}_1 \le c/2$. 
\end{restatable}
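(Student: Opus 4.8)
The plan is to run the same two–potential–function argument used for the simplified case (Theorem~\ref{thm:main_sim_adv}), promoting every ingredient to the general moment assumptions in (\textbf{A2}) and carrying along the extra error introduced by a nonzero initialization component $\titime{\bN}{0}$. Fix one iteration and write the iterate and the update as in \eqref{eqn:intuitionupdate02}, $\bA=\bAg(\bSigma+\bE)+\bN$ and $\hat{\E}[(y-y')(x-x')^\top]=\bAg(\btSigma+\btE)+\btN$. Exactly as before, the decoding of $y=\bAg x^*+\noise$ is $x=\phi_\alpha(\forder x^*+\xi)$ with $\forder=(\bSigma+\bE)^{-1}$ and $\xi=-\bA^\dagger\bN\forder x^*+\bA^\dagger\noise$. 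The first step is to show that, as long as $\sym{\bE}$ and $\nbr{\bN}_\infty$ are small relative to the smallest singular value of $\bSigma$, the minimum-$\ell_\infty$ pseudo-inverse satisfies $\nbr{\bA^\dagger}_\infty=O(\nbr{(\bAg)^\dagger}_\infty)$, and hence $\nbr{\xi}_\infty$ is bounded by a quantity controlled by $\cnoise$, $\nword$, $\ntopic$, $\nbr{(\bAg)^\dagger}_\infty$ and $\nbr{\bN}_\infty$; the two terms in the stated bound on $\cnoise$, together with the bound on $\nbr{\titime{\bN}{0}}_\infty$, are precisely calibrated so that $\nbr{\xi}_\infty$ stays much smaller than the typical size $\Theta(\xsl/(\xexpc\ntopic))$ of an ``active'' signal coordinate, which is what makes ReLU thresholding effective.

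Next I would establish the general per-iteration bounds that replace Lemmas~\ref{lem:update_sim_E}--\ref{lem:bound_error}. One computes $\btE=\E[(x^*-(x')^*)(x-x')^\top]$ coordinate-by-coordinate using the independence of the $x^*_i$ and the moment estimates $\E[x^*_i]\le\xexpc/\ntopic$ and $\xsl/\ntopic\le\E[(x^*_i)^2]\le\xsu/\ntopic$; the sign-split is unchanged in spirit: when $\forder_{i,j}<0$ (morally $\bE_{i,j}>0$) the $i$-th decoded coordinate is pushed up and rarely thresholded, so $|\btE_{j,i}|$ is small, whereas when $\forder_{i,j}\ge0$ most of the error is killed by the threshold, so the one-sided contribution of a negative $\bE_{i,j}$ to $\btE_{j,i}$ has an upper bound that can be comparable to $\bE_{i,j}$ but a lower bound of much smaller absolute value. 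The hypothesis $\xsu\le2\xsl$ (so the second moment is known up to a constant factor and one scaling $r=\Theta(\ntopic/\xsl)$ works for all coordinates) and $\xexpc^3\le\abscc\xsl^2\ntopic$ (to dominate the third-moment-type terms in the moment expansions of the decoding and of $\btE$) are exactly what is needed so that, after multiplying by $r$, the ``good'' direction contracts by a constant factor while the ``bad'' coupling coefficient is $\le\varepsilon$ for a small absolute constant $\varepsilon$. The companion bounds --- the analogue of Lemma~\ref{lem:update_sim_diag} lower-bounding $\btSigma_{i,i}$, with a matching upper bound of the same order (which, after scaling by $r$, keeps $\titime{\bSigma}{t}_{i,i}$ in a constant range around $1$), and $|\btN_{i,j}|=O(\cnoise/\ntopic)$ plus a strictly contracting term in $\bN$ --- are obtained the same way.

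The induction over iterations then proceeds as in the simplified proof. With $a_t:=\sym{\pt{\titime{\bE}{t}}}$, $b_t:=\sym{\nt{\titime{\bE}{t}}}$, $\eta=\ell/6$ and $r=\Theta(\ntopic/\xsl)$, the per-iteration bounds give coupled recurrences $a_{t+1}\le(1-\Omega(\eta))a_t+O(\eta)\,b_t+\eta h$ and $b_{t+1}\le(1-\Omega(\eta))b_t+\varepsilon\eta\,a_t+\eta h$, where $h=O(c)$ is the noise floor coming from $\nbr{\xi}_\infty$ and $\btN$ (this is where the calibration of $\cnoise$ and $\nbr{\titime{\bN}{0}}_\infty$ enters). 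Choosing $\beta\in(1,8)$ appropriately, $\Phi_t:=a_t+\beta b_t$ obeys $\Phi_{t+1}\le(1-\frac{1}{25}\eta)\Phi_t+O(\eta h)$, so $\sym{\titime{\bE}{t}}$ decreases geometrically down to $O(c)$; simultaneously the invariants $(1-\ell)\bI\preceq\titime{\bSigma}{t}\preceq O(1)\bI$ and $\sym{\titime{\bN}{t}}\le c/10$ are maintained from the $\btSigma$ and $\btN$ bounds and the choice of $r,\eta$. After $T=O(\ln(1/\epsilon))$ iterations this yields $\nbr{\titime{\bE}{T}}_1\le\epsilon+c/2$ and $\nbr{\titime{\bN}{T}}_1\le c/2$, which is the theorem; the finite-sample error is absorbed into $\epsilon$ by standard concentration of $\hat{\E}$ around its expectation, which needs $N=\mathrm{poly}(\ntopic,\nword,1/\epsilon,1/\delta)$ samples per iteration.

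The hard part is the general decoding analysis. For $\{0,1\}$-valued $x^*$ it is transparent which coordinates are ``on'', but with a merely moment-controlled continuous $x^*$ one must argue, using only the first two moments and independence, that a constant fraction of the signal mass on coordinate $i$ survives the threshold $\alpha$ while the aggregate error $[\bE x^*]_i+\xi_i$ stays below $\alpha$ with high enough probability, \emph{and} make the resulting asymmetry between the upper and lower bounds on $\btE_{j,i}$ quantitatively strong enough that the coupling constant $\varepsilon$ is an absolute constant independent of $\ntopic$. Tuning $\alpha,\eta,r$ against $\xexpc,\xsl,\xsu$ so that all three induction invariants close at once is where essentially all the work lies; the rest is bookkeeping and the concentration argument.
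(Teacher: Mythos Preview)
Your proposal is essentially the paper's own proof: the same parameter choices $\alpha=\Theta(\xsl/\xexpc)$, $r=\ntopic/\xsl$, $\eta=\ell/6$; the same sign-split lemma on $\forder_{i,j}$ (the paper's Lemma~\ref{lem:update_E}); the companion bounds on $\btSigma$, $\btN$, $\gorder$, and $\nbr{\bA^\dagger}_\infty$ (Lemmas~\ref{lem:update_diag}, \ref{lem:bound_error}, \ref{lem:higher_order}, \ref{lem:pseudo_inverse}); and the same three-claim induction driving the coupled potential $a_t+\beta b_t$ with $\beta\in(1,8)$.

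One correction to your stated mechanism in the sign-split: when $\forder_{i,j}<0$, the term $\forder_{i,j}x_j^*\le 0$ pushes the $i$-th decoded coordinate \emph{down}, so thresholding happens \emph{more} often, not less; the reason $|\btE_{j,i}|$ is small in this case is that $x_i-\tilde x_i$ can be nonzero only on the rare event that the remainder $w=\forder_{i,i}x_i^*+\sum_{l\ne i,j}\forder_{i,l}x_l^*$ already exceeds $\alpha-\rho$, and Markov on $\E|w|$ bounds that probability. Conversely, when $\forder_{i,j}\ge 0$ the coordinate is pushed \emph{up} and survives thresholding more often, which is exactly why the upper bound on $\btE_{j,i}$ picks up the full $2\E[(x_j^*)^2]\forder_{i,j}$ term. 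Your conclusions about which case yields small $|\btE_{j,i}|$ and which yields the asymmetric bounds are correct --- only the verbal explanation is inverted --- so the proof structure goes through unchanged.
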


\begin{restatable}[Unbiased noise]{thm}{purificationunbiased}  \label{thm:main_unbiased_noise}
If Assumption (\textbf{A0})-(\textbf{A3}) and (\textbf{N2}) are satisfied with 
$\cnoise =  \frac{c_2 \mathcal{G} \sqrt{c \ntopic}}{C_1 \max\cbr{ \nword, \ntopic \nbr{\rbr{\bAg}^\dagger}_\infty }  }$ 
and the other parameters set as in Theorem~\ref{thm:main_correlated_noise}, then the same guarantee holds.
\end{restatable}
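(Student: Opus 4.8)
The plan is to run, essentially verbatim, the two-potential-function argument behind the adversarial statement (Theorem~\ref{thm:main_correlated_noise}): keep the same choice of $\alpha,\eta,r$ and re-derive only the one-iteration estimates that involve the noise. Fix an iteration, drop the superscript $(t)$, write $\bA=\bAg(\bSigma+\bE)+\bN$, take $\bA^{\dagger}$ to be the minimum-$\infty$-norm pseudo-inverse, and recall that decoding $y=\bAg x^{*}+\noise$ gives $x=\phi_{\alpha}(\forder x^{*}+\xi)$ with $\forder=(\bSigma+\bE)^{-1}$ and $\xi=-\bA^{\dagger}\bN\forder x^{*}+\bA^{\dagger}\noise$; split the update direction as $\hat{\E}[(y-y')(x-x')^{\top}]=\bAg(\btSigma+\btE)+\btN$, the first term lying in the column span of $\bAg$ and $\btN$ outside it. The general analogues of Lemmas~\ref{lem:update_sim_E} and~\ref{lem:update_sim_diag} for $\btE$ and $\btSigma$ still go through: the only new wrinkle relative to the adversarial proof is that in this heavier-noise regime $\bA^{\dagger}\noise$ need no longer sit below the ReLU offset coordinate-wise, but it enters the span-of-$\bAg$ part of the update only through products with the near-sparse $x^{*}$, so after taking expectations it is suppressed by the typical overlap $O(\xexpc/\ntopic)$ and stays harmless. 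Hence the only place (N2) genuinely buys something is a sharpening of the bound on $\btN$, the general analogue of Lemma~\ref{lem:bound_error}.

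For $\btN$ I would pass to the conditional expectation of the update given the current $\bA$, which depends only on earlier batches and so is independent of the current $\noise$. By independence of the two samples and $\E[\noise\mid x^{*}]=0$ we get $\E[(\noise-\noise')(x-x')^{\top}\mid\bA]=2\,\E[\noise\,x^{\top}\mid\bA]$. Write the ReLU argument as $v+w$ with $v:=\forder x^{*}-\bA^{\dagger}\bN\forder x^{*}$, which is deterministic given $(x^{*},\bA)$, and $w:=\bA^{\dagger}\noise$; then, coordinate by coordinate, the decoding depends on $\noise$ only through the single scalar $w_{j}$, so conditioning further on $x^{*}$ and using $\E[\noise_{k}\mid x^{*},\bA]=0$ gives $\E[\noise_{k}x_{j}\mid x^{*},\bA]=\mathrm{Cov}\big(\noise_{k},\phi_{\alpha}(v_{j}+w_{j})\mid x^{*},\bA\big)$. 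Since $\phi_{\alpha}$ is $1$-Lipschitz and $v_{j}$ is fixed given $x^{*}$, this covariance is at most $\sqrt{\mathrm{Var}(\noise_{k})\,\mathrm{Var}(w_{j})}=O(\|\bA^{\dagger}\|_{\infty}\cnoise^{2})$ --- a factor $\cnoise$ smaller than the $O(\cnoise)$ one gets adversarially, precisely because the minimum-$\infty$-norm pseudo-inverse spreads the noise out --- and averaging over $x^{*}$ keeps the same bound. Projecting outside $\mathrm{span}(\bAg)$ and folding in the small in-span feedback of $\xi$ then yields $|\btN_{i,j}|=O(\cnoise^{2})$ up to factors of $\|\bA^{\dagger}\|_{\infty}$ and the projection norm, which is exactly the slack that lets $\cnoise$ be taken roughly $\sqrt{\ntopic}$ times larger for the same target error. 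The residual sampling error $\|\btN-\E[\btN\mid\bA]\|$ is $\mathrm{poly}(\ntopic,\nword,1/\epsilon,1/\delta)/\sqrt{N}$ by Bernstein's inequality --- the entries of $(\noise-\noise')(x-x')^{\top}$ are bounded because $\|\noise\|_{\infty}\le\cnoise$ and $\|x\|_{\infty}=O(1)$ --- and is absorbed by the choice of $N$.

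With the three one-iteration estimates in place, the induction over $t$ is the same as in Section~\ref{sec:result_simplified}: with $a_{t}=\sym{\pt{\titime{\bE}{t}}}$, $b_{t}=\sym{\nt{\titime{\bE}{t}}}$, $\eta=\ell/6$, and $r$ as prescribed, one shows by induction that $(1-\ell)\bI\preceq\titime{\bSigma}{t}$, that $\sym{\titime{\bE}{t}}\le 1/8$ together with the coupled recurrence $a_{t}+\beta b_{t}\le(1-\tfrac{1}{25}\eta)(a_{t-1}+\beta b_{t-1})+\eta h$ for a suitable $\beta\in(1,8)$, and that $\sym{\titime{\bN}{t}}\le c/2$; the only change from the adversarial proof is that the additive constant $h$ now carries the smaller unbiased bound on $\btN$, so the geometric recurrence settles at the $O(c)$ level claimed. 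Running $T=O(\ln\tfrac{1}{\epsilon})$ iterations and reading off the column-norm bounds gives $\nbr{\bE}_{1}\le\epsilon+c/2$ and $\nbr{\bN}_{1}\le c/2$, which is the assertion. I expect the main obstacle to be precisely the $\btN$ estimate: one must check that the ReLU nonlinearity does not smuggle a genuinely linear-in-$\noise$ term back in through the coordinates where $v_{j}+w_{j}$ straddles the offset $\alpha$, and that both this cancellation and the concentration hold \emph{uniformly} over all intermediate matrices $\titime{\bA}{t}$ --- which, unlike in incoherence-based analyses, we cannot get by assuming the supports of $x^{*}$ are decoded without error.
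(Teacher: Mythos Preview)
Your overall skeleton is right --- run the adversarial proof verbatim and replace only the one-iteration bound on $\btN$ --- and that is exactly what the paper does. But the substitute you propose for the $\btN$ bound has a real gap, and it stems from a mistaken premise.

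First, you assert that in the unbiased regime ``$\bA^{\dagger}\noise$ need no longer sit below the ReLU offset coordinate-wise.'' Under the stated $\cnoise$ this is false: $\rho':=\|\bA^{\dagger}\noise\|_{\infty}\le 2\|(\bAg)^{\dagger}\|_{\infty}\cnoise = O\!\big(\tfrac{c_{2}}{C_{1}}\sqrt{c/n}\big)$, which is below $\alpha=c_{2}/(80C_{1})$ once $n$ is larger than a constant. The paper \emph{uses} $\rho'<\alpha$; it is the hinge of Lemma~\ref{lem:bound_error}(2).

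Second, and more importantly, your Cauchy--Schwarz/Lipschitz step gives, conditionally on $x^{*}$,
\[
\big|\E[\noise_{k}x_{j}\mid x^{*}]\big|\;\le\;\sqrt{\mathrm{Var}(\noise_{k})\,\mathrm{Var}(w_{j})}\;=\;O\!\big(\|\bA^{\dagger}\|_{\infty}\,\cnoise^{2}\big)\;\approx\;O(\cnoise\rho'),
\]
and then you say ``averaging over $x^{*}$ keeps the same bound.'' That is where you lose the theorem. The paper's argument instead conditions on $x^{*}$ and splits into two cases using $\rho'<\alpha$: if the signal part $v_{j}$ is below $\alpha-\rho'$ then $x_{j}=0$ deterministically and the covariance vanishes; if $v_{j}\ge \alpha-\rho'$ the ReLU is affine up to an error $\le\rho'$, and unbiasedness kills the affine part, leaving $|\E[\noise_{k}x_{j}\mid x^{*}]|\le \cnoise\rho'$. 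Now averaging over $x^{*}$ and applying Markov to $\Pr[v_{j}\ge\alpha-\rho']$ buys an extra factor $\tfrac{C_{1}}{n(\alpha-\rho')}$, so that $|\btN_{i,j}|=O\!\big(\tfrac{C_{1}\cnoise^{2}\|(\bAg)^{\dagger}\|_{\infty}}{n\alpha}\big)$. After the $r=n/c_{2}$ rescaling in the update, this $1/n$ is exactly what cancels the $n$ in $\|\bN\|_{\infty}\le n\max_{i,j}|\bN_{i,j}|$. With your bound that factor is missing, and the induction step $\|\bN^{(t+1)}\|_{\infty}\le \tfrac{1}{8\|(\bAg)^{\dagger}\|_{\infty}}$ (and hence $\|\xi\|_{\infty}<\alpha$ for the $\btE,\btSigma$ lemmas) fails by a factor growing with $n$; you would only recover the adversarial-level $\cnoise$, not the $\sqrt{n}$-larger one claimed.

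So the fix is precisely the case analysis you were worried about in your last paragraph: use $\rho'<\alpha$ to show the ReLU is either identically zero or exactly affine up to $\pm\rho'$, then invoke $\E[\noise\mid x^{*}]=0$ on the affine part and Markov on the activation event. Your Lipschitz covariance bound is correct but too coarse by the crucial $O(1/n)$.
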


The conditions on $\xexpc, \xsl, \xsu$ intuitively mean that each feature needs to appear with reasonable probability. $\xsu \le 2\xsl$ means that their proportions are reasonably balanced. This may be a mild restriction for some applications -- however, we additionally propose a pre-processing step that can relax this in the following subsection.  

The conditions allow a rather general family of distributions, so we point out an important special case to provide a more concrete sense of the parameters. For example, for the uniform independent distribution considered in the simplified case, we can actually allow $s$ to be much larger than a constant; our algorithm just requires $s \le \abscc n$ for a fixed constant $\abscc$. So it works for uniform sparse distributions even when the sparsity is linear, which is an order of magnitude larger than what can be achieved in the dictionary learning regime. 
Furthermore, the distributions of $x^*_i$ can be very different, since we only require $C_1^3 = O(c_2^2 n)$. 
Moreover, all these can be handled without specific structural assumptions on $\bAg$. 

%

\paragraph{More general proportions.} A mild restriction in Theorem~\ref{thm:main_correlated_noise} and~\ref{thm:main_unbiased_noise} is that $\xsu \le 2 \xsl$, that is, $\max_{i \in [n]} \E[(x^*_i)^2] \le 2 \min_{i \in [n]}\E[(x^*_i)^2]$. To relax this, we propose a pre-processing algorithm for balancing $\E[(x^*_i)^2]$. 

The idea is quite natural: instead of solving $\bY \approx \bAg \bX$, we could also solve $\bY \approx [ \bAg \bD] [ (\bD)^{-1} \bX]$ for a positive diagonal matrix $\bD$, where ${\E[(x_i^*)^2]}/{\bD_{i, i}^2}$ is with in a factor of $2$ from each other. We show in the appendix that this can be done under assumptions as the above theorems, and additionally $\bSigma \preceq (1+\ell) \bI$ and $\bE^{(0)} \ge 0$ entry-wise. After balancing, one can use Algorithm~\ref{alg:main_sim_noise} on the new ground-truth matrix $[ \bAg \bD]$ to get the final result.

\section{Conclusion} \label{sec:conclusion}

A simple and natural algorithm that alternates between decoding and updating is proposed for non-negative matrix factorization and theoretical guarantees are provided. The algorithm provably recovers a feature matrix close to the ground-truth and is robust to noise. Our analysis provides insights on the effect of the ReLU units in the presence of the non-negativity constraints, and the resulting interesting dynamics of the convergence. 


\section*{Acknowledgements} 
This work was supported in part by NSF grants CCF-1527371, DMS-1317308, Simons Investigator Award, Simons Collaboration Grant, and ONR-N00014-16-1-2329. 

\bibliographystyle{alpha}
\bibliography{nonnegative_main}

\newpage
\appendix

\section{Preliminary} \label{app:preliminary}

Given a matrix $\bY \in \Real^{\nword \times N}$, the goal of non-negative matrix factorization (NMF) is to find a matrix $\bA \in \Real^{\nword \times \ntopic}$ and a non-negative matrix $\bX \in \Real^{\ntopic \times N}$, so that $\bY \approx \bA \bX$. 
The columns of $\bY$ are called data points, those of $\bA$ are features, and those of $\bX$ are weights. 

The notation $[\bM]_j$ denotes the $j$-th column of $\bM$, $[\bM]^i$ denotes the $i$-th row of $\bM$, and $\bM_{i,j}$ denotes the element of $\bM$ at the $i$-th row and $j$-th column. 
Furthermore, let $\bM_+ = $ denote the positive part of the matrix, and let $\bM_-$ denote the absolute value of the negative part of the matrix:
\begin{align*}
[\bM_+]_{i,j} & = 
\begin{cases}
\bM_{i,j} & \textnormal{~if~} \bM_{i,j} \ge 0, \\
0 & \textnormal{~if~} \bM_{i,j} < 0,
\end{cases} 
\\
[\bM_-]_{i,j} & = 
\begin{cases}
0 & \textnormal{~if~} \bM_{i,j} \ge 0, \\
\abr{\bM_{i,j}} & \textnormal{~if~} \bM_{i,j} < 0.
\end{cases}
\end{align*} 

For analysis, the following norms of the matrices are needed. 

\begin{defn}[$l_1$ norm of a matrix (induced column norm)]
The (induced) $l_1$ norm of a matrix $\bE \in \Real^{n \times n}$ is
\[
  \| \bE\|_1 = \max_{i \in [n]}\left\{ \sum_{j = 1}^n  |\bE_{j, i} | \right\}.
\]
\end{defn}

\begin{defn}[$l_{\infty}$ norm of a matrix (induced row norm)]
The (induced) $l_{\infty}$ norm of a matrix $\bE \in \Real^{n \times n}$ is
\[
  \| \bE\|_{\infty} = \max_{i \in [n]}\left\{ \sum_{j = 1}^n| \bE_{i, j}| \right\}.
\]
\end{defn}

These two norms are related, and they enjoy the sub-multipicity property of the induced norm.  

\begin{property}[dual norm]
For a matrix $\bE \in \Real^{n \times n}$, 
\[
  \| \bE \|_1 = \| \bE^{\top} \|_{\infty}.
\]
\end{property}

Note that unlike $l_2$ norm, it is possible that $\|\bE\|_1 \not= \|\bE^{\top} \|_1$ or  $\|\bE\|_{\infty} \not= \|\bE^{\top} \|_{\infty}$.

\begin{property}[induced norm of a matrix]
Let $\bE_1, \bE_2 \in \Real^{n \times n}$ be two matrices, then 
\begin{align*}
  \| \bE_1 \bE_2 \|_1 & \le \|\bE_1\|_1 \|\bE_2 \|_1, \\ 
	\| \bE_1 \bE_2 \|_{\infty} & \le \|\bE_1\|_{\infty}\|\bE_2 \|_{\infty}.
\end{align*}
\end{property}

The following two kinds of norms are also useful for the analysis.

\begin{defn}[symmetrized norm of a matrix]
The symmetrized norm of a matrix $\bE \in \Real^{n \times n}$ is
\[
  \sym{\bE} = \max(\|\bE\|_1,\|\bE\|_{\infty}).
\]
\end{defn}

Note that $\sym{\bE}$ is a norm since it's the maximum of two norms.

\begin{defn}[max norm]
The max norm of a matrix $\bE \in \Real^{m \times n}$ is 
\begin{eqnarray*}
\nbr{\bE}_{\max} = \max_{i,j}  \abr{ \bE_{i,j} }.
\end{eqnarray*}
\end{defn}

For the function $\phi_\alpha$ used in our decoding algorithm, we frequently use the following properties in the analysis.
\begin{property}[ReLU] \label{prop:relu}
$\phi_\alpha(z) = \max\rbr{0, z - \alpha}$ is non-decreasing. It is 1-Lipschitz, i.e.,
\begin{align}
  \abr{\phi_\alpha(z_1) - \phi_\alpha(z_2)} & \le \abr{z_1 - z_2}.
\end{align}
It satisfies
\begin{align}
  \phi_\alpha(z) & \ge z - \alpha, \\
  \phi_\alpha(z) & \le \abr{z - \alpha}.
\end{align}
Furthermore,  if $\alpha > 0 $, 
\begin{align}
  \phi_\alpha(z) & \le \abr{z}.
\end{align}
\end{property}

\section{Proofs for main algorithm: Purification} \label{app:proof_purification}

Since NMF is NP-hard in the worst case, some assumptions are needed to make it tractable. 
In this paper, we consider a generative model for the data point $y = \bAg x^* + \noise$, where $\bAg$ is the ground-truth feature matrix, $x^*$ is the ground-truth non-negative weight from some unknown distribution, and $\nu$ is the noise. Our focus is to recover $\bAg$ given access to the data distribution, assuming the following hold for parameters $\xexpc, \xsl, \xsu, \ell, \cnoise$ that will be determined in our theorems.

\begin{itemize}
\item[(\textbf{A1})] The columns of $\bA^*$ are linearly independent.
\item[(\textbf{A2})] For all $i \in [\ntopic]$, $x_i^* \in [0, 1]$, $\E[x^*_i] \le \frac{\xexpc}{\ntopic} $ and $\frac{\xsl}{\ntopic} \le \E[(x^*_i)^2] \le \frac{\xsu}{\ntopic}$, and $x^*_i$'s are independent. 
\item[(\textbf{A3})] The initialization $\titime{\bA}{0} = \bA^* ( \titime{\bSigma}{0} + \titime{\bE}{0}) + \bN^{(0)}$, where $\titime{\bSigma}{0}$ is diagonal, $\titime{\bE}{0}$ is off-diagonal, and
\[
  \titime{\bSigma}{0} \succeq (1 -\ell) \bI, \quad \sym{\titime{\bE}{0}} \le \ell. 
\] 
\end{itemize}

We consider two noise models.
\begin{itemize}
\item[(\textbf{N1})]  Adversarial noise: only assume that $\max_i |\noise_i| \le \cnoise$ almost surely.
\item[(\textbf{N2})] Unbiased noise: $\max_i |\noise_i| \le \cnoise$ almost surely, and $\E[\noise | x^*] = 0$.
\end{itemize}

\usebox{\algopurification}

Our main algorithm is presented in Algorithm~\ref{alg:main_sim_noise}.  
It keeps a working feature matrix and operates in iterations. 
In each iteration, it first compute the weights for $N$ examples (decoding), and then use the computed weights to update the feature matrix (updating). 

The decoding is simply multiplying the example by the pseudo-inverse of the current feature matrix and then passing it through a one-sided threshold function $\phi_\alpha$.  
The pseudo-inverse with minimum infinity norm is used so as to maximize the robustness to noise (see the theorems). 
The one-sided threshold function operates element-wisely on the input vector $v$, and for an element $v_i$, it is defined as
\begin{align*}
   \phi_\alpha(v_i) & = \max\cbr{v_i - \alpha, 0}.
\end{align*}
This is just the rectified linear unit (ReLU) with offset $\alpha$. 
To get some sense about the decoding, suppose the current feature matrix is the ground-truth. Then 
$
  \bA^{\dagger} y = \bA^{\dagger} \bAg x^* + \bA^{\dagger} \noise = x^* + \bA^{\dagger} \noise.
$
So we would like to use a small $\bA^{\dagger}$ and use threshold to remove the noise term.

In the encoding step, the algorithm move the feature matrix along the direction $\E\left[ (y - y')(x - x')^{\top}\right]$. Suppose we have independent $x^*_i$'s, perfect decoding and no noise, then $\E\left[ (y - y')(x - x')^{\top}\right] = \bA^*$, and thus it is moving towards the ground-truth.  
Without those ideal conditions, we need to choose a proper step size, which is tune by the parameters $\eta$ and $r$. 

At the end, the algorithm simply outputs the scaled features with unit norm.  The output enjoys the following guarantee in the adversarial noise model.

\subsection{Analysis of one update step}
In this subsection, we focus on one update step, bounding the changes of $\bSigma, \bE, \bN$ and some auxiliary variables, and then in the next subsection we put things together to prove the theorem.
So through out this subsection we will focus on a particular iteration $t$ and omit the superscript $(t)$,  while in the next subsection we will put back the superscript.

For analysis, denote $\bA^{(t)}$ as
\begin{align*}
    \bA & = \bAg(\bSigma + \bE) + \bN
\end{align*}
where $\bSigma$ is a diagonal matrix, $\bE$ is an off-diagonal matrix, and $\bN$ is the component of $\bA$ that lies outside the span of $\bAg$ (e.g., the noise caused by the noise in the sample). 

Recall the following notations:
\begin{align*}
  \forder & = \left( \bSigma + \bE \right)^{-1},
	\\
  \gorder & = \forder - \bSigma^{-1} = \bSigma^{-1} \sum_{k = 1}^{\infty}( - \bE \bSigma^{-1})^k,
	\\
	\xi & = - \bA^\dagger \bN \forder x^* +  \bA^\dagger \noise.
\end{align*}

Consider the update term $\hat{\E} \left[(y - y')(x - x')^{\top}\right]$ and denote it as
\[
\bDelta =  \hat{\E} \left[(y - y')(x - x')^{\top}\right] = \bAg(\btSigma + \btE) + \btN
\] 
where $\btSigma$ is a diagonal matrix, $\btE$ is an off-diagonal matrix, and $\bN$ is the component of $\bDelta$ that lies outside the span of $\bAg$.

Since we now use empirical average,  we will have sampling noise. Denote it as
\[
 \bNs  = \hat{\E}[(y - y')(x - x')^{\top}]  - \E[(y - y')(x - x')^{\top}].
\]

Then by definition, for $y = \bAg x^* + \noise$ and $y' = \bAg (x')^* + \noise'$, we have 
\begin{align*}
\hat{\E}[(y - y')(x - x')^{\top}] & =  \E[(y - y')(x - x')^{\top}] + \bNs
\\
& =  \bAg ~\underbrace{ \E\sbr{(x^* - (x')^*) (x - x')^\top} }_{\btSigma + \btE} + \underbrace{\E\sbr{ (\noise - \noise') (x - x')^\top } + \bNs }_{\btN}. 
\end{align*}

Our goal is then bounding $\btSigma, \btE, \btN$ in terms of $\bSigma, \bE, \bN$. 
Before doing so, we present a lemma for the decoding. 

\begin{lem}[Main: Decoding] \label{lem:decoding} 
Let $m \ge \ntopic$ be two positive integers. 
Let $\bA \in \Real^{m \times \ntopic}$ be a matrix such that $\bA = \bAg(\bSigma  + \bE)  + \bN$ where $\bAg$ is full rank, $\bSigma$ is a diagonal matrix such that $\bSigma \succeq \frac{1}{2} \bI $ and $\| \bE \|_1 < \frac{1}{2}$. 
Then for $y = \bA^* x^* + \noise$, the decoding is
\begin{align*}
  x & = \phi_{\alpha}\rbr{ \forder x^* + \xi } 
	\\
  & = \phi_{\alpha}\rbr{ \rbr{\bSigma^{-1} + \gorder} x^* + \xi }.
\end{align*}
\end{lem}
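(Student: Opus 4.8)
\textbf{Proof proposal for Lemma~\ref{lem:decoding}.}

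The plan is to simply unfold the definitions and substitute the generative model. First I would recall that the decoding step of Algorithm~\ref{alg:main_sim_noise} sets $x = \phi_\alpha(\bA^\dagger y)$, where $\bA^\dagger$ is the minimum-$\|\cdot\|_\infty$ pseudo-inverse of $\bA$. Since $\bA = \bAg(\bSigma + \bE) + \bN$ with $\bAg$ full rank, $\bSigma \succeq \frac12 \bI$, and $\|\bE\|_1 < \frac12$, the matrix $\bSigma + \bE$ is invertible (its smallest singular value is bounded below by $\frac12 - \frac12 > 0$ once one converts the $\|\cdot\|_1$ bound appropriately, or more simply by a Neumann-series argument: $\bSigma^{-1}(\bSigma+\bE) = \bI + \bSigma^{-1}\bE$ and $\|\bSigma^{-1}\bE\|_1 \le 2 \cdot \frac12 < 1$), so $\bAg(\bSigma+\bE)$ has full column rank and hence $\bA$ does too; thus a left pseudo-inverse $\bA^\dagger$ with $\bA^\dagger \bA = \bI$ exists.

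The key identity is $\bA^\dagger \bAg (\bSigma + \bE) = \bA^\dagger(\bA - \bN) = \bI - \bA^\dagger \bN$, hence $\bA^\dagger \bAg = (\bI - \bA^\dagger \bN)(\bSigma+\bE)^{-1} = (\bI - \bA^\dagger\bN)\forder$. Now plug in $y = \bAg x^* + \noise$:
\begin{align*}
  \bA^\dagger y &= \bA^\dagger \bAg x^* + \bA^\dagger \noise
  = (\bI - \bA^\dagger \bN)\forder x^* + \bA^\dagger \noise
  = \forder x^* - \bA^\dagger \bN \forder x^* + \bA^\dagger \noise
  = \forder x^* + \xi,
\end{align*}
using exactly the definition $\xi = -\bA^\dagger \bN \forder x^* + \bA^\dagger \noise$. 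Applying $\phi_\alpha$ to both sides gives $x = \phi_\alpha(\forder x^* + \xi)$, which is the first displayed equation. The second form follows immediately by writing $\forder = \bSigma^{-1} + \gorder$, which is just the definition of $\gorder$ (and is where the Neumann series $\gorder = \bSigma^{-1}\sum_{k\ge1}(-\bE\bSigma^{-1})^k$ comes from, the series converging because $\|\bE\bSigma^{-1}\|_1 \le 2\|\bE\|_1 < 1$).

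I do not anticipate a substantive obstacle here; the statement is essentially a bookkeeping lemma that sets up notation for the rest of the analysis. The only point requiring a little care is justifying that $\bA$ has full column rank and that the chosen pseudo-inverse satisfies $\bA^\dagger \bA = \bI$ (as opposed to just the Moore--Penrose relations) — this is where the hypotheses $\bSigma \succeq \frac12\bI$ and $\|\bE\|_1 < \frac12$ are used, via the Neumann-series invertibility of $\bSigma + \bE$ and the full rank of $\bAg$. Everything else is a direct substitution.
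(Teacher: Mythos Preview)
Your proposal is correct and follows essentially the same route as the paper: write $\bAg = (\bA - \bN)(\bSigma+\bE)^{-1}$, multiply $y$ by $\bA^\dagger$ using $\bA^\dagger \bA = \bI$, and collect terms into $\forder x^* + \xi$, then invoke the Neumann series $\|\bE\bSigma^{-1}\|_1 \le 2\|\bE\|_1 < 1$ for the second form. One small caveat: the inference ``$\bAg(\bSigma+\bE)$ has full column rank and hence $\bA$ does too'' does not follow from the stated hypotheses alone, since $\bN$ is unconstrained here; the paper likewise glosses over this and simply assumes the algorithm's left inverse $\bA^\dagger$ with $\bA^\dagger\bA = \bI$ is available (justified separately via Lemma~\ref{lem:pseudo_inverse} under a bound on $\bN$).
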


\begin{proof}[Proof of Lemma~\ref{lem:decoding}]
Since $\bA = \bAg(\bSigma + \bE) + \bN$, we have
\begin{align*}
  \bAg & = (\bA - \bN)(\bSigma + \bE)^{-1}  
	\\
	y & = (\bA - \bN)(\bSigma + \bE)^{-1} x^* + \noise.
\end{align*}
Plugging into the decoding we get the first statement.
 
Observing that $\bSigma+ \bE =  (\bI +\bE \bSigma^{-1} ) \bSigma$ and $\| \bE \bSigma^{-1} \|_1 \le \|\bSigma^{-1} \|_1 \| \bE\|_1 \le 2 \| \bE \|_1 < 1$, we have $ (\bSigma+ \bE)^{-1} = \rbr{\bSigma^{-1} + \gorder}$, resulting in the second statement.
\end{proof}

\begin{lem}[Main: Bound on $\btSigma$] \label{lem:update_diag}
Suppose $\abr{\xi_i} \le \nthres < \alpha$ for any example and every $i \in [\ntopic]$, and suppose $\bSigma \succeq \frac{1}{2} \bI$. 
Then for any $i \in [\ntopic]$, 
\begin{align*}
  \btSigma_{i, i} & \ge 
	  \E\sbr{\rbr{x_i^*}^2} \rbr{ 2 \bSigma^{-1}_{i, i} - 2 \abr{\gorder_{i,i} } } 
		- \frac{2\xexpc}{\ntopic} \rbr{  \alpha +2\nthres  + \frac{\xexpc}{\ntopic} \bSigma^{-1}_{i, i} + \frac{2\xexpc}{\ntopic}  \nbr{ \sbr{ \gorder}^i  }_1   }, 
 \\
  \btSigma_{i, i } & \le  
	  \E\sbr{\rbr{x_i^*}^2} \rbr{ 2 \bSigma^{-1}_{i, i} + 2 \abr{\gorder_{i,i} } } 
		+ \frac{2\xexpc}{\ntopic} \rbr{   \nthres  + \frac{\xexpc}{\ntopic}  \nbr{ \sbr{ \gorder}^i  }_1  }.
\end{align*}
\end{lem}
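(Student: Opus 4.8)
The plan is to evaluate the diagonal of the population update matrix $\E[(x^*-(x')^*)(x-x')^\top]$ directly, and then to dump everything that is not proportional to $\E[(x_i^*)^2]$ into error terms controlled by $\nthres$, $\alpha$, $\xexpc/\ntopic$ and $\nbr{\sbr{\gorder}^i}_1$. First I would use that $(x^*,\noise)$ and $((x')^*,\noise')$ are i.i.d.\ copies, so $x'_i$ is independent of $x_i^*$ and is distributed identically to $x_i$; expanding $\btSigma_{i,i}=\E[(x_i^*-(x')_i^*)(x_i-x'_i)]$, the four resulting terms collapse and give
\[
  \btSigma_{i,i} = 2\,\E[x_i^*\,x_i] - 2\,\E[x_i^*]\,\E[x_i].
\]
Thus it remains to lower/upper bound $\E[x_i^* x_i]$ and to upper bound the nonnegative correction $\E[x_i^*]\E[x_i]$; the factor $2$ in the statement is then automatic.

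By Lemma~\ref{lem:decoding} (applicable since $\bSigma\succeq\frac12\bI$ and $\nbr{\bE}_1<\frac12$) we have $x_i=\phi_\alpha(z_i)$ with $z_i=\bSigma^{-1}_{i,i}x_i^*+[\gorder x^*]_i+\xi_i$. For the lower bound I would combine $x_i^*\ge 0$ with $\phi_\alpha(z)\ge z-\alpha$ to get $x_i^* x_i\ge x_i^*(z_i-\alpha)$, so that $\E[x_i^* x_i]\ge \bSigma^{-1}_{i,i}\E[(x_i^*)^2]+\E[x_i^*[\gorder x^*]_i]+\E[x_i^*\xi_i]-\alpha\E[x_i^*]$. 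The crucial bookkeeping step is $\E[x_i^*[\gorder x^*]_i]=\sum_j\gorder_{i,j}\E[x_i^* x_j^*]$: the diagonal entry contributes $\gorder_{i,i}\E[(x_i^*)^2]\ge-\abr{\gorder_{i,i}}\E[(x_i^*)^2]$, which must be kept at the $\E[(x_i^*)^2]$ scale (this is exactly why it sits inside the factor $2\bSigma^{-1}_{i,i}-2\abr{\gorder_{i,i}}$), whereas for $j\ne i$ independence of the $x_j^*$'s gives $\sum_{j\ne i}\gorder_{i,j}\E[x_i^*]\E[x_j^*]$, which is bounded in absolute value by $(\xexpc/\ntopic)^2\nbr{\sbr{\gorder}^i}_1$ using $\E[x_j^*]\le\xexpc/\ntopic$. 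The term $\E[x_i^*\xi_i]$ is at most $\nthres\xexpc/\ntopic$ in magnitude by $\abr{\xi_i}\le\nthres$, and $\alpha\E[x_i^*]\le\alpha\xexpc/\ntopic$. For the correction I would bound $\E[x_i]\le\E[\abr{z_i}]\le\bSigma^{-1}_{i,i}(\xexpc/\ntopic)+(\xexpc/\ntopic)\nbr{\sbr{\gorder}^i}_1+\nthres$ (using $\phi_\alpha(z)\le\abr{z}$, valid since $\alpha>\nthres\ge0$) and multiply by $\E[x_i^*]\le\xexpc/\ntopic$. Collecting the $(\xexpc/\ntopic)$-scale terms yields precisely the stated lower bound.

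For the upper bound I would simply drop the nonnegative term, $\btSigma_{i,i}\le2\E[x_i^* x_i]$, and bound $x_i^* x_i=x_i^*\phi_\alpha(z_i)\le x_i^*\abr{z_i}\le x_i^*\rbr{\bSigma^{-1}_{i,i}x_i^*+\sum_j\abr{\gorder_{i,j}}x_j^*+\abr{\xi_i}}$; taking expectations and using the same diagonal/off-diagonal split ($\abr{\gorder_{i,i}}\E[(x_i^*)^2]$ for $j=i$ and $(\xexpc/\ntopic)^2\nbr{\sbr{\gorder}^i}_1$ for $j\ne i$) together with $\E[x_i^*\abr{\xi_i}]\le\nthres\xexpc/\ntopic$ gives $\E[x_i^* x_i]\le\E[(x_i^*)^2]\rbr{\bSigma^{-1}_{i,i}+\abr{\gorder_{i,i}}}+(\xexpc/\ntopic)^2\nbr{\sbr{\gorder}^i}_1+\nthres\xexpc/\ntopic$, and multiplying by $2$ is the claim.

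The calculation is entirely elementary; the only place that needs genuine care is the handling of $[\gorder x^*]_i$. One must remember that $\gorder=\bSigma^{-1}\sum_{k\ge1}(-\bE\bSigma^{-1})^k$ is \emph{not} off-diagonal (its square and higher powers populate the diagonal), so $\gorder_{i,i}$ cannot be absorbed into the small error terms and must instead be carried alongside $\bSigma^{-1}_{i,i}$, while only the off-diagonal entries $\gorder_{i,j}$, $j\ne i$, benefit from the independence of $x^*_i$ and $x^*_j$ and the cheaper $(\xexpc/\ntopic)^2$ bound. Keeping the hypotheses $\bSigma\succeq\frac12\bI$, $\nbr{\bE}_1<\frac12$ and $\nthres<\alpha$ in force throughout is what makes both Lemma~\ref{lem:decoding} and the inequality $\phi_\alpha(z)\le\abr{z}$ legitimate.
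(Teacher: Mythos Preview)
Your proposal is correct and follows essentially the same route as the paper's proof: the identity $\btSigma_{i,i}=2\E[x_i^* x_i]-2\E[x_i^*]\E[x_i]$, the pointwise bounds $z-\alpha\le\phi_\alpha(z)\le|z|$, and the diagonal/off-diagonal split of $\E[x_i^*[\gorder x^*]_i]$ all appear in the paper as well. The only cosmetic difference is that the paper packages $[\gorder x^*]_i+\xi_i$ into a single $\Delta$ and invokes the $1$-Lipschitz property of $\phi_\alpha$ around the point $\bSigma^{-1}_{i,i}x_i^*$, whereas you apply the two ReLU inequalities directly; the resulting term-by-term bounds are identical.
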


\begin{proof}[Proof of Lemma \ref{lem:update_diag}]
According to the definition, we have
\begin{align*}
  \btSigma_{i, i} & = 
	\left[(\bAg)^{\dagger}\E[(y - y')(x - x')^{\top}]\right]_{i,i} 
	\\
  & = \E\left[(x_i^* - (x_i')^*)(x_i - x'_i) \right]
	\\
  & = \E\left[(x_i^*  - (x_i')^*) x_i \right] + \E\left[((x_i')^* - x_i^*)x_i'  \right].
\end{align*}
Since $(x_i^*  - (x_i')^*) x_i$ and $((x_i')^* - x_i^*)x_i'$ has the same distribution, and $(x')^*, x^*$ are i.i.d.\ , we have
\begin{align*}
  \btSigma_{i, i} & = 2 \E\left[(x_i^*  - (x_i')^*) x_i \right] 
	\\
	&= 2 \E[x_i^* x_i ] - 2\E[x_i^*] \E[x_i].
\end{align*}
So it suffices to bound $\E[x_i^* x_i ]$ and $\E[x_i]$. 
To do so, we first take a look at $x_i$.  
By the decoding rule, 
\begin{align*}
  x_i & = \left[\phi_{\alpha}\rbr{ \rbr{ \bSigma^{-1}   +  \gorder } x^*  + \xi} \right]_i.
\end{align*}
Since $\phi_{\alpha}$ is 1-Lipschitz, denoting $\Delta = \left| \left[ \gorder x^* \right]_i  + \xi_i \right|$ we have
\begin{align} \label{eqn:sigma:1}
  \left[\phi_{\alpha}\left( \bSigma^{-1}  x^* \right)\right]_i - \Delta 
	& \le x_i \le 
	\left[\phi_{\alpha} \left( \bSigma^{-1}  x^* \right)\right]_i + \Delta.
\end{align}
For $\left[\phi_{\alpha}\left( \bSigma^{-1}  x^* \right)\right]_i$, by the Property~\ref{prop:relu} of $\phi_{\alpha}(z)$, 
\begin{align} \label{eqn:sigma:2}
 \bSigma_{i, i}^{-1}  x^*_i  - \alpha \le \left[\phi_{\alpha}\left( \bSigma^{-1}  x^* \right)\right]_i = \phi_{\alpha}\left( \bSigma_{i, i}^{-1}  x^*_i \right) \le \bSigma_{i, i}^{-1}  x^*_i.
\end{align}
For $\Delta = \left| \left[ \gorder x^* \right]_i  + \xi_i \right|$, 
\begin{align} 
  \E[\Delta] 
	& \le \E\sbr{ \left| \sum_{j} \gorder_{i,j} x^*_j  \right|}  + \E\sbr{ |\xi_i| } \nonumber 
	\\
  & \le \E\sbr{  \sum_{j} \left|\gorder_{i,j} \right| x^*_j  }  + \nthres \nonumber  
	\\
  & = \sum_{j} \left|\gorder_{i,j} \right| \E\sbr{  x^*_j  } + \nthres \nonumber 
	\\
	& \le \frac{\xexpc}{\ntopic} \nbr{ \sbr{\gorder}^i}_1 + \nthres \label{eqn:sigma:3}
\end{align}
where the second step follows from the assumption $\abr{\xi_i} \le \nthres$, and the last step follows from Assumption \textbf{(A2)}.

\textbf{Bounding $\E[x_i]$.} By \eqnref{eqn:sigma:1},\eqnref{eqn:sigma:2}, and \eqnref{eqn:sigma:3}, we have
\[
  \E[x_i] \le \E[\bSigma^{-1}_{i, i}  x^*  ] + \E[\Delta] \le \frac{\xexpc}{\ntopic} \bSigma^{-1}_{i, i}  +   \frac{\xexpc}{\ntopic} \nbr{ \sbr{\gorder}^i}_1 + \nthres.
\]

\textbf{Bounding $\E[x_i^* x_i]$.} 
First, note that  
\begin{align*}
  \E[x_i^* \Delta] 
	& \le \E \sbr{  x_i^* \abr{ \sum_j \gorder_{i,j} x_j^*}  }  + \E \sbr{  x_i^* \abr{ \xi_i  } }
	\\
  & \le \E \sbr{  x_i^*  \sum_j x_j^*\abr{\gorder_{i,j} }  } + \frac{\nthres  \xexpc}{\ntopic}
	\\
	& = \sum_j\E \sbr{  x_i^*   x_j^*  }  \abr{\gorder_{i,j} } + \frac{\nthres  \xexpc}{\ntopic}
	\\
	& = \E \sbr{ \rbr{x_i^*}^2 }  \abr{\gorder_{i,i} }  + \sum_{j: j\neq i}\E \sbr{  x_i^*   x_j^*  }  \abr{\gorder_{i,j} } + \frac{\nthres  \xexpc}{\ntopic}
	\\
  & \le \E \sbr{ \rbr{x_i^*}^2 }  \abr{\gorder_{i,i} }  + \frac{\xexpc^2}{\ntopic^2}  \sum_{j: j\neq i}  \abr{\gorder_{i,j} } + \frac{\nthres  \xexpc}{\ntopic}
	\\
	& \le \E \sbr{ \rbr{x_i^*}^2 }  \abr{\gorder_{i,i} }  + \frac{\xexpc^2}{\ntopic^2}  \nbr{ \sbr{ \gorder}^i  }_1 + + \frac{\nthres  \xexpc}{\ntopic},
\end{align*}
where the second and the fifth steps follow from Assumption \textbf{(A2)}.
Therefore,
\begin{align}
  \E[x_i^* x_i] 
	& \ge  \E\left[x_i^* \left( \bSigma^{-1}_{i, i}  x_i^*  - \alpha - \Delta \right) \right]
	\\
	& \ge \bSigma^{-1}_{i, i}\E\sbr{\rbr{x_i^*}^2} - \frac{(\alpha + \nthres) \xexpc}{\ntopic} - \E \sbr{ \rbr{x_i^*}^2 }  \abr{\gorder_{i,i} }  - \frac{\xexpc^2}{\ntopic^2}  \nbr{ \sbr{ \gorder}^i  }_1.
	\label{eqn:diag_update1}
\end{align}

\textbf{Putting together.} For the first statement,  
\begin{align*}
  \btSigma_{i, i} 
	& = 2 \E[x_i^* x_i ] - 2\E[x_i^*] \E[x_i] 
	\\
	& \ge 2 \bSigma^{-1}_{i, i}\E\sbr{\rbr{x_i^*}^2} - 2\frac{(\alpha + \nthres) \xexpc}{\ntopic} - 2\E \sbr{ \rbr{x_i^*}^2 }  \abr{\gorder_{i,i} }  -  2\frac{\xexpc^2}{\ntopic^2}  \nbr{ \sbr{ \gorder}^i  }_1 
	\\
  & - 2 \frac{\xexpc^2}{\ntopic^2} \bSigma^{-1}_{i, i}  -  2 \frac{\xexpc^2}{\ntopic^2} \nbr{ \sbr{\gorder}^i}_1 - 2 \frac{\nthres \xexpc}{\ntopic}
	\\
  & \ge \E\sbr{\rbr{x_i^*}^2} \rbr{ 2 \bSigma^{-1}_{i, i} - 2 \abr{\gorder_{i,i} } } - \frac{2\xexpc}{\ntopic} \rbr{  \alpha + 2\nthres + \frac{\xexpc}{\ntopic} \bSigma^{-1}_{i, i} + \frac{2\xexpc}{\ntopic}  \nbr{ \sbr{ \gorder}^i  }_1   }.
\end{align*}

The second statement follows from 
\[
  \btSigma_{i, i} \le 2 \E[x_i^* x_i ] \le 2 \E[x_i^* (\bSigma^{-1}_{i, i}  x_i^*  + \Delta) ]
\]
and the bound on $\E[x_i^* \Delta]$.
\end{proof}

\begin{lem}[Main: Bound on $\btE$] \label{lem:update_E} \label{lem:better_bound_noise} 
Suppose $\abr{\xi_i} \le \nthres < \alpha$ for any example and every $i \in [\ntopic]$. 
Then for all $i, j \in [\ntopic]$ such that $i \not= j$, the following holds. \\
(1) If $\forder_{i, j} < 0$, then 
\[
  \left| \btE_{j, i} \right| \le  \frac{4 \xexpc^2  \| \forder^i \|_1 }{\ntopic^2 (\alpha - \nthres)}  \rbr{ \left| \forder_{i, j} \right| + \nthres}.
\]
(2) If $\forder_{i, j} \ge 0$, then 
\[
  -\frac{8\xexpc \nthres}{\ntopic (\alpha - \nthres)} \rbr{ \frac{\xexpc \| \forder^i \|_1 }{n} + \forder_{i, j}} - \frac{2 \xexpc^2}{\ntopic^2} \forder_{i, j}
  \le
  \btE_{j, i}  
	\le 
	\frac{8\xexpc \nthres}{\ntopic (\alpha - \nthres)} \rbr{ \frac{\xexpc \| \forder^i \|_1 }{n} + \forder_{i, j}} + 2 \E[(x_j^*)^2] \forder_{i, j}.
\]
\end{lem}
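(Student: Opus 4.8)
The plan is to compute $\btE_{j,i} = \E[(x_i^* - (x_i')^*)(x_j - x_j')]$ explicitly using the decoding expression $x = \phi_\alpha(\Forder x^* + \xi)$ from Lemma~\ref{lem:decoding}, and then split the analysis according to the sign of $\Forder_{i,j}$, exactly as the intuition section anticipates. First, by the i.i.d.\ structure of $x^*, (x')^*$ and symmetry, I will reduce $\btE_{j,i}$ to $2\,\E[(x_i^* - (x_i')^*)\, x_j] = 2(\E[x_i^* x_j] - \E[x_i^*]\,\E[x_j])$, mirroring the reduction already carried out for $\btSigma_{i,i}$ in Lemma~\ref{lem:update_diag}. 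This isolates the key quantity $\E[x_i^* x_j]$ and the ``centering'' term $\E[x_i^*]\E[x_j]$. The crucial observation is that $x_j = [\phi_\alpha(\Forder x^* + \xi)]_j$, and since $\Forder = \bSigma^{-1} + \Gorder$ and $\bSigma$ is diagonal, the ``on-diagonal'' piece of $\Forder x^*$ at coordinate $j$ is $\Forder_{j,j} x_j^*$, while the contribution of the $i$-th coordinate of $x^*$ enters through $\Forder_{j,i} x_i^*$. Note the index swap: $\btE_{j,i}$ is governed by $\Forder_{i,j}$ because of the transpose in $(x-x')^\top$; I should be careful to track this throughout.

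Next comes the case split. When $\Forder_{i,j} < 0$: intuitively the cross term $\Forder_{j,i} x_i^* $... wait — I need $\Forder_{i,j}$, which by the structure of the computation is the coefficient multiplying $x_i^*$ inside the decoding of $x_j$ only after accounting for which matrix entry appears; concretely $x_j$ depends on $x_i^*$ through $\Forder_{j,i}$, but the final bound is phrased in terms of $\Forder_{i,j}$ because $\btE_{j,i}$ picks out $[(\bAg)^\dagger \E[\cdots]]_{j,i}$ and the relevant decoding is of $x_j$ via row $j$ of $\Forder$ — I will keep the bookkeeping consistent with the appendix statement. In the case $\Forder_{i,j} < 0$, the term of interest tends to push the argument of $\phi_\alpha$ below the threshold $\alpha$, so $x_j$ is zero unless $\sum_{k \neq j}\Forder_{j,k} x_k^* + \xi_j$ is large enough to compensate; using $\abr{\xi_j} \le \nthres < \alpha$ and the moment bounds $\E[x_k^*] \le \xexpc/n$, $\E[(x_k^*)^2] \le \xsu/n \le$ (via $\xsu \le 2\xsl$, though in the simplified case just $\le \xexpc/n$-type bounds), I will show $x_j$ is nonzero with probability $O(\|\Forder^j\|_1 \xexpc/(n(\alpha - \nthres)))$ and bound $\E[x_i^* x_j]$ by a Markov/Cauchy-Schwarz argument; combining with the centering term yields $\abr{\btE_{j,i}} \le \frac{4\xexpc^2 \|\Forder^i\|_1}{n^2(\alpha-\nthres)}(\abr{\Forder_{i,j}} + \nthres)$. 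When $\Forder_{i,j} \ge 0$: now $x_i^*$ being $1$ actively increases $x_j$'s argument, so $x_j$ can be as large as $\Forder_{i,j}$ plus lower-order terms; the upper bound $\btE_{j,i} \le \frac{8\xexpc\nthres}{n(\alpha-\nthres)}(\frac{\xexpc\|\Forder^i\|_1}{n} + \Forder_{i,j}) + 2\E[(x_j^*)^2]\Forder_{i,j}$ comes from bounding $\E[x_i^* x_j] \le \E[(x_i^*)^2]\Forder_{i,j} + \text{(threshold slack terms)}$, and the matching lower bound uses $\phi_\alpha(z) \ge z - \alpha$ together with the negative contributions of $\xi_j$ and the off-diagonal $\Gorder$ entries, each controlled by $\nthres$ and $\xexpc/n$.

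The main obstacle, I expect, is handling the ReLU nonlinearity cleanly when bounding $\E[x_i^* x_j]$ in the $\Forder_{i,j} \ge 0$ case: $x_j$ is a thresholded sum of many weakly-dependent-through-nothing (actually independent) coordinates, and I need both that the threshold $\alpha$ kills the ``spread-out'' error mass $\sum_{k \ne i,j}\Forder_{j,k}x_k^*$ with high enough probability and that it does \emph{not} kill the genuine signal coming from $x_i^* = 1$. The careful part is choosing the decomposition of the event $\{x_j > 0\}$ and applying the moment assumptions so that the ``leakage'' terms all carry a factor of either $\nthres/(\alpha - \nthres)$ or $\xexpc/n$, which is what makes them lower-order relative to the main term $2\E[(x_j^*)^2]\Forder_{i,j}$. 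I would isolate this in a sub-claim bounding $\E[x_i^*\,\phi_\alpha(\text{arg})]$ from above and below, then assemble the two statements of the lemma mechanically. The bounds on $\E[x_j]$ and $\E[x_i^*]$ reuse verbatim the estimates \eqref{eqn:sigma:3} and the $\E[x_i]$ bound already established in the proof of Lemma~\ref{lem:update_diag}.
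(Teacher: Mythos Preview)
Your proposal has two problems, one minor and one substantive.

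\textbf{Index confusion.} You write $\btE_{j,i} = \E[(x_i^* - (x_i')^*)(x_j - x_j')]$, but from $\btSigma+\btE = \E[(x^*-(x')^*)(x-x')^\top]$ the $(j,i)$ entry is $\E[(x_j^*-(x_j')^*)(x_i-x_i')]$. So the relevant decoding is $x_i$, not $x_j$, and $x_j^*$ enters $x_i$ through $\forder_{i,j}$. You half-noticed this (``wait --- I need $\forder_{i,j}$\ldots'') but never resolved it; the rest of your plan keeps referring to bounding $\E[x_i^* x_j]$ and the decoding of $x_j$, which is the wrong object.

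\textbf{Missing the coupling.} More importantly, after the symmetry reduction $\btE_{j,i}=2\bigl(\E[x_j^* x_i]-\E[x_j^*]\E[x_i]\bigr)$, you propose to bound $\E[x_j^* x_i]$ directly by a ``Markov/Cauchy--Schwarz argument'' and then combine with the centering term. This will not give the claimed bound in case~(1): individually, both $\E[x_j^* x_i]$ and $\E[x_j^*]\E[x_i]$ are of order $\frac{\xexpc}{\ntopic}\cdot O(1)$, and you need their \emph{difference} to be of order $\frac{\xexpc^2}{\ntopic^2(\alpha-\nthres)}$. A direct upper bound on each term cannot produce this cancellation. The paper's device is a coupling: define $\tilde{x}^*$ to agree with $x^*$ except at coordinate $j$, where $\tilde{x}^*_j=(x'_j)^*$, and let $\tilde{x}_i$ be the corresponding decoding (using the noise $\noise'$). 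Then conditional on $x_j^*$, $\tilde{x}_i$ has the same distribution as $x'_i$, so $\btE_{j,i}=2\,\E[x_j^*(x_i-\tilde{x}_i)]$. Now $x_i$ and $\tilde{x}_i$ differ only through the single summand $\forder_{i,j}x_j^*$ versus $\forder_{i,j}(x'_j)^*$ (and through $\xi_i$ vs.\ $\tilde\xi_i$). In case~(1) with $\forder_{i,j}<0$, both $x_i$ and $\tilde{x}_i$ vanish whenever the remaining sum $w=\forder_{i,i}x_i^*+\sum_{l\ne i,j}\forder_{i,l}x_l^*$ is below $\alpha-\nthres$; the factor $\Pr[w\ge\alpha-\nthres]\le\frac{\xexpc\|\forder^i\|_1}{\ntopic(\alpha-\nthres)}$ supplies one $\frac{\xexpc}{\ntopic}$, and $\E[x_j^*]$ supplies the second. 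In case~(2) the coupling isolates the contribution $\forder_{i,j}x_j^*$ cleanly via auxiliary variables obtained by swapping only that summand. Your plan has the right intuition about thresholding killing spread-out mass, but without the coupling you have no mechanism to extract the required cancellation.
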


\begin{proof}[Proof of Lemma \ref{lem:update_E}]
Since $i \not= j$, we know that 
\begin{align*}
  \btE_{j, i} 
	& = \E\left[(x_j^* - (x_j')^*)(x_i - x'_i) \right]
	\\ 
	& = \E\left[x_j^* (x_i - x'_i) \right] + \E\left[(x_j')^*(x_i' - x_i) \right]
	\\ 
	& = 2 \E\left[x_j^* (x_i - x'_i) \right]
\end{align*}
where the last equality follows from that $x_j^* (x_i - x'_i)$ and $(x_j')^*(x_i' - x_i)$ has the same distribution.
This quantity can be bounded by a coupling between $x_i$ and  $x_i'$. Define a new variable $\tilde{x}^*$ as
\begin{align*}
  [\tilde{x}^*]_i =  
	\begin{cases}
	  x_i^*, & \textnormal{~if~} i \not= j,
		\\
	  (x_j')^*, & \textnormal{~if~} i = j.
  \end{cases}
\end{align*}

By Assumption \textbf{(A2)}, conditional on $x_j^*$, $\tilde{x}^*$ has the same distribution as $(x')^*$. Therefore, consider the variable $\tilde{x}$ given by $\tilde{x} = \phi_{\alpha} (\bA^{\dagger} (\bAg \tilde{x}^* + \noise'))$, we then have 
\begin{align*}
  \E\left[x_j^* (x_i - x'_i) \right] & = \E\left[x_j^* (x_i - \tilde{x}_i) \right]. 
\end{align*}

In summary, we have
\[
  \btE_{j, i} = 2 \E[x_j^{*} (x_i - \tilde{x}_i)]
\]
where 
\begin{align*}
  x_i & = \left[\phi_{\alpha} \left( \forder x^* + \xi \right) \right]_i, ~~\xi = - \bA^\dagger \bN  \forder x^* +  \bA^\dagger \noise,
	\\
	\tilde{x}_i & = \left[\phi_{\alpha} \left( \forder x^* + \tilde{\xi} \right) \right]_i,~~\tilde{\xi} = - \bA^\dagger \bN  \forder \tilde{x}^* +  \bA^\dagger \noise'.
\end{align*}
Introduce the notation
\[
  w =  \forder_{i, i} x_i^* +  \sum_{l \not= i, j}  \forder_{i, l} x^*_l.
\] 
We have
\begin{align*}
  x_i & = \phi_{\alpha}\left( w +  \forder_{i, j} x_j^*  + \xi_i \right),  
	\\
  \tilde{x}_i & = \phi_{\alpha}\left( w +  \forder_{i, j} (x_j')^* + \tilde{\xi}_i \right).
\end{align*}

(1) Since $\forder_{i, j} < 0$, $|\xi_i| \le \nthres$, and $|\tilde{\xi}_i| \le \nthres$,
we know that when $w < \alpha  - \nthres $, $x_i  = \tilde{x}_i = 0$. Then
\begin{align}
  \E\sbr{x_j^*  (x_i - \tilde{x}_i) } & = \Pr[w \geq \alpha - \nthres] ~\E\sbr{x_j^*   (x_i - \tilde{x}_i) | w \geq \alpha - \nthres}.   \label{eqn:better_noise1}
\end{align}

By Property~\ref{prop:relu}, $\thres{\alpha}{\cdot}$ is 1-Lipschitz, so 
\[
  | x_i - \tilde{x}_i | \le \abr{ \forder_{i, j} }  \left|  x_j^* - (x'_j)^*\right| + \abr{ \xi_i - \tilde{\xi}_i },
\]
which implies that 
\begin{align} 
  \abr{\E\sbr{x_j^*  (x_i - \tilde{x}_i) | w \geq \alpha - \nthres} } 
	& \le  \E\sbr{x_j^* \abr{  \forder_{i, j} } \left|  x_j^* - (x'_j)^*\right|  + x_j^*  \abr{ \xi_i - \tilde{\xi}_i } \bigg | w \geq \alpha - \nthres}     \nonumber
  \\
  & \le  \abr{  \forder_{i, j} } \max\cbr{\left|  x_j^* - (x'_j)^*\right| }  \E\sbr{x_j^* | w \geq \alpha - \nthres}  + 2 \nthres \E\sbr{ x_j^*  | w \geq \alpha - \nthres} \nonumber
	\\
  & \le  \abr{  \forder_{i, j} } \max\cbr{\left|  x_j^* - (x'_j)^*\right| }  \E\sbr{x_j^*}   +  2 \nthres \E\sbr{ x_j^*} \nonumber
	\\
  & \le  2  \E\sbr{x_j^*}   \rbr{ \abr{  \forder_{i, j} }  + \nthres} \nonumber
	\\
  & \le \frac{  2  \xexpc}{\ntopic}  \rbr{ \abr{  \forder_{i, j}}   + \nthres}. \label{eqn:better_noise2}
\end{align}

Now consider $\Pr[w \geq \alpha - \nthres]$. Since
\begin{align*} 
  \E\abr{w} 
	&\le  \abr{ \forder_{i, i}} \E[x^*_i] +  \sum_{l \neq i, j}  \abr{ \forder_{i, l} } \E[x^*_j]  
  \le \frac{ \xexpc}{\ntopic} \| \forder^i \|_1,
\end{align*}
we have that
\begin{align} 
  \Pr[w \geq \alpha - \nthres] 
	& \le \frac{\E \abr{w}}{ \alpha - \nthres} 
	\le  \frac{ \xexpc \| \forder^i \|_1 }{\ntopic (\alpha - \nthres)} \label{eqn:better_noise3}
\end{align}
Combining \eqnref{eqn:better_noise1}\eqnref{eqn:better_noise2} and \eqnref{eqn:better_noise3} together completes the proof for the case when $\forder_{i, j} < 0$.

(2) Now consider the case when $\forder_{i, j} \ge 0$.
Again, we have
\begin{align*}
  x_i & = \phi_{\alpha}\left( w +  \forder_{i, j} x_j^*  + \xi_i \right),  
	\\
  \tilde{x}_i & = \phi_{\alpha}\left( w +  \forder_{i, j} (x_j')^* + \tilde{\xi}_i \right).
\end{align*}

For the analysis, introduce a variable
\begin{align*}
  \tilde{u}_i & = \phi_{\alpha}\left( w +  \forder_{i, j} x_j^* + \tilde{\xi}_i  \right).
\end{align*}

If $(x_j')^* > x_j^*$, by Property~\ref{prop:relu} $\phi_\alpha(\cdot)$ is 1-Lipschitz, so
\begin{align*}
 \tilde{x}_i & \le \tilde{u}_i + \forder_{i,j} \rbr{ (x'_j)^* - x_j^*}.
\end{align*}

If $(x_j')^* \le x_j^*$, by Property~\ref{prop:relu} $\phi_\alpha(\cdot)$ is non-decreasing, then 
\[
  \tilde{x}_i \le \tilde{u}_i.
\]
In any case, 
\begin{align*}
 \tilde{x}_i & \le \tilde{u}_i + \forder_{i,j} (x'_j)^*.
\end{align*}
Therefore, 
\begin{align*}
  \E\sbr{ x_j^* (x_i - \tilde{x}_i) } 
	& \ge \E\sbr{ x_j^* (x_i - \tilde{u}_i) }  - \E \sbr{ x_j^*\forder_{i,j} (x'_j)^* } 
	\\
	& \ge \E\sbr{ x_j^* (x_i - \tilde{u}_i) }  - \frac{\xexpc^2}{\ntopic^2}\forder_{i,j}.
\end{align*}

So we only need to consider $\E\sbr{ x_j^* (x_i - \tilde{u}_i) }$. 
Let $\overth$ denote the event that $x_i\neq 0$ or $\tilde{u}_i \neq 0$.
Then by conditioning on $x_j^*$, we have
\begin{align*}
  \E\sbr{ x_j^* (x_i - \tilde{u}_i) } & = \E\sbr{ x_j^* \E\sbr{ x_i - \tilde{u}_i \bigg| x_j^*}} 
\end{align*}
and 
\begin{align*}
	\E\sbr{ x_i - \tilde{u}_i \bigg| x_j^*} 
	& = \Pr\sbr{ \overth \bigg | x_j^* }  \E\sbr{ x_i - \tilde{u}_i \bigg| x_j^*, \overth }.
\end{align*}
By Property~\ref{prop:relu} $\phi_\alpha(\cdot)$ is 1-Lipschitz, so
\begin{align*}
  \abr{ \E\sbr{ x_i - \tilde{u}_i \bigg| x_j^*, \overth } } 
	& \le   
\E\sbr{   \abr{\xi_i} + \abr{\tilde{\xi}_i}\bigg| x_j^*, \overth } 
  \le   2 \nthres.
\end{align*}
Now consider $\Pr\sbr{ \overth \bigg | x_j^* }$. We have 
\begin{align*}
  \E\sbr{ \abr{ w + \forder_{i,j} x_j^*} \bigg | x_j^* } 
	& \le  \E\sbr{ \abr{w}  \bigg | x_j^* } + \forder_{i,j}
	\\ 
  & \le \frac{\xexpc}{\ntopic} \nbr{\forder^i}_1 + \forder_{i,j},
\end{align*}
where the first step follows from $ x_j^* \le 1$ and the second step follows from the conditional independence in Assumption \textbf{(A2)}.
Then by Markov's inequality, 
\begin{align*}
  \Pr\sbr{ x_i \neq 0 \bigg | x_j^* }  
	& \le \Pr\sbr{ \abr{ w + \forder_{i,j} x_j^*} \ge  \alpha - \nthres \bigg | x_j^* }  
	\\
  &\le \frac{1}{\alpha - \nthres} \rbr{ \frac{\xexpc}{\ntopic} \nbr{\forder^i}_1 + \forder_{i,j}  }.
\end{align*}

A similar argument leads to that
\begin{align*}
  \Pr\sbr{ \tilde{u}_i \neq 0 \bigg | x_j^* }	& \le \frac{1}{\alpha - \nthres} \rbr{ \frac{\xexpc}{\ntopic} \nbr{\forder^i}_1 + \forder_{i,j}  }
\end{align*}
and thus
\begin{align*}
  \Pr\sbr{ \overth \bigg | x_j^* } 
  &\le \frac{2}{\alpha - \nthres} \rbr{ \frac{\xexpc}{\ntopic} \nbr{\forder^i}_1 + \forder_{i,j}  }.
\end{align*}

Putting things together, 
\begin{align*}
  \abr{\E\sbr{x_j^* (x_i - \tilde{u}_i) } } 
	& \le 
		\frac{4\nthres}{\alpha - \nthres}  \rbr{ \frac{\xexpc \| \forder^i \|_1 }{n} + \forder_{i,j}   }  \E \sbr{ x_j^* }
	\\
	& \le
	\frac{4 \xexpc \nthres}{\ntopic(\alpha - \nthres)}   \rbr{ \frac{\xexpc \| \forder^i \|_1 }{\ntopic} +  \forder_{i, j}}.
\end{align*}
This completes the proof for the lower bound. 

Similarly, for the upper bound, introduce 
\begin{align*}
  u_i & = \phi_{\alpha}\left( w +  \forder_{i, j} (x'_j)^* + \xi_i  \right).
\end{align*}
Then in any case, 
\begin{align*}
  x_i & \le u_i + \forder_{i,j} x_j^*
\end{align*}
and thus
\begin{align*}
  \E\sbr{ x_j^* (x_i - \tilde{x}_i) } & \le \E\sbr{ x_j^* (u_i - \tilde{x}_i) }  +  \E \sbr{  (x_j^*)^2 } \forder_{i,j}.
\end{align*}
The same argument as above shows that
\begin{align*}
  \abr{\E\sbr{x_j^* (u_i - \tilde{x}_i) } } 
	& \le
	\frac{4 \xexpc \nthres}{\ntopic(\alpha - \nthres)}   \rbr{ \frac{\xexpc \| \forder^i \|_1 }{\ntopic} +  \forder_{i, j} }.
\end{align*}
This completes the whole proof.
\end{proof}

\begin{lem}[Main: Bound on $\btN$] \label{lem:bound_error}
Suppose $\sym{\bE} \le \ell$, $ \bSigma \succeq (1-\ell)\bI$, and $|\xi_j| \le \nthres < \alpha$. 

(1) If the noise is correlated (Assumption (\textbf{N1})), then
\[
  \abr{ \btN_{i,j} } \le \frac{4 \cnoise \xexpc }{(1 - 2 \ell)^2 n  (\alpha - \rho)} + \abr{ [\bN_s]_{i,j} }
\] 

(2) If the noise is unbiased (Assumption (\textbf{N2})) and $\|\bA^{\dagger} \nu \|_{\infty} \le \rho' < \alpha$, then 
\[
  \abr{\btN_{i,j} } \le \frac{2 \xexpc \cnoise \rho' (1 + \| \bA^{\dagger} \bN \|_{\infty})}{(1 - 2 \ell)  n (\alpha - \rho')}  + \abr{ \sbr{\bN_s}_{i,j} }.
\]
\end{lem}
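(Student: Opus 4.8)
The plan is to unwind $\btN_{i,j} = \E\sbr{(\noise_i - \noise_i')(x_j - x_j')} + [\bNs]_{i,j}$ and bound the first term; the sampling term $[\bNs]_{i,j}$ is carried along unchanged and appears verbatim in the conclusion. Since $(\noise,x^*)$ and $(\noise',(x')^*)$ are i.i.d.\ and $\noise_i$ is independent of $x'_j$ (different samples), expanding the product yields $\E\sbr{(\noise_i - \noise_i')(x_j - x_j')} = 2\bigl(\E[\noise_i x_j] - \E[\noise_i]\E[x_j]\bigr)$. By Lemma~\ref{lem:decoding} (which applies for $\ell<1/2$, since then $\bSigma \succeq (1-\ell)\bI \succeq \tfrac12\bI$ and $\nbr{\bE}_1 \le \sym{\bE}\le\ell<\tfrac12$) we have $x_j = \phi_\alpha\rbr{[\forder x^*]_j + \xi_j}$, so the key point is that $x_j$ is thresholded: it vanishes unless $[\forder x^*]_j + \xi_j > \alpha$. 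Throughout I will use the Neumann-series estimate $\nbr{\forder}_\infty = \nbr{(\bSigma+\bE)^{-1}}_\infty \le \nbr{\bSigma^{-1}}_\infty/(1 - \nbr{\bE\bSigma^{-1}}_\infty) \le \tfrac{1}{1-2\ell}$ — exactly the bound already derived inside the proof of Lemma~\ref{lem:decoding} — together with $\E[x^*_l] \le \xexpc/\ntopic$ and $x^*_l \in [0,1]$ from (A2).

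For the adversarial case (1): since $|\noise_i|\le\cnoise$ almost surely and $x_j \ge 0$, both $|\E[\noise_i x_j]|$ and $|\E[\noise_i]\E[x_j]|$ are at most $\cnoise\,\E[x_j]$, so the expectation term is at most $4\cnoise\,\E[x_j]$. To bound $\E[x_j]$: when $x_j>0$ we have $[\forder x^*]_j + \xi_j > \alpha$, hence $[\forder x^*]_j > \alpha - \rho$ (using $\xi_j \le \rho$), and on this event $x_j = [\forder x^*]_j + \xi_j - \alpha \le [\forder x^*]_j \le \sum_l |\forder_{j,l}| \le \nbr{\forder}_\infty$ (using $x^*_l\in[0,1]$); while by Markov's inequality $\Pr\sbr{[\forder x^*]_j > \alpha - \rho} \le \tfrac{\sum_l |\forder_{j,l}|\E[x^*_l]}{\alpha-\rho} \le \tfrac{\nbr{\forder}_\infty \xexpc/\ntopic}{\alpha-\rho}$. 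Multiplying, $\E[x_j] \le \tfrac{\nbr{\forder}_\infty^2\,\xexpc}{\ntopic(\alpha-\rho)} \le \tfrac{\xexpc}{(1-2\ell)^2\ntopic(\alpha-\rho)}$, which gives the claimed bound.

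For the unbiased case (2): now $\E[\noise_i]=0$, so the expectation term equals $2\E[\noise_i x_j]$, but $x_j$ depends on $\noise$ through $\xi = -\bA^\dagger\bN\forder x^* + \bA^\dagger\noise$, so this is not automatically zero. The trick is to condition on $x^*$ and compare against the ``noise-free'' decoding: set $\bar x_j := \phi_\alpha\rbr{[(\bI - \bA^\dagger\bN)\forder x^*]_j}$, i.e.\ $x_j$ with the $\bA^\dagger\noise$ part of $\xi$ dropped. Then $\bar x_j$ is a deterministic function of $x^*$, so $\E[\noise_i \bar x_j \mid x^*] = \bar x_j\,\E[\noise_i\mid x^*] = 0$ by (N2), hence $\E[\noise_i x_j\mid x^*] = \E[\noise_i(x_j-\bar x_j)\mid x^*]$. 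By $1$-Lipschitzness of $\phi_\alpha$ (Property~\ref{prop:relu}), $|x_j - \bar x_j| \le |[\bA^\dagger\noise]_j| \le \rho'$; and $x_j - \bar x_j \ne 0$ forces $[(\bI-\bA^\dagger\bN)\forder x^*]_j > \alpha - \rho'$ (if $x_j>0$, then $[\forder x^*]_j + \xi_j > \alpha$ together with $|[\bA^\dagger\noise]_j|\le\rho'$ gives this; if $\bar x_j>0$ it is immediate). Hence $|\E[\noise_i x_j\mid x^*]| \le \cnoise\,\E[|x_j-\bar x_j|\mid x^*]$, and taking the outer expectation with Markov on $[(\bI-\bA^\dagger\bN)\forder x^*]_j$ — using $\nbr{(\bI-\bA^\dagger\bN)\forder}_\infty \le (1+\nbr{\bA^\dagger\bN}_\infty)\nbr{\forder}_\infty \le \tfrac{1+\nbr{\bA^\dagger\bN}_\infty}{1-2\ell}$ — gives $|\E[\noise_i x_j]| \le \cnoise\rho'\cdot\tfrac{\xexpc(1+\nbr{\bA^\dagger\bN}_\infty)}{\ntopic(1-2\ell)(\alpha-\rho')}$; doubling yields the claim. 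Only one factor $\tfrac{1}{1-2\ell}$ is lost here, since $|x_j-\bar x_j|$ is controlled by $\rho'$ directly rather than by $\nbr{\forder}_\infty$.

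The main obstacle is case (2): one has to realize that the zero-mean assumption should be applied after conditioning on $x^*$ and against the $x^*$-measurable surrogate $\bar x_j$, and then that the difference $x_j-\bar x_j$ must be controlled \emph{simultaneously} in sup-norm ($\le\rho'$) and in probability (the threshold event) — either bound used alone is too lossy. The rest is routine bookkeeping with induced matrix norms.
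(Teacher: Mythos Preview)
Your proof is correct and follows essentially the same approach as the paper. Part~(1) is identical. For part~(2) both proofs condition on $x^*$, split on the event $\{[(\bI-\bA^\dagger\bN)\forder x^*]_j > \alpha-\rho'\}$, and use $\E[\nu_i\mid x^*]=0$ against an $x^*$-measurable anchor; the only cosmetic difference is that the paper takes as anchor the linear quantity $[(\bI-\bA^\dagger\bN)\forder x^*]_j - \alpha$ and sandwiches $x_j$ within $\pm\rho'$ of it, whereas you take the ReLU'd anchor $\bar x_j = \phi_\alpha([(\bI-\bA^\dagger\bN)\forder x^*]_j)$ and use $1$-Lipschitzness --- both yield the same $\cnoise\rho'$ bound on the relevant event and the same Markov tail.
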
  

\begin{proof}[Proof of Lemma \ref{lem:bound_error}]
(1) By the update rule, 
\[
  \btN = 2 \E[\nu (x - x')^{\top}] + \bN_s.
\]

Under Assumption (\textbf{N1}), we have that for every $i \in [\ntopic], j \in [\ntopic]$, 
\begin{align*}
  |\btN_{i, j} | 
	& = |2\E[\nu_i (x_j - x_j')] + [\bN_s]_{i, j}| 
	\\
	& \le  4 \cnoise \E[x_j] + | [\bN_s]_{i, j}|
	\\
	& =  4 \cnoise  \E\left[ \phi_{\alpha} \left([\bZ x^*]_j + \xi_j \right)\right] + | [\bN_s]_{i, j}|.
\end{align*}
since $ |\nu_i| $ is bounded by $\cnoise$. 

Now focus on the term $\E\left[ \phi_{\alpha} \left([\bZ x^*]_j + \xi_j \right)\right] $. We have
\begin{align*} 
  \abr{ [\bZ x^*]_j } \le \| \bZ\|_{\infty} \| x^* \|_{\infty} \le  \| \bZ\|_{\infty} \le \frac{1}{1 - 2 \ell}
\end{align*}
by the fact that $\| x^* \|_{\infty} \le 1$ in Assumption \textbf{(A2)}, and the assumptions of the lemma on $\bSigma$ and $\bE$.  
Then when $[\bZ x^*]_j + \xi_j \ge \alpha$, 
\begin{align*}
	\phi_{\alpha} \left([\bZ x^*]_j + \xi_j \right) & \le [\bZ x^*]_j + \xi_j  -\alpha \le \frac{1}{1 - 2 \ell} + \rho - \alpha \le \frac{1}{1 - 2 \ell},
\end{align*}
and thus
\begin{align*}
  \E \sbr{ \phi_{\alpha} \left([\bZ x^*]_j + \xi_j \right) }  
	& \le \frac{1}{1 - 2 \ell} \Pr \cbr{ [\bZ x^*]_j + \xi_j  \ge \alpha }
	\\
	& \le \frac{1}{1 - 2 \ell} \Pr \cbr{ \abr{[\bZ x^*]_j } \ge \alpha - \rho }
	\\
  & \le \frac{1}{1 - 2 \ell} \frac{ \E { | [\bZ x^*]_j | } }{\alpha - \rho } 
	\\
  & \le \frac{1}{1 - 2 \ell} \frac{ \nbr{\bZ}_\infty }{\alpha - \rho } ~ \E \sbr{ x^*_j }
	\\
  & \le \frac{\xexpc}{(1 - 2 \ell)^2 \ntopic (\alpha - \rho)}
\end{align*}
where the last step uses the bound on $\E \sbr{ x^*_j }$ in Assumption \textbf{(A2)}.
Therefore, 
\[
  |\btN_{i, j} | \le  \frac{4 \cnoise \xexpc }{(1 - 2 \ell)^2 \ntopic (\alpha - \rho)} + | [\bN_s]_{i, j}|. 
\]

(2) When the noise is unbiased, we have $\E[\nu|x^*]  = 0$. Then $ \E[\nu_i x_j']  = 0 $, and 
\begin{align} \label{eqn:noise:1}
  \abr{ \btN_{i, j} } 
	& = \abr{ 2\E[\nu_i (x_j - x_j')] + [\bN_s]_{i, j} }
	\le 2 \abr{ \E[\nu_i x_j] } +  \abr{ [\bN_s]_{i, j} }.
\end{align}
Consider the first term for a fixed $x^*$, i.e., consider the conditional expectation $ \E[\nu_i x_j \mid x^*] $.
For notational simplicity, let $\btZ = (\bZ - \bA^{\dagger} \bN \bZ) $ and $\txi = \bA^{\dagger} \nu$. Then
\begin{align*}
  \E[\nu_i x_j \mid x^*]
  & = \E \sbr{   \nu_i \phi_{\alpha} \left([\bZ x^*]_j + \xi_j \right) \mid x^* }  
	= \E \sbr{ \nu_i \phi_{\alpha} \left([\btZ x^*]_j + \txi_j \right)  \mid x^* }.
\end{align*}

We consider the following two cases about $[\btZ x^*]_j$. 
\begin{itemize}
\item[(a)] If $[\btZ x^*]_j \le \alpha - \rho'$, then $\phi_{\alpha} \left([\btZ x^*]_j + \txi_j \right) = 0$ always holds, which implies that 
\[
  \abr{ \E[\nu_i x_j \mid x^*] } =  \E\left[\nu_i \phi_{\alpha} \left([\btZ x^*]_j + \txi_j \right) \mid x^* \right]  = 0. 
\]

\item[(b)] If $[\btZ x^*]_j > \alpha - \rho'$, then 
\[
  \phi_{\alpha} \left([\btZ x^*]_j + \txi_j \right) \le \phi_{\alpha} \left([\btZ x^*]_j + \rho' \right) \le [\btZ x^*]_j  + \rho' - \alpha.
\]

On the other side, by Property~\ref{prop:relu}, 
\[
  \phi_{\alpha} \left([\btZ x^*]_j + \txi_j \right)  \ge [\btZ x^*]_j + \txi_j - \alpha \ge [\btZ x^*]_j  - \rho' - \alpha.
\]

Putting together, we conclude that
\[
  \nu_i ( [\btZ x^*]_j  - \alpha) -| \nu_i \rho'| \le \nu_i \phi_{\alpha} \left([\btZ x^*]_j + \txi_j \right) \le \nu_i ( [\btZ x^*]_j  - \alpha) +| \nu_i \rho'|.
\]
 
Note that $\E[ \nu_i ( [\btZ x^*]_j  - \alpha) | x^* ] = 0$, so 
\[
  \abr{ \E[\nu_i x_j \mid x^*] } =  \left| \E \left[ \nu_i \phi_{\alpha} \left([\btZ x^*]_j + \txi_j \right) \mid x^*  \right] \right| \le \E[| \nu_i \rho'| | x^* ]\le \cnoise \rho'.
\]
\end{itemize}

Putting case (a) and case (b) together, we have
\begin{align*}
  \abr{ \E[\nu_i x_j \mid x^*] } 
	& \le  \cnoise \rho' \Pr\cbr{ [\btZ x^*]_j > \alpha - \rho' }
	\le  \cnoise \rho' \Pr\cbr{ \abr{ [\btZ x^*]_j } > \alpha - \rho' }.
\end{align*}
By definition of $\btZ$ and the assumptions of the lemma on $\bSigma$ and $\bE$, 
\begin{align} \label{eqn:noise:2}
  \abr{[\btZ x^*]_j} \le (1 + \| \bA^{\dagger} \bN \|_{\infty}) \abr{ [\bZ x^*]_j } \le (1 + \| \bA^{\dagger} \bN \|_{\infty}) \abr{\bZ}_\infty x^*_j  \le \frac{1 + \| \bA^{\dagger} \bN \|_{\infty}}{1 - 2 \ell} x^*_j. 
\end{align}
Then 
\begin{align*}
  \Pr\cbr{ \abr{ [\btZ x^*]_j } > \alpha - \rho' } 
	& \le \frac{\E \abr{ [\btZ x^*]_j } } {\alpha - \rho'}
	\le \frac{\xexpc (1 + \| \bA^{\dagger} \bN \|_{\infty}) }{(1 - 2 \ell)n (\alpha - \rho')}.
\end{align*}
The lemma then follows from \eqnref{eqn:noise:1} and \eqnref{eqn:noise:2}.
\end{proof}

There are three terms $\forder$, $\gorder$ and $\xi$ in the above lemmas that need to be bounded.  Since $\forder = \gorder + \bSigma^{-1}$, we only need to bound $\gorder$ and $\xi$ in the following two lemmas, respectively.

\begin{lem}[Bound on $\gorder$] \label{lem:higher_order}
Suppose $\sym{\bE} < \ell_e$ and $ \bSigma \succeq (1 - \ell)\bI$.
Then 
\begin{flalign*}
  (1)~ & \sym{\posmatrix{\gorder}} \le \frac{1 - \ell_e}{(1 - \ell)(1 - \ell_e - \ell)} \sym{ \bE_-  } +  \frac{\ell}{(1 - \ell)^2(1 - \ell_e - \ell)} \sym{\bE_+}, 
	\\
  (2)~ & \sym{\negmatrix{\gorder}} \le \frac{1 - \ell_e}{(1 - \ell)(1 - \ell_e - \ell)} \sym{ \bE_+  } +  \frac{\ell}{(1 - \ell)^2(1 - \ell_e - \ell)} \sym{\bE_-},
	\\
  (3)~ & \sym{\gorder} \le \frac{\ell_e (1-\ell_e)}{(1 - \ell)^2 (1 - \ell_e - \ell)}, 
	\\
	(4)~ & \abr{\gorder_{i,i}} \le \frac{\ell \ell_e}{(1 - \ell)^2(1 - \ell_e - \ell)} , ~~\forall i \in [\ntopic]. &&
\end{flalign*}
\end{lem}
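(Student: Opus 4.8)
The plan is to start from the series representation $\gorder = \bSigma^{-1}\sum_{k\ge 1}(-\bE\bSigma^{-1})^k$ already recorded in the preliminaries, and to track the sign pattern of each term in the sum. Write $\bSigma^{-1} = \bD$, which is diagonal with $\bD \preceq \frac{1}{1-\ell}\bI$ and $\bD \succeq \bI$ (entrywise positive), so $\|\bD\|_1 = \|\bD\|_\infty \le \frac{1}{1-\ell}$ and $\sym{\bD}\le \frac1{1-\ell}$. The key observation is that $\bE$ is off-diagonal, so a product $(\bE\bD)^k$ is governed by whether each factor contributes from $\bE_+$ or $\bE_-$; expanding $\bE = \bE_+ - \bE_-$ inside $(-\bE\bD)^k = (-1)^k(\bE\bD)^k$ gives a sum of $2^k$ terms, each a product of $k$ matrices from $\{\bE_+\bD,\bE_-\bD\}$ with a global sign. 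A term ends up contributing to $\posmatrix{\gorder}$ (resp. $\negmatrix{\gorder}$) depending only on the parity of the number of $\bE_-$-factors together with the sign $(-1)^k$; all the magnitudes are controlled submultiplicatively by $\sym{\cdot}$.

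Concretely, for part (1) I would bound $\sym{\posmatrix{\gorder}}$ by summing $\sym{\bD}$ times the $\sym$-norms of all monomials $\bD M_1 \cdots M_k$, $M_i \in \{\bE_+\bD, \bE_-\bD\}$, whose overall sign is positive. Using $\sym{\bE_+\bD},\sym{\bE_-\bD} \le \frac{\ell_e}{1-\ell}$ and $\sym{\bE_+}+\sym{\bE_-} = \sym{\bE}$ (since $\bE_+,\bE_-$ have disjoint supports) — actually more carefully, for fixed word length $k$ the positive-sign monomials with at least one $\bE_+$ factor sum, after factoring out one $\bE_+\bD$ (resp. $\bE_-\bD$) from the left or tracking the first factor, into a geometric-type series. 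The cleanest bookkeeping: let $P_k = \sum \sym{(\text{sign-}+\text{ monomials of length }k)}$ and $Q_k$ the sign-$-$ analogue; one shows a recursion $P_{k} \le \frac{\sym{\bE_-}}{1-\ell}P_{k-1} + \frac{\sym{\bE_+}}{1-\ell}Q_{k-1}$ and symmetrically for $Q_k$, with $P_0 = 1, Q_0 = 0$, because appending $\bE_-$ preserves sign-parity-times-$(-1)$ class while $\bE_+$ flips it. Summing these coupled geometric recursions over $k\ge 1$ (with the extra left factor $\bD$) and using $\sym{\bE_+},\sym{\bE_-} \le \ell_e$, $\sym{\bE_+}+\sym{\bE_-}\le$ (actually just $\le \ell_e$ each, and their sum $\le \ell_e$... here $\ell$ appears from $\sym{\bD}\le\frac1{1-\ell}$ and a second $\bD$) yields exactly the stated bound with denominators $(1-\ell)(1-\ell_e-\ell)$ and $(1-\ell)^2(1-\ell_e-\ell)$: the first comes from the leading $\bD$ and the resolved geometric series with ratio $\frac{\ell_e}{1-\ell}$, the extra $(1-\ell)$ in the cross term from the intermediate $\bD$ that multiplies an $\bE_+$. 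Part (2) is identical with $\bE_+ \leftrightarrow \bE_-$ swapped. Part (3) follows from (1)+(2) since $\sym{\gorder} \le \sym{\posmatrix{\gorder}} + \sym{\negmatrix{\gorder}}$ and $\sym{\bE_+}+\sym{\bE_-}\le \sym{\bE} < \ell_e$, collapsing the two bounds into $\frac{\ell_e(1-\ell_e)}{(1-\ell)^2(1-\ell_e-\ell)}$ after combining the coefficients.

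Part (4) needs a slightly finer argument: $\gorder_{i,i} = [\bD\sum_{k\ge1}(-\bE\bD)^k]_{i,i}$, and since $\bD$ is diagonal, $\gorder_{i,i} = \bD_{i,i}[\sum_{k\ge1}(-\bE\bD)^k]_{i,i}$. Because $\bE$ is off-diagonal, any monomial $(\bE\bD)^k$ has zero diagonal for $k=1$, and for $k\ge 2$ its $(i,i)$ entry is a sum over closed walks of length $k$ that must leave $i$ and return, i.e. uses at least two off-diagonal $\bE$-steps; bounding $|[(\bE\bD)^k]_{i,i}| \le \sym{\bE\bD}^{k-2}\cdot(\text{one }\bE\bD\text{ step out})\cdot(\text{one }\bD\text{-weighted }\bE\text{ step back})$ and summing the geometric tail from $k=2$ gives a factor $\ell_e^2$ worth of smallness, but we want only $\ell\cdot\ell_e$; the improvement to $\ell$ (rather than $\ell_e$) on one of the two factors comes from noting the returning step's magnitude, when measured appropriately through the diagonal, is governed by $\sym{\bD - \bI}$-type quantity of order $\frac{\ell}{1-\ell}$ rather than $\ell_e$ — more precisely one extracts $|\gorder_{i,i}|\le \frac{1}{1-\ell}\cdot\frac{\ell_e}{1-\ell}\cdot\frac{1}{1-\ell_e-\ell}\cdot(\text{factor }\le \ell)$ by a careful tracking of which sub-block carries the $\bE_+$ vs. $\bE_-$ contribution to the diagonal. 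The main obstacle I anticipate is exactly this sign-parity bookkeeping in the monomial expansion — getting the two coupled geometric series set up so that the $(1-\ell)$ versus $(1-\ell)^2$ denominators and the $\ell$ versus $1-\ell_e$ numerators come out matching the claimed constants, rather than a cruder bound. Everything else is routine submultiplicativity of the induced norms (Property: induced norm of a matrix) plus summing geometric series, which converge since $\sym{\bE\bD}\le \frac{\ell_e}{1-\ell} < 1$ under the hypothesis $\ell_e + \ell < 1$ implicit in the denominators.
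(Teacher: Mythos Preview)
Your approach is far more complicated than needed and contains concrete errors in parts (3) and (4). The paper's proof does \emph{not} track sign parity through all $2^k$ monomials; it simply splits off the first-order term:
\[
\gorder \;=\; -\bSigma^{-1}\bE\bSigma^{-1} \;+\; \horder, \qquad \horder := \bSigma^{-1}\sum_{k\ge 2}(-\bE\bSigma^{-1})^k.
\]
Since $\bSigma^{-1}$ is positive diagonal, $[-\bSigma^{-1}\bE\bSigma^{-1}]_+ = \bSigma^{-1}\bE_-\bSigma^{-1}$, giving $\sym{[-\bSigma^{-1}\bE\bSigma^{-1}]_+}\le \frac{1}{(1-\ell)^2}\sym{\bE_-}$. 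The tail $\horder$ is bounded with \emph{no} sign tracking at all, just submultiplicativity and a geometric series, then split via $\sym{\bE}\le \sym{\bE_+}+\sym{\bE_-}$. Adding the two pieces and simplifying yields (1); (2) is symmetric; (3) is the same split without separating positive/negative parts. For (4), since $\bE$ is off-diagonal the first-order term has zero diagonal, so $|\gorder_{i,i}| = |\horder_{i,i}| \le \sym{\horder}$; the extra small factor you are hunting for comes directly from the tail bound on $\horder$ (which starts at $k=2$), not from any ``$\bD-\bI$'' quantity --- there is no such term in the expansion.

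Your specific errors: for (3), you assert $\sym{\bE_+}+\sym{\bE_-}\le \sym{\bE}$ ``since $\bE_+,\bE_-$ have disjoint supports,'' but the inequality goes the other way. Disjoint supports give $\|[\bE_+]_j\|_1 + \|[\bE_-]_j\|_1 = \|[\bE]_j\|_1$ for each fixed column $j$, but the maxima over $j$ defining the induced norms need not be attained at the same column (take $\bE = \begin{pmatrix}0&1\\-1&0\end{pmatrix}$, where all three sym-norms equal $1$). Thus (3) does not follow from (1)+(2) with the stated constant; the paper proves it directly. For (4), your claim that the improvement from $\ell_e$ to $\ell$ comes from a $\sym{\bD-\bI}$-type quantity is incorrect: $\bSigma^{-1}$ only ever appears as a multiplicative factor, never as $\bSigma^{-1}-\bI$. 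Your coupled-recursion plan for (1) and (2) could in principle be pushed through, but it buys nothing over the one-line first-order/tail split and, as you yourself anticipate, getting the constants to match would be delicate.
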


\begin{proof}[Proof of Lemma~\ref{lem:higher_order}]
Denote $\horder = \bSigma^{-1} \sum_{k = 2}^{\infty}( - \bE \bSigma^{-1})^k$, so that
\[
	\gorder = -\bSigma^{-1} \bE \bSigma^{-1} + \horder.
\]
The following bound on $\nbr{\horder}_1$ will be useful.   
\begin{align}
  \nbr{\horder}_1
	& \le  \nbr{ \bSigma^{-1} }_1 \sum_{k = 2}^{\infty} \nbr{ (\bE \bSigma^{-1})^k  }_1 
	\nonumber \\
	& \le   \nbr{ \bSigma^{-1} }_1 \sum_{k = 2}^{\infty} \nbr{ \bE \bSigma^{-1} }_1^k
	\nonumber \\
	& \le   \nbr{ \bSigma^{-1} }_1 \frac{ \nbr{ \bE \bSigma^{-1} }^2_1 }{  1- \nbr{ \bE \bSigma^{-1} }_1 }
	\nonumber \\
	& \le   \frac{1}{(1 - \ell)^3} \times \ell \times \frac{ \nbr{ \bE  }_1 }{ 1- \frac{\ell_e}{1 - \ell} } 
	\nonumber \\
	&\le   \frac{\ell}{(1 - \ell)^2(1 - \ell_e - \ell)} \nbr{\bE}_1. \label{eqn:v:1}
\end{align}

(1) We need to show the bound for both $\nbr{ \posmatrix{\gorder}}_1$ and $\nbr{\posmatrix{\gorder}}_\infty$. 
By definition of $\gorder$, for any $i$, 
\begin{align*}
  \nbr{ \posmatrix{\gorder}}_1 & =  \nbr{\sbr{ -\bSigma^{-1} \bE \bSigma^{-1}  + \horder}_+  }_1.
\end{align*}
Since for any $\bA$ and $\bB$, 
\[
  \nbr{ [\bA + \bB]_+ }_1 \le \nbr{ [\bA]_+}_1 + \nbr{[\bB]_+ }_1, \textnormal{~and~} \nbr{ [\bA]_+}_1 \le \nbr{\bA}_1,
\]
we have
\begin{align}
  \nbr{ \sbr{\posmatrix{\gorder}}_i}_1 
	& \le \nbr{ \sbr{-\bSigma^{-1} \bE \bSigma^{-1}  }_+ }_1  + \nbr{ \horder_+ }_1 
	\nonumber \\
	& \le \frac{1}{(1 - \ell)^2}\nbr{ \bE_-}_1 + \nbr{\horder }_1.  \label{eqn:v:2}
\end{align}
By \eqnref{eqn:v:1},	
\begin{align*}
  \nbr{\horder}_1 \le \frac{\ell}{(1 - \ell)^2(1 -  \ell_e - \ell)}\nbr{ \bE}_1 \le \frac{\ell}{(1 - \ell)^2(1 - \ell_e - \ell)}(\nbr{ \bE_-  }_1 + \nbr{\bE_+}_1). 
\end{align*}
Combined with \eqnref{eqn:v:2}, it implies
\begin{align*}
  \nbr{ \sbr{\posmatrix{\gorder}}_i}_1  
	& \le \frac{1 - \ell_e}{(1 - \ell)^2(1 - \ell_e - \ell)} \nbr{ \bE_-  }_1 + \frac{\ell}{(1 - \ell)^2 (1 - \ell_e - \ell)}\nbr{\bE_+}_1.
\end{align*}
Similarly, we have 
\begin{align*}
  \nbr{ \sbr{\posmatrix{\gorder}}^i}_1 & \le \frac{1 - \ell_e}{(1 - \ell)^2 (1 - \ell_e - \ell)}  \nbr{ \bE_-  }_\infty +  \frac{\ell}{(1 - \ell)^2(1 - \ell_e - \ell)}\nbr{\bE_+}_\infty.
\end{align*}

Putting things together we have
\begin{align*}
  \sym{\posmatrix{\gorder}} & \le \frac{1 - \ell_e}{(1 - \ell)(1 - \ell_e - \ell)} \sym{ \bE_-  } + \frac{\ell}{(1 - \ell)^2 (1 - \ell_e - \ell)} \sym{\bE_+}.
\end{align*}

(2) The argument for $\sym{\negmatrix{\gorder}} $ is similar to that for $\sym{\posmatrix{\gorder}} $. 

(3) We need to show the bound for both $\nbr{ \gorder}_1$ and $\nbr{\gorder}_\infty$. 
\begin{align*}
  \nbr{ \gorder}_1
	& \le \nbr{ -\bSigma^{-1} \bE \bSigma^{-1}  }_1  + \nbr{\horder }_1
	\\
	& \le \frac{\ell_e}{(1 - \ell)^2}  + \frac{\ell}{(1 - \ell)^2(1 - \ell_e - \ell)} \nbr{\bE}_1
  \\
	& \le \frac{\ell_e}{(1 - \ell)^2}  + \frac{\ell \ell_e}{(1 - \ell)^2(1 - \ell_e - \ell)} 
	\\
  & = \frac{\ell_e (1-\ell_e)}{(1 - \ell)^2 (1 - \ell_e - \ell)}
\end{align*}
where the second step is by \eqnref{eqn:v:1}. 

Similarly, $\nbr{ \gorder }_\infty  \le  \frac{\ell_e (1-\ell_e)}{(1 - \ell)^2 (1 - \ell_e - \ell)}$,  so $\sym{\gorder}  \le  \frac{\ell_e (1-\ell_e)}{(1 - \ell)^2 (1 - \ell_e - \ell)}$.

(4) Now consider $\gorder_{i,i}$. By definition of $\horder$. 
\begin{align*}
  \gorder_{i,i} & = \sbr{ -\bSigma^{-1} \bE \bSigma^{-1} }_{i,i} + \horder_{i,i}.
\end{align*}
Note that since $\bE_{i,i} = 0$, $\sbr{ -\bSigma^{-1} \bE \bSigma^{-1} }_{i,i} = 0$. 
Then 
\begin{align*}
  \abr{ \gorder_{i,i} } & = \abr{ \horder_{i,i} } 
	\\
	& \le \nbr{\horder}_1 
  \\
	& \le  \frac{\ell}{(1 - \ell)^2(1 - \ell_e - \ell)} \nbr{\bE}_1
  \\
	& \le  \frac{\ell \ell_e}{(1 - \ell)^2(1 - \ell_e - \ell)} 
\end{align*}
where the third step is by \eqnref{eqn:v:1}. This completes the proof.
\end{proof}

\begin{lem}[Bound on $\xi$] \label{lem:noise_term}
Suppose $\sym{\bE} < \ell \le 1/8$ and $ \bSigma \succeq (1-\ell)\bI$.
Then  for any $i \in [\ntopic]$,
\[
  \abr{\xi_i} \le  \gamma := \frac{ 1 }{1-2\ell} \nbr{ \bA^\dagger}_\infty  \nbr{\bN}_\infty + \cnoise \nbr{ \bA^\dagger}_\infty.
\]
If furthermore, $\|\bN\|_{\infty} \nbr{\rbr{\bAg}^\dagger}_\infty < 1/8$, then
\begin{align*}
  \nbr{ \bA^\dagger}_\infty & \le 2\nbr{ \rbr{\bAg}^\dagger}_\infty,  
	\\
	\gamma  & \le  3 \nbr{ (\bAg)^\dagger}_\infty \rbr{ \nbr{\bN}_\infty + \cnoise}.
\end{align*}

\end{lem}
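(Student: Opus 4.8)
The plan is to bound $\abr{\xi_i}$ by splitting $\xi$ into its two summands, and then, for the refined estimate, to bound $\nbr{\bA^\dagger}_\infty$ by exhibiting an explicit left inverse of $\bA$ assembled only from quantities whose $\infty$-norms are already controlled, namely $(\bAg)^\dagger$, $\bN$, and $(\bSigma+\bE)^{-1}$. Throughout, the only tools are submultiplicativity of $\nbr{\cdot}_\infty$ and Neumann series.

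First I would bound $\abr{\xi_i}$. Recalling $\xi = -\bA^\dagger\bN\forder x^* + \bA^\dagger\noise$ and $\bA^\dagger\bA=\bI$, we have $\abr{\xi_i}\le\abr{[\bA^\dagger\bN\forder x^*]_i}+\abr{[\bA^\dagger\noise]_i}$. Using $\abr{[\bM v]_i}\le\nbr{\bM}_\infty\nbr{v}_\infty$ for any matrix $\bM$ and vector $v$, together with $\nbr{x^*}_\infty\le1$ from Assumption (\textbf{A2}) and submultiplicativity, the first term is at most $\nbr{\bA^\dagger}_\infty\nbr{\bN}_\infty\nbr{\forder}_\infty$, and the second is at most $\nbr{\bA^\dagger}_\infty\nbr{\noise}_\infty\le\cnoise\nbr{\bA^\dagger}_\infty$ by the almost-sure noise bound. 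It then remains to show $\nbr{\forder}_\infty=\nbr{(\bSigma+\bE)^{-1}}_\infty\le\frac1{1-2\ell}$: writing $\bSigma+\bE=(\bI+\bE\bSigma^{-1})\bSigma$ as in Lemma~\ref{lem:decoding}, I would use $\nbr{\bSigma^{-1}}_\infty\le\frac1{1-\ell}$ (since $\bSigma$ is diagonal with $\bSigma\succeq(1-\ell)\bI$) to get $\nbr{\bE\bSigma^{-1}}_\infty\le\frac{\ell}{1-\ell}<1$ for $\ell\le1/8$, hence $\nbr{(\bI+\bE\bSigma^{-1})^{-1}}_\infty\le\frac{1-\ell}{1-2\ell}$ by a Neumann series, and multiply. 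Combining the three estimates gives $\abr{\xi_i}\le\gamma$ exactly as stated.

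Next, for the refined estimate I would assume $\nbr{\bN}_\infty\nbr{(\bAg)^\dagger}_\infty<1/8$ and write $(\bAg)^\dagger\bA=(\bSigma+\bE)+(\bAg)^\dagger\bN=(\bSigma+\bE)\big(\bI+\forder(\bAg)^\dagger\bN\big)$. Since $\nbr{\forder(\bAg)^\dagger\bN}_\infty\le\nbr{\forder}_\infty\nbr{(\bAg)^\dagger}_\infty\nbr{\bN}_\infty<\frac1{1-2\ell}\cdot\frac18\le\frac43\cdot\frac18<1$ (using $\ell\le1/8$), the matrix $(\bAg)^\dagger\bA$ is invertible; in particular $\bA$ has full column rank so $\bA^\dagger$ is well-defined, and $\big((\bAg)^\dagger\bA\big)^{-1}(\bAg)^\dagger$ is a left inverse of $\bA$. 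By minimality of $\nbr{\bA^\dagger}_\infty$ among left inverses, submultiplicativity, and a Neumann series for $\big(\bI+\forder(\bAg)^\dagger\bN\big)^{-1}$, I would conclude $\nbr{\bA^\dagger}_\infty\le\nbr{\big(\bI+\forder(\bAg)^\dagger\bN\big)^{-1}}_\infty\nbr{\forder}_\infty\nbr{(\bAg)^\dagger}_\infty\le\frac65\cdot\frac43\cdot\nbr{(\bAg)^\dagger}_\infty\le2\nbr{(\bAg)^\dagger}_\infty$. Plugging this and $\frac1{1-2\ell}\le\frac43$ into the definition of $\gamma$ gives $\gamma\le2\nbr{(\bAg)^\dagger}_\infty\big(\frac43\nbr{\bN}_\infty+\cnoise\big)\le3\nbr{(\bAg)^\dagger}_\infty\big(\nbr{\bN}_\infty+\cnoise\big)$.

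I do not expect a genuine obstacle here: the argument is entirely $\infty$-norm bookkeeping. The one point requiring care is choosing the left inverse so that it involves only $(\bAg)^\dagger$, $\bN$, and $(\bSigma+\bE)^{-1}$ — matrices whose $\infty$-norms we can bound — and then verifying that the hypotheses $\ell\le1/8$ and $\nbr{\bN}_\infty\nbr{(\bAg)^\dagger}_\infty<1/8$ keep every matrix appearing inside a Neumann series strictly below operator norm $1$, while tracking the accumulated constants so they collapse to the clean bounds $\nbr{\bA^\dagger}_\infty\le2\nbr{(\bAg)^\dagger}_\infty$ and $\gamma\le3\nbr{(\bAg)^\dagger}_\infty(\nbr{\bN}_\infty+\cnoise)$.
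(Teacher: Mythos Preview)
Your proposal is correct and follows essentially the same approach as the paper: bound $\abr{\xi_i}$ by splitting $\xi$ into its two summands and applying submultiplicativity, then construct the explicit left inverse $\big((\bAg)^\dagger\bA\big)^{-1}(\bAg)^\dagger$ and bound it via a Neumann series. The only cosmetic difference is that the paper (in its Lemma~\ref{lem:pseudo_inverse}) factors $(\bAg)^\dagger\bA=\bSigma+\bE+(\bAg)^\dagger\bN$ as $(\bI+\bE\bSigma^{-1}+(\bAg)^\dagger\bN\bSigma^{-1})\bSigma$ and pulls out $\bSigma$, whereas you factor it as $(\bSigma+\bE)(\bI+\forder(\bAg)^\dagger\bN)$ and pull out $\bSigma+\bE$; both routes yield the same constant $\tfrac{8}{5}\le 2$ in front of $\nbr{(\bAg)^\dagger}_\infty$.
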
 

\begin{proof}[Proof of Lemma~\ref{lem:noise_term}]
First, we have
\[
  \nbr{\xi}_\infty \le \nbr{ \bA^\dagger \bN \forder x^*}_\infty  + \nbr{ \bA^\dagger \noise}_\infty \le \nbr{ \bA^\dagger}_\infty  \nbr{\bN}_\infty  \nbr{\forder}_\infty \nbr{ x^*}_\infty  + \nbr{ \bA^\dagger}_\infty \nbr{\noise}_\infty.
\]

Note that $\nbr{ x^*}_\infty \le 1$ and $\nbr{\noise}_\infty \le \cnoise$.
Furthermore, 
\[
  \nbr{\forder}_\infty \le \frac{1}{1 - 2 \ell}.
\]

The first statement follows from combining these terms.

Now consider the second statement. We apply Lemma~\ref{lem:pseudo_inverse}. Since
\begin{align*}
  \zeta & = \|  \bE \bSigma^{-1} +(\bAg)^{\dagger} \bN \bSigma^{-1} \|_{\infty}  
	\\
	& \le \|  \bE \bSigma^{-1}\|_{\infty}  + \| (\bAg)^{\dagger} \bN \bSigma^{-1} \|_{\infty}  
	\\
	& \le \frac{1}{7}+ \| (\bAg)^{\dagger} \|_{\infty} \times  \|\bN\|_{\infty}  \times  \| \bSigma^{-1} \|_{\infty}  
	\\
	& \le \frac{2}{7},
\end{align*}
Lemma~\ref{lem:pseudo_inverse} implies that
\begin{align*}
  \| \bA^{\dagger} \|_{\infty} 
	& \le  \frac{ \| \bSigma^{-1} \|_{\infty}}{1 - \zeta} \| (\bAg)^{\dagger}\|_{\infty} 
   \le 2 \| (\bAg)^{\dagger}\|_{\infty} .
\end{align*}

Then $\gamma$ is bounded by 
\begin{align*}
  \gamma & = \frac{ 1 }{1-2\ell} \nbr{ \bA^\dagger}_\infty  \nbr{\bN}_\infty + \cnoise \nbr{ \bA^\dagger}_\infty
	\\
  & \le \frac{ 1 }{1-2\ell} \times \left(2  \| (\bAg)^{\dagger}\|_{\infty}  \right) \times \nbr{\bN}_{\infty } + \cnoise \times  \rbr{ 2 \| (\bAg)^{\dagger}\|_{\infty} } 
	\\
  & \le 3 \nbr{ (\bAg)^\dagger}_\infty \rbr{ \nbr{\bN}_\infty + \cnoise}. \qedhere
\end{align*}
\end{proof}

The following is the lemma about the norm of the pseudo-inverse, which is used in Lemma~\ref{lem:noise_term}.

\begin{lem}[Pseudo-inverse]\label{lem:pseudo_inverse}
Let $\bAg, \bN \in \Real^{m \times n}$ be two matrices with $m \ge n$. 
Let $(\bAg)^{\dagger}$ be one pseudo-inverse of $\bAg$ such that $(\bAg)^{\dagger}\bAg  = \bI$. 
Let $\bA = \bAg(\bSigma + \bE) + \bN$ be another matrix, with $\bSigma$ being diagonal and 
\[
  \zeta := \|  \bE \bSigma^{-1} +(\bAg)^{\dagger} \bN \bSigma^{-1} \|_{\infty}  .
\]
satisfies $\zeta < 1$.
Then there exists a pseudo-inverse $\bA^{\dagger}$ of $\bA$ such that $\bA^{\dagger} \bA = \bI$ and 
\[
  \| \bA^{\dagger} \|_{\infty} \le  \frac{ \| \bSigma^{-1} \|_{\infty}}{1 - \zeta} \| (\bAg)^{\dagger}\|_{\infty}.
\]
\end{lem}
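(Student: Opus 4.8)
The plan is to write down an explicit formula for a left inverse of $\bA$ and then bound its induced $\ell_\infty$ norm by a Neumann series. Set $\bM := \bE\bSigma^{-1} + (\bAg)^{\dagger}\bN\bSigma^{-1}$, so that $\nbr{\bM}_\infty = \zeta < 1$, and propose the candidate
\[
  \bA^{\dagger} := \bSigma^{-1}(\bI + \bM)^{-1}(\bAg)^{\dagger}.
\]
Since $\nbr{\bM}_\infty = \zeta < 1$, the matrix $\bI + \bM$ is invertible with $(\bI+\bM)^{-1} = \sum_{k \ge 0}(-\bM)^k$, so $\bA^{\dagger}$ is well defined.

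The first step is to verify $\bA^{\dagger}\bA = \bI$. The key algebraic identity is $(\bI + \bM)\bSigma = \bSigma + \bE + (\bAg)^{\dagger}\bN$, which follows directly from the definition of $\bM$ and $\bSigma^{-1}\bSigma = \bI$. Using $(\bAg)^{\dagger}\bAg = \bI$ we then compute
\[
  \bA^{\dagger}\bA = \bSigma^{-1}(\bI+\bM)^{-1}(\bAg)^{\dagger}\big[\bAg(\bSigma+\bE) + \bN\big]
  = \bSigma^{-1}(\bI+\bM)^{-1}\big[(\bSigma+\bE) + (\bAg)^{\dagger}\bN\big]
  = \bSigma^{-1}(\bI+\bM)^{-1}(\bI+\bM)\bSigma = \bI,
\]
so $\bA^{\dagger}$ is indeed a valid pseudo-inverse in the sense required.

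The second step is the norm bound. From the Neumann series, $\nbr{(\bI+\bM)^{-1}}_\infty \le \sum_{k\ge 0}\nbr{\bM}_\infty^k = \frac{1}{1-\zeta}$. The induced $\ell_\infty$ norm is the maximum absolute row sum, so it is submultiplicative even for rectangular matrices; applying this to the product defining $\bA^{\dagger}$ gives
\[
  \nbr{\bA^{\dagger}}_\infty \le \nbr{\bSigma^{-1}}_\infty\,\nbr{(\bI+\bM)^{-1}}_\infty\,\nbr{(\bAg)^{\dagger}}_\infty \le \frac{\nbr{\bSigma^{-1}}_\infty}{1-\zeta}\,\nbr{(\bAg)^{\dagger}}_\infty,
\]
which is the claimed bound.

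I do not expect a serious obstacle here: once the ansatz $\bA^{\dagger} = \bSigma^{-1}(\bI+\bM)^{-1}(\bAg)^{\dagger}$ is in hand, the verification is a one-line manipulation and the bound is immediate from the Neumann series and submultiplicativity. The only ``creative'' step is guessing the right factored form of the left inverse — essentially recognizing that $\bA = \bAg\big[(\bSigma+\bE)+(\bAg)^{\dagger}\bN\big]$ modulo a term annihilated by $(\bAg)^{\dagger}$, and that $(\bSigma+\bE)+(\bAg)^{\dagger}\bN = (\bI+\bM)\bSigma$ is invertible under the hypothesis $\zeta<1$. One should also note in passing that $\bSigma$ is implicitly invertible (it appears as $\bSigma^{-1}$ already in the definition of $\zeta$), so no extra nondegeneracy assumption is needed.
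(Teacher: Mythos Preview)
Your proposal is correct and essentially identical to the paper's proof: the paper also constructs the explicit left inverse $(\bSigma + \bE + (\bAg)^{\dagger}\bN)^{-1}(\bAg)^{\dagger}$, which after factoring is exactly your $\bSigma^{-1}(\bI+\bM)^{-1}(\bAg)^{\dagger}$, verifies $\bA^{\dagger}\bA = \bI$, and bounds the norm via the same Neumann series and submultiplicativity.
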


\begin{proof}[Proof of Lemma \ref{lem:pseudo_inverse}]
Consider the matrix 
\begin{align*}
  \bA^{\dagger} & = (\bSigma + \bE + (\bAg)^{\dagger} \bN)^{-1} (\bAg)^{\dagger}.
\end{align*}
Then by definition, 
\begin{align*}
  \bA^{\dagger} \bA 
	& =  (\bSigma + \bE + (\bAg)^{\dagger} \bN)^{-1} (\bAg)^{\dagger} \left( \bAg(\bSigma + \bE) + \bN \right)  
	\\
	& =  (\bSigma + \bE + (\bAg)^{\dagger} \bN)^{-1}  (\bSigma + \bE + (\bAg)^{\dagger} \bN) 
	\\
	& = \bI.
\end{align*}
What remains is to bound $\|\bA^{\dagger}\|_{\infty}$. We have 
\begin{align*}
  \|\bA^{\dagger}\|_{\infty} & \le \| (\bSigma + \bE + (\bAg)^{\dagger} \bN)^{-1} \|_{\infty} \| (\bAg)^{\dagger} \|_{\infty}.
\end{align*}
By Taylor expansion rule, the first term on the right-hand side is
\begin{align*}
 (\bSigma + \bE + (\bAg)^{\dagger} \bN)^{-1} 
&= \left(\left(\bI + \bE \bSigma^{-1} +  (\bAg)^{\dagger} \bN \bSigma^{-1}\right) \bSigma\right)^{-1}
 \\
 &= \bSigma^{-1} \left(\bI + \bE \bSigma^{-1} +  (\bAg)^{\dagger} \bN \bSigma^{-1}\right)^{-1}
 \\
 &= \sum_{i = 0}^{\infty} \bSigma^{-1}\left(- \bE \bSigma^{-1} -  (\bAg)^{\dagger} \bN \bSigma^{-1}\right)^i
\end{align*}
where we use the assumption that $\|  \bE \bSigma^{-1} +(\bAg)^{\dagger} \bN \bSigma^{-1} \|_{\infty}  = \zeta  < 1$.
Therefore, 
\begin{align*}
  \| (\bSigma + \bE + (\bAg)^{\dagger} \bN)^{-1}  \|_{\infty} 
	& \le \| \bSigma^{-1} \|_{\infty} \sum_{i = 0}^{\infty} \zeta^i 
	 =  \frac{ \| \bSigma^{-1} \|_{\infty}}{1 - \zeta}. \qedhere
\end{align*} 
\end{proof}
 
\subsection{Putting things together}
We are now ready to prove our main theorems.

\purificationadv*

\begin{proof}[Proof of Theorem~\ref{thm:main_correlated_noise}]
We consider the following set of parameters
\[
  \alpha = \frac{c_2}{80 C_1},  r = \frac{n}{c_2}, \eta = \frac{\ell}{6}.
\]
Furthermore, set $\rho = B_1 \frac{c_2^2 c}{ C_1^3}$ for a sufficiently small absolute constant $B_1$. Since $C_1 \ge n \E[x_i^*] \ge n\E[(x^*_i)^2] \ge c_2$, this is small enough so that
\[
  \rho \le \min\cbr{\frac{\alpha}{2}, \frac{c_2 \alpha}{2048 C_1}, \frac{c_2 \alpha}{8000\times 100 C_1^2}, \frac{c c_2 \alpha}{48000 C_1^2} }
\]
which will be used in the proof. 
The proof also needs
$
  C_1^2 \le B_1 c_2 n, C_1^3 \le B_2 c^2_2 n
$
for sufficiently small absolute constants $B_1$ and $B_2$. Since $C_1 > c_2$, we only need
$
  C_1^3 \le \abscc c^2_2 n.
$
Similarly, we need 
\begin{align*}
  \cnoise \le B_1 \min\cbr{\frac{c (\alpha - \rho) c_2}{m C_1}, \frac{(\alpha - \rho)c_2}{n C_1 \nbr{(\bAg)^\dagger}_\infty }, \frac{(\alpha - \rho)c_2 \rho}{n C_1 \nbr{(\bAg)^\dagger}_\infty}, \frac{\rho}{\| (\bAg)^{\dagger} \|_{\infty} }}
\end{align*}
for a sufficiently small absolute constant $B_1$. 
This can be satisfied by setting $\abscc$ small enough in the theorem assumption. 

After setting the parameters needed, we now prove the theorem. We prove it by proving the following three claims by induction on $t$: at the beginning of iteration $t$, 
\begin{itemize}
\item[(1)] $(1 - \ell)\bI \preceq \titime{\bSigma}{t} $,
\item[(2)]  $\sym{\titime{\bE}{t}} \le  \frac{1}{8} $, and if $t > 0$
\[
  \sym{ \pt{\bE} } + \beta \sym{ \nt{\bE} } \le  \rbr{1-\frac{1}{25}\eta }  \rbr{ \sym{ \sbr{\bE}_{+}^{(t-1)} } + \beta \sym{ \sbr{\bE}_{-}^{(t-1)} } } + \frac{c}{10},
\]
for $\beta = \frac{\sqrt{84^2 + 2800} - 84}{2} \in (1, 8)$,
\item[(3)] $\nbr{\titime{\bN}{t}}_{\infty} \le \frac{1}{ 8 \nbr{(\bAg)^\dagger}_\infty}$, and $\| \xi^{(t)} \|_{\infty} \le \rho$.
\end{itemize}

Claim (1) and (2) are clearly true at $t = 0$ by the assumption on initialization. The first part of Claim (3) is true because of the assumption that $\nbr{\titime{\bN}{0}}_{\infty} \le \frac{\abscc c}{ 8 \mu^3 \nbr{(\bAg)^\dagger}_\infty}$ and that $\mu = C_1/c_2 \ge 1$. 
Then the second part follows from Lemma~\ref{lem:noise_term}.

Now we assume they are true up to $t$, and show them for $t+1$.

(1) First consider the diagonal terms.
 Combining Lemma~\ref{lem:update_diag} and Lemma~\ref{lem:higher_order}, we have
\begin{align*}
\btSigma^{(t)}_{i, i }  
& \ge  \E\sbr{\rbr{x_i^*}^2} \rbr{ 2 (\bSigma_{i, i}^{(t)})^{-1} - 2 \abr{\gorder^{(t)}_{i,i} } } - \frac{2\xexpc}{\ntopic} \rbr{  \alpha +2\nthres  + \frac{\xexpc}{\ntopic} (\bSigma_{i, i}^{(t)})^{-1} + \frac{2\xexpc}{\ntopic}  \nbr{ \sbr{ \gorder^{(t)}}^i  }_1   }.
\\
& \ge  \frac{2\xsu}{\ntopic} \rbr{  0  -     \frac{\ell^2}{(1 - 2\ell)(1 - \ell)^2}  }  - \frac{2\xexpc}{\ntopic} \rbr{  \alpha + \alpha  + \frac{\xexpc}{\ntopic} \frac{1}{1-\ell} + \frac{2\xexpc}{\ntopic} \frac{\ell}{(1 - \ell)(1 - 2 \ell)}    }.
\\
& =  \frac{2\xsu}{\ntopic} \rbr{  0  -     \frac{\ell^2}{(1 - 2\ell)(1 - \ell)^2}  }   -  \frac{2\xexpc}{\ntopic} \left( 2 \alpha  + \frac{\xexpc}{\ntopic (1 - \ell)(1 - 2 \ell)} \right)
\\
& >  - \frac{\xsl}{5 \ntopic}.
\end{align*}
The first inequality uses $\rho < \alpha/2$ and the last inequality is due to $\alpha \le \frac{\xsl}{80 \xexpc}$ and $\xexpc^2 \le \frac{c_2\ntopic}{80}$.
Therefore,
\[
  \titime{\bSigma}{t+1}_{i,i} = (1-\eta) \titime{\bSigma}{t}_{i,i} + \eta \etaratio \titime{\btSigma}{t}_{i,i} \ge (1-\eta) \titime{\bSigma}{t}_{i,i} - \frac{\eta}{5}.
\]

Assume for contradiction $\titime{\bSigma}{t+1}_{i,i} < 1-\ell$. Then by the above inequality,
\[
   1-\ell > \titime{\bSigma}{t+1}_{i,i} \ge (1-\eta) \titime{\bSigma}{t}_{i,i} - \frac{\eta}{5}.
\] 
which implies $\titime{\bSigma}{t}_{i,i} \le 1 - \ell + 2\eta$. 
In this case, by Lemma~\ref{lem:update_diag} and Lemma~\ref{lem:higher_order},
\begin{align*}
\btSigma^{(t)}_{i, i }  & \ge \E\sbr{\rbr{x_i^*}^2} \rbr{ 2 (\bSigma_{i, i}^{(t)})^{-1} - 2 \abr{\gorder^{(t)}_{i,i} } } - \frac{2\xexpc}{\ntopic} \rbr{  \alpha +2\nthres  + \frac{\xexpc}{\ntopic} (\bSigma_{i, i}^{(t)})^{-1} + \frac{2\xexpc}{\ntopic}  \nbr{ \sbr{ \gorder^{(t)}}^i  }_1   }.
\\
& \ge  \frac{2\xsl}{\ntopic}\rbr{ \frac{1}{1 - \ell + 2\eta}  -   \frac{\ell^2}{(1 - 2\ell)(1 - \ell)^2} }   - \frac{2\xexpc}{\ntopic} \left( 2 \alpha  + \frac{\xexpc}{\ntopic (1 - \ell)(1 - 2 \ell)} \right)
\\
& >  \frac{\xsl}{ \ntopic}.
\end{align*}
Then 
\[ 
  \titime{\bSigma}{t+1}_{i,i} = (1-\eta) \titime{\bSigma}{t}_{i,i} + \eta \etaratio \titime{\btSigma}{t}_{i,i}  = (1-\eta) \titime{\bSigma}{t}_{i,i} + \eta > \titime{\bSigma}{t}_{i,i},
\]
which is a contradiction. Therefore, $(1 - \ell)\bI \preceq \titime{\bSigma}{t} $.

%

(2) Now consider the off-diagonal terms. We shall split them into the positive part and the negative part. 
By the update rule, for any $i \in [\ntopic]$,
\begin{align*}
  \nbr{ \sbr{  \pto{\bE} }_i }_1  & \le (1 - \eta) \nbr{  \sbr{  \pt{\bE} }_i }_1 +  \eta \etaratio \nbr{ \sbr{\pt{\btE} }_i }_1.
\end{align*}
Recall the notations 
\begin{align*}
  \titime{\forder}{t} & = (\titime{\bSigma}{t} + \titime{\bE}{t})^{-1} = (\titime{\bSigma}{t})^{-1} + \titime{\gorder}{t},
\\
  \titime{\gorder}{t} & = (\titime{\bSigma}{t})^{-1} \sum_{k=1}^\infty (-\titime{\bE}{t} (\titime{\bSigma}{t})^{-1} )^k
\end{align*}
By Lemma~\ref{lem:update_E}, we have 
\begin{align*}
  \nbr{ \sbr{\pt{\btE} }_i }_1 
	& \le  
	\underbrace{
		\sum_{j\neq i} \frac{4 \xexpc^2  
	}{\ntopic^2 (\alpha - \nthres)} \nbr{ \sbr{ \titime{\forder}{t}}^i }_1 \rbr{  \abr{ \sbr{ \nt{\forder} }_{i, j} }  + \nthres} 
	}_{T1}
	\\
	 &\quad + 	\underbrace{
	\sum_{j\neq i}  \frac{8\xexpc \nthres}{\ntopic (\alpha - \nthres)} \rbr{ \frac{\xexpc  }{n} \nbr{ \sbr{ \titime{\forder}{t}}^i }_1 	+ \abr{ \sbr{\pt{\forder} }_{i, j} }   } 
	}_{T2}
	\\
	&\quad + \underbrace{ 
	\sum_{j\neq i}  2 \E[(x_j^*)^2] \abr{ \sbr{ \pt{\forder} }_{i, j} }
	}_{T3}.
\end{align*}

First, by Lemma~\ref{lem:higher_order},
\[
  \nbr{ \sbr{ \titime{\forder}{t}}^i }_1 \le  \sbr{\rbr{\titime{\bSigma}{t}}^{-1}}_{i,i}  + \nbr{ \sbr{\titime{\gorder}{t} }^i }_1 \le\frac{1}{1 - 2 \ell} 
\]
Now consider $\pt{\forder}$ and $\nt{\forder}$.
We have 
\[
  \sum_{j: j \ne i}  \abr{ \sbr{ \nt{\forder} }_{i, j} }  \le  \nbr{ \sbr{\nt{\gorder}}_i }_1,~~
  \sum_{j: j \ne i}  \abr{ \sbr{ \pt{\forder} }_{i, j} }  \le  \nbr{ \sbr{\pt{\gorder}}_i }_1.
\]

Therefore,
\begin{align*}
T1 & \le \frac{8 \xexpc^2}{\ntopic^2 (\alpha - \rho)}  \nbr{ \sbr{\nt{\gorder}}_i }_1  + \frac{8 \xexpc^2 \rho}{\ntopic (\alpha - \rho)},
	\\
	 T2 & \le  \frac{16 \xexpc^2 \rho}{\ntopic (\alpha - \rho)}+  \frac{8 \xexpc \rho}{\ntopic (\alpha - \rho)} \nbr{ \sbr{\pt{\gorder}}_i }_1,
	\\
	T3 & \le  \frac{2\xsu}{n} \nbr{ \sbr{\pt{\gorder}}_i }_1.
\end{align*}
and thus we have 
\begin{align*}
  \nbr{ \sbr{\pt{\btE} }_i }_1  & \le \frac{8 \xexpc^2}{\ntopic^2 (\alpha - \rho)}  \nbr{ \sbr{\nt{\gorder}}_i }_1 + \rbr{ \frac{2\xsu}{n} + \frac{8 \xexpc \rho}{\ntopic (\alpha - \rho)} } \nbr{ \sbr{\pt{\gorder}}_i }_1 + \frac{24 \xexpc^2 \rho}{\ntopic (\alpha - \rho)}. 
\end{align*}
Similarly, for any $i \in [\ntopic]$, 
\begin{align*}
 \nbr{ \sbr{\pt{\btE} }^i }_1 & \le \frac{8 \xexpc^2}{\ntopic^2 (\alpha - \rho)}  \nbr{ \sbr{\nt{\gorder}}^i }_1 + \rbr{ \frac{2\xsu}{n} + \frac{8 \xexpc \rho}{\ntopic (\alpha - \rho)} } \nbr{ \sbr{\pt{\gorder}}^i }_1 + \frac{24 \xexpc^2 \rho}{\ntopic (\alpha - \rho)}. 
\end{align*}

Putting the two together, we have
\begin{align}
  \sym{\pt{\btE} } & \le   \frac{8 \xexpc^2}{\ntopic^2 (\alpha - \rho)}  \sym{\nt{\gorder} } +  \rbr{ \frac{2\xsu}{n} + \frac{8 \xexpc \rho}{\ntopic (\alpha - \rho)} } \sym{\pt{\gorder} } +\frac{24 \xexpc^2 \rho}{\ntopic (\alpha - \rho)}. \label{eqn:main11}
\end{align}

By Lemma~\ref{lem:higher_order} and $\ell \le \frac{1}{8}$, we have:
\begin{align*}
  \sym{\pt{\gorder}} & \le \frac{32}{21} \sym{ \nt{\bE}  } +  \frac{32}{147} \sym{ \pt{\bE}}, \\
  \sym{\nt{\gorder}} & \le \frac{32}{21} \sym{ \pt{\bE}  } +  \frac{32}{147} \sym{\nt{\bE}}
\end{align*}

So \eqnref{eqn:main11} becomes
\begin{align}
  \sym{\pt{\btE} } & \le \rbr{ \frac{64 \xsu}{147 \ntopic}  + \frac{256C_1 \rho}{147 n (\alpha - \rho)} +  \frac{256 C_1^2}{21n^2 (\alpha - \rho)} } \sym{ \pt{\bE}  }  
	\\
  & \quad + \rbr{ \frac{64 \xsu}{21 \ntopic}  + \frac{256C_1 \rho}{21 n (\alpha - \rho)}   + \frac{256 C_1^2}{147n^2 (\alpha - \rho)} } \sym{ \nt{\bE}  } +\frac{24 \xexpc^2 \rho}{\ntopic (\alpha - \rho)}.  \label{eqn:main1}
\end{align} 

Now consider the negative part. The same argument as above leads to
\begin{align}
  \sym{\nt{\btE} } & \le \rbr{ \frac{64 C_1^2}{147 \ntopic^2}  + \frac{256C_1 \rho}{147 n (\alpha - \rho)} +  \frac{256 C_1^2}{21n^2 (\alpha - \rho)} } \sym{ \pt{\bE}  }  
	\nonumber \\
  & \quad + \rbr{ \frac{64 C_1^2}{21 \ntopic^2}  + \frac{256C_1 \rho}{21 n (\alpha - \rho)}   + \frac{256 C_1^2}{147n^2 (\alpha - \rho)} } \sym{ \nt{\bE}  } +\frac{24 \xexpc^2 \rho}{\ntopic (\alpha - \rho)}. 
 \label{eqn:main2}
\end{align}
Note the difference between \eqnref{eqn:main1} and \eqnref{eqn:main2}:  $\frac{\xsu}{n}$ in the former is replaced by $\frac{\xexpc^2}{n^2}$ in the latter, which is much smaller. This is crucial for our proof, which will be clear below.

For simplicity, we introduce the following notations:
\[
  a_{t} : = \sym{ \pt{\bE} }, ~~b_t := \sym{ \nt{\bE} }.
\]
Then by the update rule, we have
\begin{align*}
  a_{t+1} &  \le  (1-\eta) a_{t}  + \eta \etaratio  \sym{ \pt{\btE} },
\\
  b_{t+1} &  \le  (1-\eta) b_{t} + \eta \etaratio \sym{ \nt{\btE} }.
\end{align*}
Plugging in \eqnref{eqn:main1}and since $\etaratio = \frac{n}{c_2} \le \frac{2n}{C_2}$, we have
\begin{align*}  
  a_{t+1} &  \le  (1-\eta) a_{t}  + \eta \frac{2n}{C_2}  \rbr{ \frac{64 \xsu}{147 \ntopic}  + \frac{256C_1 \rho}{147 n (\alpha - \rho)} +  \frac{256 C_1^2}{21n^2 (\alpha - \rho)} } a_t  
	\\
  & \quad + \eta \frac{2n}{C_2} \rbr{ \frac{64 \xsu}{21 \ntopic}  + \frac{256C_1 \rho}{21 n (\alpha - \rho)}   + \frac{256 C_1^2}{147n^2 (\alpha - \rho)} } b_t  +  \eta \frac{2n}{C_2}\frac{24 \xexpc^2 \rho}{\ntopic (\alpha - \rho)}
	\\	
  b_{t+1} &  \le  (1-\eta) b_{t}  + \eta \frac{2n}{C_2}  \rbr{ \frac{64 C_1^2}{147 \ntopic^2}  + \frac{256C_1 \rho}{147 n (\alpha - \rho)} +  \frac{256 C_1^2}{21n^2 (\alpha - \rho)} } a_t
	\\
  & \quad + \eta \frac{2n}{C_2} \rbr{ \frac{64 C_1^2}{21 \ntopic^2}  + \frac{256C_1 \rho}{21 n (\alpha - \rho)}   + \frac{256 C_1^2}{147n^2 (\alpha - \rho)} } b_t  +  \eta \frac{2n}{C_2}\frac{24\xexpc^2 \rho}{\ntopic (\alpha - \rho)}.
\end{align*}
When $\frac{512 C_1 \rho}{C_2 (\alpha - \rho)} \le \frac{1}{2}$ and $\frac{512 C_1^2}{C_2 n (\alpha - \rho)} \le \frac{1}{14}$, 
\begin{align*}  
  a_{t+1} &  \le  (1-\eta) a_{t}  + \frac{129}{147}\eta a_t  + \frac{129}{21} \eta b_t + \eta\frac{48 \xexpc^2 \rho}{ C_2 (\alpha - \rho)} 
	\\
	& \le \rbr{1-\frac{18}{147}\eta  } a_{t}  + \frac{129}{21}  \eta b_t + \eta \frac{48 \xexpc^2 \rho}{ C_2(\alpha - \rho)}
\end{align*} 
Similarly, 
when $\frac{512 C_1 \rho}{C_2 (\alpha - \rho)} \le \frac{1}{2}$ and $\frac{512 C_1^2}{C_2 n (\alpha - \rho)} \le \frac{1}{14}$, and furthermore, $\frac{128 C_1^2}{C_2 n} \le \frac{1}{4}$, 
\begin{align*}  
  b_{t+1} &  \le  (1-\eta) b_{t}  + \frac{1}{100}\eta a_t  + \frac{1}{25} \eta b_t + \eta\frac{48 \xexpc^2 \rho}{ C_2 (\alpha - \rho)} 
	\\
	& \le \rbr{1-\frac{24}{25}\eta  } b_{t}  + \frac{1}{100}  \eta a_t + \eta \frac{48 \xexpc^2 \rho}{ C_2(\alpha - \rho)}
\end{align*} 
Let $h =  \frac{48 \xexpc^2 \rho}{ C_2(\alpha - \rho)} $, we then have:
\begin{align*}
  a_{t+1} &  \le  \rbr{1-\frac{3}{25}\eta  } a_{t}  +  7\eta b_t +\eta h,
	\\
  b_{t+1} &  \le    \rbr{1- \frac{24}{25}\eta  } b_{t}  +  \frac{1}{100}\eta a_t + \eta h.
\end{align*}

Now set $\beta = \frac{\sqrt{84^2 + 2800} - 84}{2}$, so that 
\begin{align*}
a_{t+1} + \beta b_{t+1} &  \le   \rbr{1-\frac{3}{25}\eta  } a_{t}  +  7\eta b_t + \eta h +  \rbr{ \beta  - \frac{24}{25}\eta  \beta   } b_{t}  +  \frac{ \beta }{100}\eta a_t  + \eta \beta h
\\
 & = \rbr{1-\frac{3}{25}\eta  + \frac{\beta}{100} \eta }  \rbr{ a_{t} + \beta b_{t} } + \eta (1 + \beta) h
\\
 & \le \rbr{1-\frac{1}{25}\eta }  \rbr{ a_{t} + \beta b_{t} } + 9 \eta h,
\end{align*}

where the last inequality follows from that $\beta < 8$.

Note that the recurrence is true up to $t+1$. 
Using Lemma~\ref{l:simplerec} to solve this recurrence, we obtain
\[
  a_t + b_t \le a_0 + b_0 + 250 h \le \frac{1}{10} + 250 h \le  \frac{1}{8}
\]
when $\frac{4000 C_1^2 \rho}{C_2 (\alpha - \rho)} \le \frac{1}{100}$.
Moreover, we know that
\[
  \sym{\titime{\bE}{t+1}} \le a_{t+1} + \beta b_{t+1} \le  \rbr{1-\frac{1}{25}\eta }^t + 250  h.
\]

(3) Finally, consider the noise term. 
Set the sample size $N$ to be large enough, so that by Lemma~\ref{lem:bound_error}, we have
\begin{align*}
  \abr{ \titime{\btN}{t}_{i,j} }
	& \le  \frac{4 \cnoise \xexpc }{(1 - 2 \times \ell)^2 n  (\alpha - \rho)} + \abr{ [\titime{\bN}{t}_s]_{i,j}} 
	\\
	& \le   \frac{8 \cnoise C_1}{n (\alpha - \rho)}.
\end{align*}
Then by the update rule, we have $ \abr{ \titime{\bN}{t+1}_{i,j} } \le \frac{8 \cnoise C_1}{(\alpha - \rho)c_2} $.  
Then 
\[
  \nbr{\titime{\bN}{t+1}}_{\infty} \le n \max_{i,j} \abr{ \titime{\bN}{t+1}_{i,j} } \le \frac{8 n\cnoise C_1}{(\alpha - \rho)c_2} \le \frac{1}{ 8 \nbr{(\bAg)^\dagger}_\infty}
\]
where the last inequality is due to 
\[ 
  \cnoise  \le \frac{(\alpha - \rho)c_2}{ 64n C_1\nbr{(\bAg)^\dagger}_\infty}.
\]
On the other hand, by Lemma \ref{lem:noise_term}, we have
\begin{align*}
  \| \xi^{(t + 1)} \|_{\infty} 
	& \le 3 \| (\bAg)^{\dagger} \|_{\infty} (\|\titime{\bN}{t + 1} \|_{\infty} + \cnoise ) 
	\\
	& \le  3 \| (\bAg)^{\dagger} \|_{\infty} \rbr{ \frac{8  n \cnoise C_1}{(\alpha - \rho)c_2} + \cnoise}\le \rho
\end{align*}
where the last inequality is due to 
\[ 
  \cnoise \le \frac{(\alpha -\rho) c_2 \rho}{48 n C_1 \| (\bAg)^{\dagger} \|_{\infty} }, \text{~and~}\cnoise \le \frac{\rho}{6 \| (\bAg)^{\dagger} \|_{\infty} }.
\]
We also have (which will be useful in proving the final bound)
\[
  \nbr{\titime{\bN}{t+1}}_{1} \le m \max_{i,j} \abr{ \titime{\bN}{t+1}_{i,j} } \le  \frac{8 m \cnoise C_1}{(\alpha - \rho)c_2}  \le  \frac{c}{10}
\]
where the last inequality is due to 
\[ 
  \cnoise  \le \frac{c (\alpha - \rho)c_2}{80 m C_1}.
\]

Now, we shall prove the theorem statements.
Recall that solving the recurrence about $a_t$ and $b_t$ leads to 
\[
  \sym{\titime{\bE}{t+1}} \le a_{t+1} + \beta b_{t+1} \le  \rbr{1-\frac{1}{25}\eta }^t + 250  h.
\]
Since the setting of $\rho$ makes sure $h=O(c)$, when $t = O\rbr{\ln \frac{1}{\epsilon}}$, we have the second statement $\sym{\widehat{\bE} } \le \epsilon + \frac{c}{2}$. Note that 
\[
  \bAg\widehat{\bSigma} = \bA - \bAg   \widehat{\bE}   - \widehat{\bN}
\]
and 
\[
 \nbr{\sbr{\bAg\widehat{\bSigma} }_i}  = \widehat{\bSigma}_{i,i},~~ \nbr{\bA}_1 = 1, ~~\nbr{\bAg   \widehat{\bE}}_1 =  \nbr{   \widehat{\bE}}_1,
\]
so we have 
\begin{align*}
  \widehat{\bSigma}_{i,i} & \ge \nbr{\bA}_1 - \nbr{   \widehat{\bE}}_1 - \nbr{   \widehat{\bN}}_1 
	\\
	& \ge  1 - \epsilon - c.
\end{align*}
Similarly, 
\begin{align*}
  \widehat{\bSigma}_{i,i} & \le \nbr{\bA}_1 + \nbr{   \widehat{\bE}}_1 + \nbr{   \widehat{\bN}}_1 
	\\
	& \le 1 + \epsilon + c.
\end{align*}
Then the final statement of the theorem follows by replacing $c$ with $c/4$.
This completes the proof.
\end{proof}

%
\purificationunbiased*

\begin{proof}
The proof is similar to that of Theorem~\ref{thm:main_correlated_noise}, except using the second bound for unbiased noise in Lemma~\ref{lem:bound_error}. We highlight the different part, that is, the induction on the noise term. 

In the induction, by Lemma~\ref{lem:bound_error}
we have when $N$ is large enough,
\begin{align*}
  \abr{\btN^{(t)}_{i,j} } 
	& \le    \frac{2 \xexpc \cnoise \rho' (1 + \| \bA^{\dagger} \bN^{(t)} \|_{\infty})}{(1 - 2 \ell)  n (\alpha - \rho')}  + \abr{ \sbr{\bN^{(t)}_s}_{i,j} }  \le    \frac{3 \xexpc \cnoise \rho' (1 + \| \bA^{\dagger} \bN^{(t)} \|_{\infty})}{ n (\alpha - \rho')}.
\end{align*}
By Lemma~\ref{lem:noise_term} and the induction, we have $\| \bA^{\dagger} \bN^{(t)} \|_{\infty} \le 1/4$. Furthermore, $\rho' \le \cnoise \| \bA^{\dagger} \|_{\infty} \le 2\cnoise \| (\bAg)^{\dagger} \|_{\infty}$ and the parameter setting makes sure $\rho' \le \alpha/2$. Then
\begin{align*}
  \abr{\btN^{(t)}_{i,j} } \le  \frac{16 \cnoise^2 C_1 \nbr{(\bAg)^\dagger}_\infty}{n \alpha}.
\end{align*}
Then by the update rule, we have 
\[
\abr{ \titime{\bN}{t+1}_{i,j} } \le \frac{32 \cnoise^2  C_1 \nbr{(\bAg)^\dagger}_\infty}{c_2 \alpha}
\]
and 
\begin{align}
  \nbr{ \titime{\bN}{t+1} }_\infty \le \frac{32 n \cnoise^2  C_1 \nbr{(\bAg)^\dagger}_\infty}{c_2 \alpha} & \le \frac{1}{8  \nbr{(\bAg)^\dagger}_\infty}
\end{align}
by the definition of $\alpha$, and $\cnoise \le \frac{1}{256} \frac{c_2}{C_1} \frac{\sqrt{n}}{n \nbr{(\bAg)^\dagger}_\infty}$.
This completes the induction for the noise. 

Also, in proving the final bounds, we have
\begin{align}
  \nbr{ \titime{\bN}{t+1} }_1 \le \frac{32 m \cnoise^2  C_1 \nbr{(\bAg)^\dagger}_\infty}{c_2 \alpha}
	\le  \frac{c}{10}
\end{align}
by the definition of $\alpha$, and 
\[
\cnoise \le \frac{c}{320} \frac{c_2}{C_1} \frac{\sqrt{n}}{\max\cbr{m, n\nbr{(\bAg)^\dagger}_\infty}} \le \frac{\sqrt{c}}{320} \frac{c_2}{C_1} \frac{1}{\sqrt{m \nbr{(\bAg)^\dagger}_\infty}}
\]
where the last inequality can be shown by consider the two cases when $\nbr{(\bAg)^\dagger}_\infty \le m/n$ and 
$\nbr{(\bAg)^\dagger}_\infty \ge m/n$. The rest of the proof is the same as in Theorem~\ref{thm:main_correlated_noise}.
\end{proof}

\section{Results for general proportions: Equilibration} \label{app:proof_equilibration}

\begin{algorithm}[H]
\caption{ColumnUpdate}\label{alg:col_update}
\begin{algorithmic}[1]
\REQUIRE A matrix $\bA$, a threshold value $\alpha$, a step size $\eta$, ratios $\{r_j: j \in [\ntopic]\}$,  iteration number $T$, a subset $S \subseteq [n]$, sample size $N$
\STATE Set $\bA^{(0)} = \bA$
\FOR{$t = 0 \to T - 1$}    
\STATE 
\begin{align}
  \forall i \in S,  [{\bA}^{(t + 1)}]_i = \left[\left(1 - \eta\right){\bA}^{(t)} + \damp_i \eta  \tilde\E\left[ (y - y')  (x - x')^{\top}\right]\right]_i \label{eq:topicupdate} 
\end{align}
\ENDFOR
\ENSURE $\bhA = \bA^{(T)}$
\end{algorithmic}
\end{algorithm}

\begin{algorithm}[H]
\caption{Rescale}\label{alg:rescale}
\begin{algorithmic}[1]
\REQUIRE  A matrix $\bA$, a threshold value $\alpha$, a step size $\eta$, ratios $\{r_j: j \in [\ntopic]\}$, iteration number $T$, and a set $S \subseteq [n]$, $\epsilon \in (0, 1)$. 
\STATE Let $\btA = \text{ColumnUpdate}(\bA, \alpha, \eta, \{r_j\}_j, T, S, N)$
\FOR{$i \in S$}
\STATE Set $[\bhA]_i = \frac{1}{1 - \epsilon}[\btA]_i$
\ENDFOR
\ENSURE $\bhA$
\end{algorithmic}
\end{algorithm}

\begin{algorithm}[H]
\caption{Equilibration}\label{alg:meta}
\begin{algorithmic}[1]
\REQUIRE  $\bA$, $\alpha$, $\eta$, $T$, and $\epsilon \in (0, 1)$, $\lambda, N$
\STATE $S \leftarrow \emptyset$, $\bD \leftarrow \bI$
\WHILE{$|S| \le \ntopic $} 
\STATE $m_j \leftarrow \hat\E[x_j^2]$ for $j \not\in S$ using $N$ examples
\WHILE{$ \max_{j\not\in S} m_j < \lambda $ \label{step:innerwhile}}
\STATE $\bA \leftarrow \text{Rescale}(\bA, \alpha, \eta, \{3/(5 m_j): j \in [\ntopic]\}, T, S, \epsilon, N)$
\STATE $\lambda \leftarrow \rbr{1 - \epsilon} \lambda$, $\bD_{j,j} \leftarrow \bD_{j,j}/(1-\epsilon)$
\STATE $m_j \leftarrow (1-\epsilon)^2 m_j$ for $j \in S$, and $m_j \leftarrow \hat\E[x_j^2]$ for $j \not\in S$ using $N$ examples
\ENDWHILE
\STATE $S \leftarrow S \cup \{j: m_j \ge \lambda \}$
\ENDWHILE
\ENSURE $\bA$
\end{algorithmic}
\end{algorithm}

When the feature have various proportions (i.e., $\E[(x_i^*)^2] $ varies for different $i$), we propose Algorithm~\ref{alg:meta} for balancing them. The idea is quite simple: instead of solving $\bY \approx \bAg \bX$, we could also solve $\bY \approx [ \bAg \bD] [ (\bD)^{-1} \bX]$ for a positive diagonal matrix $\bD$. Our goal is to find $\bA = \bAg\bD(\bSigma + \bE) + \bN$ so that $\bSigma$ is large, $\bE,\bN$ are small, while ${\E[(x_i^*)^2]}/{\bD_{i, i}^2}$ is with in a factor of $2$ from each other. 

The algorithm works at stages and keeps a working set $S$ of column index $i$ such that ${\E[(x_i^*)^2]}/{\bD_{i, i}^2}$ is above a threshold $\lambda$. At each stage, it only updates the columns in $S$; at the end of the stage, it increases these columns by a small factor so that ${\E[(x_i^*)^2]}/{\bD_{i, i}^2}$ decreases. Then it decreases the threshold $\lambda$, and add more columns to the working set and repeat. In this way, ${\E[(x_i^*)^2]}/{\bD_{i, i}^2} (i\in S)$ are always balanced; in particular, they are balanced at the end when $S = [n]$.
Formally, 

\begin{restatable}[Main: Equilibration]{thm}{equilibration} \label{thm:equilibration}
If there exists an absolute constant $\abscc$ such that Assumption (\textbf{A1})-(\textbf{A3}) and (\textbf{N1}) are satisfied with $l = 1/50$, $ C_1^3 \le \abscc c_2^2 n$, $\max\cbr{\cnoise,\|\bN^{(0)}\|_\infty}  \le \frac{\abscc c_2^4}{ C_1^5  n \|(\bAg)^\dagger\|_\infty}$, and additionally $\bSigma^{(0)} \preceq (1-\ell) \bI$, and $\bE \ge 0$ entry-wise, then there exist $\alpha, \eta, T, \lambda$ such that for sufficiently small $\epsilon> 0$ and sufficiently large $N  = \mathrm{poly}(\ntopic, \nword, 1/\epsilon, 1/\delta)$ the following hold with probability at least $1-\delta$: 
Algorithm~\ref{alg:meta} outputs a solution $\bA = \bAg \bD (\bSigma + \bE) + \bN$
where $\bSigma \succeq (1 - \ell ) \bI$ is diagonal, $\| \bE\|_s \le \gamma \ell$ is off-diagonal, $\| \bN \|_{\infty} \le 2\|\bN^{(0)}\|_\infty$, and $\bD$ is diagonal and satisfies
\[
  \frac{\max_{i \in [\ntopic]} \frac{1}{\bD_{i,i}^2}\E[(x_i^*)^2]}{\min_{j \in [\ntopic]} \frac{1}{\bD_{j,j}^2}\E[(x_j^*)^2]}  \le 2.
\]

If Assumption (\textbf{A1})-(\textbf{A3}) and (\textbf{N2}) are satisfied with the same parameters except $\max\cbr{\cnoise,\|\bN^{(0)}\|_\infty}  \le \min\cbr{ \sqrt{\frac{\abscc c_2^4}{ C_1^5  n }} \frac{1}{\|(\bAg)^\dagger\|_\infty}, \frac{\abscc c_2^2}{C_1^3\|(\bAg)^\dagger\|_\infty}}$, then the same guarantees hold.
\end{restatable}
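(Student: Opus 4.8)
The plan is to re-run the proof of the adversarial-noise statement of Theorem~\ref{thm:equilibration} essentially verbatim, modifying only the one ingredient that depends on the noise model: the control of the out-of-span component $\bN$ produced by each inner call to ColumnUpdate. Recall that ColumnUpdate (Algorithm~\ref{alg:col_update}) is exactly Algorithm~\ref{alg:main_sim_noise} restricted to the columns in the active set $S$, with column-dependent scalings $r_j = 3/(5 m_j)$, where $m_j$ is a (geometrically decaying across stages) empirical proxy for $\E[(x_j^*)^2]/\bD_{j,j}^2$. Hence all the single-update bounds apply as stated: Lemma~\ref{lem:update_diag} for $\btSigma$, Lemma~\ref{lem:update_E} for $\btE$, Lemma~\ref{lem:higher_order} and Lemma~\ref{lem:noise_term} for $\gorder$ and $\xi$, and Lemma~\ref{lem:bound_error} for $\btN$. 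The only change is that where the adversarial proof invokes part (1) of Lemma~\ref{lem:bound_error}, here I would invoke part (2).

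Concretely, I would redo the induction on the noise term exactly as in the proof of Theorem~\ref{thm:main_unbiased_noise}. The induction hypothesis (maintained on each ColumnUpdate iterate) gives $\nbr{\bA^\dagger \bN}_\infty \le 1/4$ via Lemma~\ref{lem:noise_term}; set $\rho' = \cnoise \nbr{\bA^\dagger}_\infty \le 2\cnoise\nbr{(\bAg)^\dagger}_\infty$. The new hypothesis $\max\cbr{\cnoise,\nbr{\bN^{(0)}}_\infty} \le \min\cbr{\sqrt{\abscc c_2^4/(C_1^5 n)}\,/\nbr{(\bAg)^\dagger}_\infty,\ \abscc c_2^2/(C_1^3\nbr{(\bAg)^\dagger}_\infty)}$ with $\abscc$ small guarantees $\rho' \le \alpha/2 < \alpha$ (the first term) and handles $\bN^{(0)}$ directly (the second term, as in the adversarial case), so Lemma~\ref{lem:bound_error}(2) applies. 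Choosing $N$ large enough that the sampling term $\bN_s$ is negligible, it yields $\abr{\btN_{i,j}} = O\rbr{\cnoise^2 C_1 \nbr{(\bAg)^\dagger}_\infty/(n\alpha)}$, which is quadratic in $\cnoise$, and this is exactly the source of the $\sqrt{n}$-type improvement. Multiplying by $r_j$ and pushing through the update $\bN \leftarrow (1-\eta)\bN + r_j\eta\btN$ contracts $\bN$ geometrically toward a tiny floor, so $\nbr{\bN}_\infty$ stays below $1/(8\nbr{(\bAg)^\dagger}_\infty)$, the level needed by Lemma~\ref{lem:noise_term} (hence by the $\xi$-bound $\nbr{\xi}_\infty \le \rho$) and by the diagonal and off-diagonal arguments. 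The Rescale step multiplies each active column, and therefore the corresponding block of $\bN$, by $1/(1-\epsilon)$; since the whole procedure performs only $O(\epsilon^{-1}\log(C_1/c_2))$ rescalings, each contributing a $1+O(\epsilon)$ factor while every intervening ColumnUpdate contracts $\bN$ back toward the $O(\cnoise^2\,\cdot)$ floor, the compounded factor is bounded and $\nbr{\bN}_\infty \le 2\nbr{\bN^{(0)}}_\infty$ is preserved across all stages, just as in the adversarial case.

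Everything else carries over word for word, since nothing else references the distribution of $\noise$ beyond the bound on $\bN$ just re-established: the diagonal bound $\bSigma \succeq (1-\ell)\bI$ (and the auxiliary $\bSigma \preceq (1+\ell)\bI$, $\bE \ge 0$ that ColumnUpdate propagates from the initialization assumed in the theorem), the coupled-potential argument on $a_t + \beta b_t$ keeping $\sym{\bE}$ small, the accounting that $\E[(x_i^*)^2]/\bD_{i,i}^2$ stays within a factor $2$ for $i \in S$ after each stage, and the termination that $S$ grows to $[n]$. Collecting the final iterates gives $\bA = \bAg\bD(\bSigma + \bE) + \bN$ with all the claimed properties, including the factor-$2$ balancing of $\bD$.

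The step I expect to be the main obstacle is bookkeeping rather than conceptual: one must verify that the quadratic-in-$\cnoise$ bound on $\btN$ survives the compounding of (i) the stage-varying, column-specific ratios $r_j = 3/(5 m_j)$ as the threshold $\lambda$ and the $m_j$ shrink (so $r_j$ can grow by polynomial-in-$C_1/c_2$ factors over the run) and (ii) the repeated $1/(1-\epsilon)$ rescalings of $\bN$ inside Rescale, so that the single noise budget in the hypothesis, whose extra powers of $C_1/c_2$ relative to Theorem~\ref{thm:main_unbiased_noise} are precisely there to absorb these effects, suffices to keep $\nbr{\bN}_\infty \le 2\nbr{\bN^{(0)}}_\infty$ throughout. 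The genuine mathematical content, that unbiasedness buys the improvement via averaging the noise together with ReLU thresholding, is already packaged in Lemma~\ref{lem:bound_error}(2) and in the proof of Theorem~\ref{thm:main_unbiased_noise}, so nothing new is needed there.
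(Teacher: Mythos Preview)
Your approach is correct and matches the paper's: the equilibration machinery (Lemma~\ref{lem:main_col_update}, Lemma~\ref{lem:recurrence}, Lemma~\ref{lem:main_col_meta}) already carries both noise models as separate conditions \eqref{eqn:adv_noise} and \eqref{eqn:unbiased_noise}, with $U_a$ replaced by $U_n$ via Lemma~\ref{lem:bound_error}(2), and the theorem's proof is pure parameter verification, exactly as you outline. One descriptive correction: the off-diagonal control inside ColumnUpdate does \emph{not} use the symmetric coupled potential $a_t + \beta b_t$ from the Purification proof; it instead exploits the block partition by $S$ together with the maintained entrywise non-negativity of $\bE_{1,2},\bE_{2,2}$ to obtain a one-directional recurrence ($\|\bE_{1,1}^-\|_s$ contracts first, then $\|\bE_{1,1}^+\|_s$ follows), resolved by the simpler Lemma~\ref{lem:rec_diffh}. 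Since you defer that part to ``carries over word for word,'' the mislabeling is harmless to the argument.
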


Now, we can view $\bAg \bD$ as the ground-truth feature matrix and $\bD^{-1} x^*$ as the weights.
Then applying Algorithm~\ref{alg:main_sim_noise} with $\bA$ can recover $\bAg \bD$, and after normalization we get $\bAg$.

The initialization condition of the theorem can be achieved by the popular practical heuristic that sets the columns of $\bA^{(0)}$ to reasonable almost pure data points. It is generally believed that it gives $\bE^{(0)}_{i,j} \ge 0$ and $\bN^{(0)} = 0$. We note that the parameters are not optimized; the algorithm can potentially tolerate much better initialization.

\paragraph{Intuition.} Before delving into the specifics of the algorithm, it will be useful to provide a high-level outline of the proof. As described above, the algorithm makes use of the fact that samples from a ground truth matrix $\bA^*$ and distribution $x^*$ can equivalently be viewed as coming from the ground truth matrix $\bA^* \bD$ and distribution $\bD^{-1} x^*$, for some diagonal matrix $\bD$. Therefore, the goal is to find a $\bD$ such that the features are balanced:
\[
  \frac{\max_{i \in [n]} \frac{\E[(x^*_i)^2]}{\bD^2_{i,i}}}{\min_{i \in [n]} \frac{\E[(x^*_i)^2]}{\bD^2_{i,i}}} \leq \kappa.
\]
The algorithm will implicitly calculate such a $\bD$ gradually.  Namely, at any point in time, the algorithm will have an active set $S \subseteq [n]$  of features, which are balanced, i.e.  
\begin{align} \label{eqn:balanceS}
  \frac{\max_{i \in [\ntopic]} \frac{\E[(x^*_i)^2]}{\bD^2_{i,i}}}{\min_{i \in S} \frac{\E[(x^*_i)^2]}{\bD^2_{i,i}}} \leq \kappa.
\end{align}
It is clear that when $S = [n]$ the algorithm achieves the goal. Our algorithm begins with $S = \emptyset$ and gradually increase $S$ until $S = [n]$.

The mechanism for increasing $S$ will be as follows. Given $S$, $\bA$ is of the form 
\begin{align*}
  \bA = \bA^*\bD(\bSigma + \bE) + \bN 
\end{align*}
with 
\begin{align*}
  \bE = \begin{bmatrix} \bE_{1, 1} & \bE_{1, 2}  \\
  \bE_{2, 1}  & \bE_{2, 2}  \end{bmatrix}
\end{align*}
where the columns of $\bA$ are sorted such that the first $|S|$ columns correspond to the features of $S$, and $\bE_{1,1} \in \mathbb{R}^{|S| \times |S|}$, 
$\bE_{2,1} \in \mathbb{R}^{(\ntopic - |S|) \times |S|}$, $\bE_{1,2} \in \mathbb{R}^{|S| \times (\ntopic - |S|) }$, $\bE_{2,2} \in \mathbb{R}^{(\ntopic- |S|) \times (\ntopic- |S|)}$.
Then scaling up the columns of $\bA$ indexed by $S$ by a factor of $\frac{1}{1-\epsilon}$ is equivalent to 
\begin{itemize}
\item[(1)] scaling up the columns of $\bD$ indexed by $S$ by a factor of $\frac{1}{1-\epsilon}$ and
\item[(2)] scaling up the columns of $\bE_{2,1}$ by a factor of $\frac{1}{1-\epsilon}$ and
\item[(3)] scaling down the columns of $\bE_{1,2}$ by a factor of $1-\epsilon$.
\end{itemize}
Therefore, to increase the set $S$, the algorithm will scale up the columns of $\bA$ indexed by $S$, until some $j \not\in S$ satisfies
\[
  \max_{i \in [\ntopic]} \frac{\E[(x^*_i)^2]}{\bD^2_{i,i}}  \leq \kappa \frac{\E[(x^*_j)^2]}{\bD^2_{j,j}}.
\]
Then it can add $j$ into $S$ while keeping the corresponding features balanced as in (\ref{eqn:balanceS}). Note that we do not need to explicitly maintain $\bD$, though it can be calculated along with the scaling. Further note that the values of $\E[(x^*_i)^2]$ are not known but they can be estimated using the current $\bA$.

However, there is still one caveat: $\bE$ should be kept small, so that at the end of the algorithm, we still have a good initialization $\bA$.
For this reason, the algorithm additionally maintains that for a small constant $1 < \gamma < 2$, 
\begin{align}
  \sym{\bE_{1,1}} & \leq \gamma \ell, ~~\sym{\bE_{1,2}}  \leq \ell, \nonumber\\
  \sym{\bE_{2,1}} & \leq \gamma \ell, ~~\sym{\bE_{2,2}}  \leq \ell. \label{eqn:boundederror}
\end{align}
Since scaling up $\bA$ will scale up $\bE_{2,1}$, we will need to first decrease $\sym{\bE_{2,1}}$ before the scaling step. 
The key observation is that by applying our training algorithm only on the columns indexed by $S$, $\sym{\bE_{1,1}}$ and $\sym{\bE_{2,1}}$ will be decreased, while $\sym{\bE_{1,2}}$ and $\sym{\bE_{2,2}}$ unchanged. 
On a high level, using the fact that the matrix $\bE_{1,2}$ has no negative entries (which we get by virtue of our initialization), and the fact that the contribution in the updates to the entry $(\bE_{1,1})_{i,j}$ mostly comes from $(\bE_{1,1})_{j,i}$ (i.e. the matrix $\bE_{1,1}$ in the first order contribution ``updates itself''), and the fact that the features in $S$ are balanced, we can show that after sufficiently many updates, the symmetric norm of $\bE_{1,1}$ and $\bE_{2,1}$ drops by a reasonable amount: $\sym{\bE_{1,1}} \leq (\gamma-1) \ell$ and $\sym{\bE_{2,1}} \leq (1-\epsilon)(\gamma-1) \ell$. Now, we can do the scaling step without hurting the invariant~\ref{eqn:boundederror}.

\paragraph{Organization.} The result of the section is as follows. We first prove in Section~\ref{sec:columnupdate} that applying our training algorithm only on the columns indexed by $S$ will decrease $\sym{\bE_{1,1}}$ and $\sym{\bE_{2,1}}$.
Then in Section~\ref{sec:rescale} we analyze the scaling step, and show that the invariant (\ref{eqn:boundederror}) is maintained.
In Section~\ref{sec:equilibration_main_algo}, we show how to increase $S$ while maintaining the invariant (\ref{eqn:balanceS}), where the main technical details are about how to estimate $\E[(x^*_i)^2]$.

\subsection{Equilibration: ColumnUpdate} \label{sec:columnupdate}

In this subsection, we focus on the update step, bounding the changes of $\bSigma, \bE$, and $\bN$.

First recall some notations. 
Let $\bA = \bAg (\bSigma + \bE) + \bN$ where $\bSigma$ is diagonal, $\bE$ is off diagonal, and $\bN$ is the component outside the span of $\bAg$.\footnote{Note that $\bAg$ here can be any ground-truth matrix; in particular, later Lemma~\ref{lem:main_col_update} will be applied where $\bAg$ in the lemma corresponds to $\bAg \bD$ in the intuition described above.}
Given the set $S \subseteq [n]$ and a matrix $\bM \in \Real^{n \times n}$, let $\bM_{1,1}$ denote the submatrix indexed by $S \times S$,
and $\bM_{2,1} $ denote the submatrix indexed by $([\ntopic] - S) \times S$, $\bM_{1,2}$ denote the submatrix indexed by $S \times ([\ntopic] - S)$, and $\bM_{2,2}$ denote the submatrix indexed by $([\ntopic] - S) \times ([\ntopic] - S)$. \footnote{These notations will be used for $\bM= \bE$, $\bM = \btE$, and related matrices.}
In the special case when $S = [s]$ where $s = |S|$,  
\begin{align*}
  \bM = \begin{bmatrix} \bM_{1, 1} & \bM_{1, 2}  \\
  \bM_{2, 1}  & \bM_{2, 2}  \end{bmatrix}.
\end{align*}
Also, let $\bM_S$ denote the submatrix formed by the columns indexed by $S$, and $\bM_{-S}$ the submatrix formed by the other columns. \footnote{These notations will be used for $\bM = \bN$ or $\bM = \btN$, and related matrices.}

The input $\bA^{(0)}$ of Algorithm \ref{alg:col_update} can be written as
$\bA^{(0)} = \bAg (\bSigma^{(0)} + \bE^{(0)}) + \bN^{(0)}$ where $\bSigma^{(0)}$ is diagonal, and $\bE^{(0)}$ is off diagonal. 
Define $\bE^{(0)}_{1, 1}, \bE^{(0)}_{1, 2}, \bE^{(0)}_{2, 1}$ and $\bhE_{2, 2}$ as described above.
Similarly, define $\bhE_{1, 1}, \bhE_{1, 2}, \bhE_{2, 1}$ and $\bhE_{2, 2}$ for the output $\bhA = \bAg(\bhSigma + \bhE) + \bhN$ of Algorithm \ref{alg:col_update}.
Finally, define $\bN^{(0)}_S$, $\bN^{(0)}_{-S}$, $\bhN_S$, and $\bhN_{-S}$ as described above.

The main result of the subsection is Lemma~\ref{lem:main_col_update}.
\begin{lem}[Main: ColumnUpdate]\label{lem:main_col_update} 
Define 
\begin{align} 
  R_j & = \E[(x_j^*)^2], \quad R  = \max_{j \in [\ntopic]} R_j, \quad r = \max_{j \in S} r_j,
	\\
  h_1 & = r \frac{8 \xexpc (\xexpc+1)  \rho }{(1-\ell - \beta \ell)\ntopic (\alpha - \nthres)} + \frac{4 \xexpc^2  }{(1-\ell - \beta\ell)\ntopic^2 (\alpha - \nthres)} r \rbr{\frac{1}{(1-\ell - \beta\ell)} + 1},
	\\ 
  h_2 & = r \frac{R \beta^2 \ell^2 }{(1 - \ell)^2 (1 - \ell - \beta\ell)}   
	+   \frac{12 \xexpc(\xexpc+1) }{\ntopic^2 (\alpha - \nthres)(1-\ell - \beta \ell)}  \rbr{ \frac{1}{1-\ell - \beta \ell} +  \ntopic \nthres} r,
	\\
  h & = h_1 + h_2,
	\\
  U_a & = \frac{8 r \cnoise C_1}{ \alpha - \rho}, 
	\\
  U_n & = \frac{10 r \xexpc \cnoise^2   \nbr{(\bAg)^\dagger}_{\infty} }{(1 - 2 \ell)  (\alpha -  2 \cnoise   \nbr{(\bAg)^\dagger}_{\infty})}.
\end{align} 
Suppose $\ell \le 1/8$,  $\beta$ is a constant with $\beta \ell \le 1/2$, $\gamma \in (1,2)$, $\epsilon \in (0, 1)$. 
The initialization satisfies $(1-\ell) \bI  \preceq \bSigma^{(0)} $, $\sym{ \bE^{(0)}_{1,1}} \le \gamma\ell, \sym{ \bE^{(0)}_{2,1}} \le \gamma\ell, \sym{ (\bE^{(0)}_{1,2}; \bE^{(0)}_{2,2} )} \le \ell$, $\bE^{(0)}_{1,2} \ge 0$ and $\bE^{(0)}_{2,2} \ge 0$ entry-wise, and $\| \bN^{(0)}_{-S} \|_{\infty} \le U$ and $\| \bN_S^{(0)} \|_{\infty} \le 2U \le 1/(16 \| (\bAg)^\dagger \|_{\infty}) $.
Furthermore, the parameters satisfy that for any $i \in S$, 
\begin{align}
\eta \left( 1+ 2 r_i R_i \frac{1}{(1-\ell)^2}\frac{\beta \ell^2}{1- \beta \ell - \ell}  +  r_i \frac{2\xexpc}{\ntopic} \rbr{  \alpha +2\nthres  + \frac{\xexpc}{\ntopic}   + \frac{2\xexpc}{\ntopic}  \frac{\beta \ell (1- \beta \ell)}{(1-\ell)^2(1-\beta \ell - \ell)} \right)  }  
\leq \ell \label{eq:diag1} 
\end{align}  
\begin{align} 
  r_i R_i \rbr{ 2  - 2 \frac{1}{(1-\ell)^2}\frac{\beta \ell^2}{1- \beta \ell - \ell}  } 
-  r_i \left( \frac{2\xexpc}{\ntopic} \rbr{  \alpha +2\nthres  + \frac{\xexpc}{\ntopic} \frac{1}{1 - \ell} + \frac{2\xexpc}{\ntopic} \frac{\beta \ell (1- \beta  \ell)}{(1-\ell)^2(1-\beta \ell - \ell)}   } \right)  
\geq 1-\ell \label{eq:diag2} 
\end{align}
\begin{align} 
  h_1 \le \ell, \quad \rbr{\frac{rR}{(1-\ell)^2} + 1 } (\epsilon + h_1) + (\epsilon + h_2) \le (\gamma-1) \ell \label{eqn:neg_eterm}
\end{align} 
\begin{align} 
  \epsilon + h_2 \le (1-\epsilon)(\gamma-1) \ell \label{eqn:pos_eterm}
\end{align} 
\begin{align} 
  h_1 + \ell \le (\beta-1) \ell, \quad h_2 + \rbr{ \frac{r R}{(1 -  \ell)^2}+1}  \ell \le  (\beta -1) \ell \label{eqn:eterm}
\end{align} 
\begin{align}
  3 \| (\bAg)^{\dagger} \|_{\infty} \rbr{ 3 U + \cnoise} \le \rho < \alpha. \label{eqn:noise}
\end{align} 
If we have adversarial noise (Assumption (\textbf{N1})), assume
\begin{align} 
 \epsilon' + U_a \le (1- \epsilon)U, \textnormal{~~and~~} 3 \| (\bAg)^{\dagger} \|_{\infty} \rbr{ 2 U + U_a  + \cnoise} \le \rho < \alpha < 1. \label{eqn:adv_noise}
\end{align} 
If we have unbiased noise (Assumption (\textbf{N2})), assume
\begin{align} 
 \epsilon' + U_n \le (1- \epsilon)U. \label{eqn:unbiased_noise}
\end{align} 
Finally, let $N = \text{poly}\rbr{n, m, 1/\delta, 1/\epsilon}$ sufficiently large.

Then with probability at least $1-\delta$, after $\frac{2\ln\rbr{ \epsilon / (\gamma \ell)} }{\ln(1-\eta)} + \frac{\ln\rbr{ \epsilon' / U} }{\ln(1-\eta)}$ iterations, the output of Algorithm \ref{alg:col_update} is $\bhA = \bAg(\bhSigma + \bhE) + \bhN$ satisfying
\[
  (1-\ell) \bI \preceq \bhSigma \preceq u \bI,  \quad
	\| \bhE_{1, 1} \|_s \le (\gamma  - 1) \ell,  \quad 
	\|  \bhE_{2, 1} \|_s \le (1 - \epsilon)(\gamma - 1) \ell, \quad  
	\| (\bhE_{1, 2} ; \bhE_{2, 2})\|_s \le \ell,
\]
and $\bhE_{1, 2}\ge 0$ and $\bhE_{2, 2} \ge 0$ entry-wise. 
Furthermore, $\|\bhN_{-S}\|_{\infty} \le U$ and $\|\bhN_S\|_{\infty} \le (1-\epsilon)U.$
\end{lem}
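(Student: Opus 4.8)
The plan is to prove Lemma~\ref{lem:main_col_update} by induction on the iteration count $t$ of Algorithm~\ref{alg:col_update}, following the template of the proof of Theorem~\ref{thm:main_correlated_noise} but carrying a \emph{block-structured} family of invariants. Writing $\bA^{(t)}=\bAg(\bSigma^{(t)}+\bE^{(t)})+\bN^{(t)}$, I maintain at the start of iteration $t$: (i) $(1-\ell)\bI\preceq\bSigma^{(t)}\preceq u\bI$, with $u$ given by the fixed point of the scalar diagonal map; (ii) the blocks $\bE^{(t)}_{1,2},\bE^{(t)}_{2,2}$ still equal their $t=0$ values, hence are entrywise nonnegative with $\sym{(\bE^{(t)}_{1,2};\bE^{(t)}_{2,2})}\le\ell$, while $\sym{\bE^{(t)}_{1,1}},\sym{\bE^{(t)}_{2,1}}$ remain $O(\ell)$ (the bounds of \eqref{eqn:eterm}, inside the range where Lemma~\ref{lem:higher_order} applies) and, for $t>0$, the coupled potential on $[\bE^{(t)}_{1,1}]_+,[\bE^{(t)}_{1,1}]_-$ together with $\sym{\bE^{(t)}_{2,1}}$ contract geometrically; (iii) $\|\bN^{(t)}_{-S}\|_\infty\le U$, $\|\bN^{(t)}_S\|_\infty$ shrinking toward $(1-\epsilon)U$, and hence $\|\xi^{(t)}\|_\infty\le\rho<\alpha$ by Lemma~\ref{lem:noise_term} with \eqref{eqn:noise}, \eqref{eqn:adv_noise} (or \eqref{eqn:unbiased_noise}). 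Under (i)--(iii) the hypotheses of Lemma~\ref{lem:decoding} and of the per-iteration Lemmas~\ref{lem:update_diag}, \ref{lem:update_E}, \ref{lem:bound_error} hold at every step, so I invoke them freely; and since \eqref{eq:topicupdate} rewrites only the columns indexed by $S$, the frozen blocks $\bE_{1,2},\bE_{2,2}$ and the columns $\bN_{-S}$ satisfy their parts of (ii)--(iii) for free.

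\textbf{Diagonal entries.} For $i\in S$, combine the two-sided bound of Lemma~\ref{lem:update_diag} (with $R_i=\E[(x_i^*)^2]$) with Lemma~\ref{lem:higher_order} bounding $\gorder_{i,i}$ and $\|[\gorder]^i\|_1$. Conditions \eqref{eq:diag1}--\eqref{eq:diag2} are tailored so that the convex combination $\bSigma^{(t+1)}_{i,i}=(1-\eta)\bSigma^{(t)}_{i,i}+\eta r_i\btSigma^{(t)}_{i,i}$ cannot leave $[(1-\ell),u]$: \eqref{eq:diag2} forces $r_i\btSigma^{(t)}_{i,i}\ge 1-\ell$ whenever $\bSigma^{(t)}_{i,i}$ dips toward $1-\ell$ (the contradiction argument from the proof of Theorem~\ref{thm:main_correlated_noise}), while \eqref{eq:diag1} caps the per-step increment so $u$ is respected. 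For $i\notin S$ the entry is untouched.

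\textbf{Off-diagonal entries --- the core.} Fix a column $i\in S$ of $\btE^{(t)}$. For $j\notin S$, $\btE^{(t)}_{j,i}$ feeds $\bE_{2,1}$ and, by Lemma~\ref{lem:update_E}, is governed by $\forder^{(t)}_{i,j}=-\bSigma^{-1}_{i,i}\bSigma^{-1}_{j,j}\bE^{(t)}_{i,j}+\horder^{(t)}_{i,j}\le\horder^{(t)}_{i,j}$, the inequality using $\bE^{(t)}_{i,j}\ge 0$ (it lies in the frozen nonnegative block $\bE_{1,2}$); since $|\horder^{(t)}_{i,j}|$ is at most the small higher-order quantity of Lemma~\ref{lem:higher_order}, in either case of Lemma~\ref{lem:update_E} the entry $|\btE^{(t)}_{j,i}|$ is small, so summing over $j\notin S$, multiplying by $r$, and using $\|[\forder^{(t)}]^i\|_1\le\frac1{1-2\ell}$ bounds the column of $r\,\btE^{(t)}_{2,1}$ by $h_1$, giving $\sym{\bE^{(t+1)}_{2,1}}\le(1-\eta)\sym{\bE^{(t)}_{2,1}}+\eta h_1$. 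For $j\in S$, $\btE^{(t)}_{j,i}$ feeds $\bE_{1,1}$ and is governed by $\forder^{(t)}_{i,j}$ with $i,j\in S$, i.e.\ by $\bE^{(t)}_{i,j}$ from $\bE_{1,1}$ itself --- the self-referential structure of the simplified analysis. Splitting $\bE^{(t)}_{1,1}$ into its parts, with $a_t=\sym{[\bE^{(t)}_{1,1}]_+}$, $b_t=\sym{[\bE^{(t)}_{1,1}]_-}$, and reading off the two cases of Lemma~\ref{lem:update_E} with Lemma~\ref{lem:higher_order} (as in \eqref{eqn:main1}--\eqref{eqn:main2}), I get a two-dimensional recurrence $a_{t+1}\le(1-\Theta(\eta))a_t+C\eta b_t+\eta h$, $b_{t+1}\le(1-\Theta(\eta))b_t+\varepsilon\eta a_t+\eta h$ with $\varepsilon\ll 1$ and only $C$ moderate --- because the coefficient of $a_t$ in $b_{t+1}$ is $\propto\xexpc^2/n^2$ rather than $\propto\xsu/n$, the very asymmetry distinguishing \eqref{eqn:main1} from \eqref{eqn:main2}, and the rest are small thanks to $\xsu\le 2\xsl$, $\xexpc^3\le O(\xsl^2 n)$, the choice $r_j=\Theta(1/R_j)$, and the smallness of $\rho,\cnoise$. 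Taking $\beta$ as in the hypothesis makes $a_t+\beta b_t$ contract geometrically to the floor $O(h)$; conditions \eqref{eqn:eterm}, \eqref{eqn:neg_eterm}, \eqref{eqn:pos_eterm} certify both that $\sym{\bE^{(t)}_{1,1}},\sym{\bE^{(t)}_{2,1}}$ stay in the range where Lemma~\ref{lem:higher_order} was applied and that after the stated number of iterations $\sym{\bhE_{1,1}}\le(\gamma-1)\ell$, $\sym{\bhE_{2,1}}\le(1-\epsilon)(\gamma-1)\ell$; the untouched blocks give $\sym{(\bhE_{1,2};\bhE_{2,2})}\le\ell$ and $\bhE_{1,2},\bhE_{2,2}\ge 0$ immediately.

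\textbf{Noise, sampling, and the main obstacle.} For $i\in S$, Lemma~\ref{lem:bound_error}(1) (adversarial) or (2) (unbiased) bounds $|\btN^{(t)}_{i,j}|$ up to the sampling term $[\bN_s]_{i,j}$; times $r$ this is $U_a$ (resp.\ $U_n$), so with $\|\bN_s^{(t)}\|_\infty\le\epsilon'$ and \eqref{eqn:adv_noise} (resp.\ \eqref{eqn:unbiased_noise}) the update contracts $\|\bN^{(t)}_S\|_\infty$ toward $(1-\epsilon)U$ while $\bN^{(t)}_{-S}$ is unchanged; $\|\bN_s^{(t)}\|_\infty\le\epsilon'$ holds with probability $1-\delta/T$ for $N=\mathrm{poly}(n,m,1/\delta,1/\epsilon)$ large enough (standard concentration, as in the proof of Theorem~\ref{thm:main_correlated_noise}), and a union bound over the iterations gives $1-\delta$. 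The two geometric rates --- the coupled $\bE$-potential from $O(\ell)$ to the $O(h)$ floor, and the $\bN$-potential from $2U$ to $(1-\epsilon)U$ --- account for exactly $\frac{2\ln(\epsilon/(\gamma\ell))}{\ln(1-\eta)}+\frac{\ln(\epsilon'/U)}{\ln(1-\eta)}$ iterations, after which all asserted bounds hold (Lemma~\ref{lem:noise_term} turning the $\bhN$ bounds into the $\|\xi\|_\infty\le\rho$ used throughout). I expect the main obstacle to be the blockwise bookkeeping of the off-diagonal part: showing that the feedback from the \emph{frozen} blocks $\bE_{1,2},\bE_{2,2}$ into the updated columns is absorbed entirely into the additive constant $h$ rather than a non-contracting multiplicative term --- which is where their entrywise nonnegativity is used together with Case~(1) of Lemma~\ref{lem:update_E} and the higher-order bound of Lemma~\ref{lem:higher_order} --- and checking that each of \eqref{eq:diag1}--\eqref{eqn:unbiased_noise} is exactly what is needed to reproduce every invariant at step $t+1$ without leaving the regime where the invoked lemmas remain valid. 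The remainder is a direct adaptation of the proof of Theorem~\ref{thm:main_correlated_noise}.
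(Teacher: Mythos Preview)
Your inductive shell is right, but the off-diagonal analysis diverges from the paper in two concrete places, one of which is an error.

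\textbf{The $\bE_{2,1}$ bound.} You claim $\sym{\bE^{(t+1)}_{2,1}}\le(1-\eta)\sym{\bE^{(t)}_{2,1}}+\eta h_1$. This is wrong: the additive term is $h_2$, not $h_1$. Your argument that $\forder_{i,j}\le\horder_{i,j}$ (since $\bE_{i,j}\ge 0$ for $i\in S,j\notin S$) is fine, but it does not make the Case~(2) upper bound of Lemma~\ref{lem:update_E} negligible: that bound still contains the term $2\E[(x_j^*)^2]\,\forder_{i,j}$, and summing $\forder_{i,j}$ over the indices where it is nonnegative leaves you with $2R\|\horder\|_s$, i.e.\ a contribution of order $rR\beta^2\ell^2/((1-\ell)^2(1-\ell-\beta\ell))$ after multiplying by $r$. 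That is precisely the first summand of $h_2$, absent from $h_1$. This is why condition~\eqref{eqn:pos_eterm} is stated with $h_2$; with your bound the hypotheses would not line up with the conclusions.

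\textbf{The $\bE_{1,1}$ mechanism.} The paper does \emph{not} run the coupled potential $a_t+\beta b_t$ here. Instead it proves two dedicated block lemmas (Lemma~\ref{lem:main_update_col_up} and Lemma~\ref{lem:main_update_col_down}) that, using only the crude global bound $\|\forder\|_s\le 1/(1-\ell-\beta\ell)$, give a \emph{triangular} system:
\[
\|(\bE_{1,1}^{(t+1)})^-\|_s \le (1-\eta)\|(\bE_{1,1}^{(t)})^-\|_s+\eta h_1,\qquad
\|(\bE_{1,1}^{(t+1)})^+\|_s \le (1-\eta)\|(\bE_{1,1}^{(t)})^+\|_s+\eta\,\tfrac{rR}{(1-\ell)^2}\|(\bE_{1,1}^{(t)})^-\|_s+\eta h_2.
\]
The negative part has a \emph{fixed} additive floor independent of $\bE_{1,1}$, so one first runs $\frac{\ln(\epsilon/(\gamma\ell))}{\ln(1-\eta)}$ iterations to drive $\|(\bE_{1,1})^-\|_s$ down to $\epsilon+h_1$, then another equal block to drive $\|(\bE_{1,1})^+\|_s$ down; this is where the factor $2$ in the iteration count comes from, not from a slower coupled contraction rate. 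Your coupled approach could in principle be made to work, but it would require block refinements of Lemma~\ref{lem:higher_order} (splitting $\gorder$ according to which block of $\bE$ each entry comes from) that the paper never develops; the paper's route sidesteps this entirely by passing to $\|\forder\|_s$. Also note your appeal to ``$\xsu\le 2\xsl$'' is misplaced in this section: the whole purpose of Equilibration is to avoid that assumption, and the relevant quantity here is $rR$, controlled via the hypotheses rather than via a second-moment ratio.
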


\begin{proof}[Proof of Lemma \ref{lem:main_col_update}]
It follows from Lemma~\ref{lem:recurrence} and the conditions~\eqnref{eqn:neg_eterm} and~\eqnref{eqn:pos_eterm}.
\end{proof}

To prove Lemma~\ref{lem:recurrence}, we will first consider how $\bE$ changes after one update step, and then derive the recurrence for all steps in Lemma~\ref{lem:recurrence}. 

\subsubsection{One update step of $\bE$} 
In this subsection, we focus on one update step, bounding the change of $\bE$.
So through out this subsection we will focus on a particular iteration $t$ and omit the superscript $(t)$,  while in the next subsection we will put back the superscript.

For analysis, denote $\bA^{(t)}$ as
\begin{align*}
    \bA & = \bAg (\bSigma + \bE) + \bN
\end{align*}
where $\bSigma$ is a diagonal matrix, $\bE$ is an off-diagonal matrix, and $\bN$ is the component of $\bA$ that lies outside the span of $\bAg$ (e.g., the noise caused by the noise in the sample). 

Recall the following notations:
\begin{align*}
  \forder & = \left( \bSigma + \bE \right)^{-1},
	\\
  \gorder & = \forder - \bSigma^{-1} = \bSigma^{-1} \sum_{k = 1}^{\infty}( - \bE \bSigma^{-1})^k,
	\\
	\xi & = - \bA^\dagger \bN \forder x^* +  \bA^\dagger \noise.
\end{align*}

Consider the update term $\hat{\E} \left[(y - y')(x - x')^{\top}\right]$ and denote it as
\[
  \bDelta =  \hat{\E} \left[(y - y')(x - x')^{\top}\right] = \bAg  (\btSigma + \btE) + \btN
\] 
where $\btSigma$ is a diagonal matrix, $\btE$ is an off-diagonal matrix, and $\bN$ is the component of $\bDelta$ that lies outside the span of $\bAg$.

Since we now use empirical average,  we will have sampling noise. Denote it as
\[
 \bNs  = \hat{\E}[(y - y')(x - x')^{\top}]  - \E[(y - y')(x - x')^{\top}].
\]

Then by definition, for $y = \bAg x^* + \noise$ and $y' = \bAg (x')^* + \noise'$, we have 
\begin{align*}
\hat{\E}[(y - y')(x - x')^{\top}] & =  \E[(y - y')(x - x')^{\top}] + \bNs
\\
& =  \bAg ~\underbrace{ \E\sbr{(x^* - (x')^*) (x - x')^\top} }_{\btSigma + \btE} + \underbrace{\E\sbr{ (\noise - \noise') (x - x')^\top } + \bNs }_{\btN}. 
\end{align*}

Recall the definition of $\bE_{1,1}$, i.e., it is the submatrix of $\bE$ indexed by $S \times S$. Define $\btE_{1, 1}$ similarly, i.e., it is the submatrix of $\btE$ indexed by $S \times S$. Define $\btE_{1, 2}, \btE_{2, 1}$ and $\btE_{2, 2}$ accordingly. 
So in the special case when $S = [s]$ where $s = |S|$, 
\begin{align*}
\btE =  \begin{bmatrix}
\btE_{1, 1} & \btE_{1, 2}  \\
\btE_{2, 1}  & \btE_{2, 2}  \end{bmatrix}.
\end{align*}
We also use the notation $\bM^+$ or $\bM^-$ to denote the positive or negative part of a matrix $\bM$.   

\begin{lem}[Update $\btE_{1, 1}$]\label{lem:main_update_col_up} 
Let $\btE_{1, 1}$ be defined as above. If $\| \xi \|_{\infty} \le \rho < \alpha < 1$ and $\bSigma \succeq (1 - \ell) \bI$, then \\
(1). Negative entries:  
\begin{align*}
 \| \btE_{1, 1}^{-} \|_{s} 
 \le 
 \frac{4 \xexpc^2  \| \forder \|_s (\| \forder \|_s + 1)}{\ntopic^2 (\alpha - \nthres)}  
 +  \frac{8 \xexpc (\xexpc+1)  \rho \| \forder \|_s }{\ntopic (\alpha - \nthres)}.
\end{align*}
(2) Positive entries: 
\begin{align*}
  \| \btE_{1, 1}^+ \|_s 
  \le
  \frac{12 \xexpc(\xexpc+1)  \| \forder \|_s }{\ntopic^2 (\alpha - \nthres)}  \rbr{ \left\|\bZ \right\|_s +  \ntopic \nthres}  
	+ 2 \max_{j \in [\ntopic]} \{\E[(x_j^*)^2] \} \left( \frac{1}{(1 -  \ell)^2}\|\bE_{1, 1}^- \|_s + \frac{\| \bE\|_s^2}{(1 - \ell)^2 (1 - \ell - \|\bE\|_s)} \right).
\end{align*}
\end{lem}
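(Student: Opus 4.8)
The plan is to deduce both bounds from the entrywise estimates of Lemma~\ref{lem:update_E} and then aggregate over the $S\times S$ block, treating the positive and negative parts separately. Note first that $\btE$ here is exactly the matrix $\hat\E[(y-y')(x-x')^\top]$ with the decoding of Lemma~\ref{lem:decoding}, so the index set $S$ plays no role in computing $\btE$ itself: it only selects the sub-block $\btE_{1,1}$. Since $\|\xi\|_\infty\le\rho<\alpha<1$ and $\bSigma\succeq(1-\ell)\bI$, Lemma~\ref{lem:update_E} applies to every pair $i\ne j$ in $S$: recalling $\btE_{j,i}=2\E[x_j^*(x_i-x_i')]$, it gives, when $\forder_{i,j}<0$, the estimate $|\btE_{j,i}|\le\frac{4\xexpc^2\|\forder^i\|_1}{\ntopic^2(\alpha-\rho)}(|\forder_{i,j}|+\rho)$, and when $\forder_{i,j}\ge0$, both a lower bound of order $-\frac{8\xexpc\rho}{\ntopic(\alpha-\rho)}\big(\frac{\xexpc\|\forder^i\|_1}{\ntopic}+\forder_{i,j}\big)-\frac{2\xexpc^2}{\ntopic^2}\forder_{i,j}$ and an upper bound $\frac{8\xexpc\rho}{\ntopic(\alpha-\rho)}\big(\frac{\xexpc\|\forder^i\|_1}{\ntopic}+\forder_{i,j}\big)+2\E[(x_j^*)^2]\forder_{i,j}$.

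For the negative part, $[\btE_{1,1}^-]_{j,i}$ is nonzero only when $\btE_{j,i}<0$, so it is dominated by the (absolute values of the) lower bounds above in each of the two sign cases. Fixing a column $i\in S$ and summing over $j\in S\setminus\{i\}$, I would use $\|\forder^i\|_1\le\|\forder\|_s$, $\sum_j|\forder_{i,j}|\le\|\forder\|_s$, $\sum_j[\forder_{i,j}]^+\le\|\forder\|_s$, and that there are at most $\ntopic$ summands; the analogous computation along rows handles $\|\cdot\|_\infty$, and taking the max gives the stated bound on $\|\btE_{1,1}^-\|_s$, with $\alpha<1$ and the smallness of $\rho$ used to fold the lower-order pieces into the constants $4\xexpc^2$ and $8\xexpc(\xexpc+1)$.

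For the positive part, the ``small'' contributions — the entire $\forder_{i,j}<0$ case together with the $\rho$-term of the $\forder_{i,j}\ge0$ case — aggregate by the same column/row summation into the first term $\frac{12\xexpc(\xexpc+1)\|\forder\|_s}{\ntopic^2(\alpha-\rho)}(\|\bZ\|_s+\ntopic\rho)$. The genuinely different contribution is $\sum_{j\in S\setminus\{i\},\,\forder_{i,j}\ge0}2\E[(x_j^*)^2]\forder_{i,j}\le 2\max_j\E[(x_j^*)^2]\sum_{j\in S\setminus\{i\}}[\forder_{i,j}]^+$. Here I would use that off the diagonal $\forder_{i,j}=\gorder_{i,j}$, and $\gorder=-\bSigma^{-1}\bE\bSigma^{-1}+\horder$ with $\horder=\bSigma^{-1}\sum_{k\ge2}(-\bE\bSigma^{-1})^k$, so entrywise $\gorder^+\le[-\bSigma^{-1}\bE\bSigma^{-1}]^++\horder^+=\bSigma^{-1}\bE^-\bSigma^{-1}+\horder^+$. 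The key structural point is that the $S\times S$ sub-block of $\bSigma^{-1}\bE^-\bSigma^{-1}$ depends only on $\bE_{1,1}$ — it equals $\bSigma_{1,1}^{-1}(\bE_{1,1})^-\bSigma_{1,1}^{-1}$ — so its symmetrized norm is $\le\frac{1}{(1-\ell)^2}\|\bE_{1,1}^-\|_s$, while the $S\times S$ block of $\horder^+$ has symmetrized norm $\le\|\horder\|_s\le\frac{\|\bE\|_s^2}{(1-\ell)^2(1-\ell-\|\bE\|_s)}$ by the geometric-series estimate of Lemma~\ref{lem:higher_order}. Bounding the row sum $\sum_{j\in S\setminus\{i\}}[\forder_{i,j}]^+$ by the $\|\cdot\|_\infty$-norm of this block and adding the small term yields the claimed second term.

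The main obstacle is this positive-part argument: one must cleanly isolate the first-order ``self-interaction'' $2\E[(x_j^*)^2]\gorder_{i,j}$ from the higher-order tail, and verify that passing to the $S\times S$ block neither enlarges the relevant norms nor breaks the coupling — that is, that $[\forder_{1,1}]^+$ off the diagonal is controlled by $\bE_{1,1}^-$ (plus $\horder$) rather than by all of $\bE$. This is precisely the invariant that makes the later equilibration step work (restricting updates to $S$ shrinks $\bE_{1,1}$ through $\bE_{1,1}$ itself), so the block bookkeeping deserves care. Everything else — collapsing the various powers of $\xexpc$, $\ntopic$, $\alpha-\rho$ and $\rho$ into the stated constants via $\alpha<1$ and the smallness of $\rho$ — is routine.
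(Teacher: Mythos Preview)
Your proposal is correct and follows essentially the same route as the paper: apply Lemma~\ref{lem:update_E} entrywise, aggregate over the $S\times S$ block (treating the two sign cases for $\forder_{i,j}$ separately), and for the dominant positive-part term $\sum_{j\in S}2\E[(x_j^*)^2]\forder_{i,j}$ use the expansion $\forder=\bSigma^{-1}-\bSigma^{-1}\bE\bSigma^{-1}+\horder$ together with the observation that the $S\times S$ block of $\bSigma^{-1}\bE\bSigma^{-1}$ involves only $\bE_{1,1}$. The only cosmetic difference is that the paper packages the two sign cases as a $\max$ before showing the stated bound dominates each, whereas you sum them directly; your flagged ``collapsing of constants via $\alpha<1$'' is exactly the bookkeeping the paper performs.
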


\begin{proof}[Proof of Lemma \ref{lem:main_update_col_up}]
(1) By Lemma~\ref{lem:update_E}, we have
\begin{align*}
  \| \btE_{1, 1}^{-} \|_{s} 
	\le 
	\max\left\{ 
	\frac{4 \xexpc^2  \| \forder \|_s }{\ntopic^2 (\alpha - \nthres)}   \left\|\bZ \right\|_s  +  \frac{4 \xexpc^2  \| \forder \|_s }{\ntopic^2 (\alpha - \nthres)}  \ntopic \nthres, 
  \frac{8\xexpc \nthres}{\ntopic (\alpha - \nthres)}  (\xexpc  + 1) \| \bZ\|_s  + \frac{2 \xexpc^2}{\ntopic^2} \| \bZ\|_s\right\}.
\end{align*}
Observe that for $\alpha < 1$, 
\begin{align*}
  \frac{4 \xexpc^2  \| \forder \|_s (\| \forder \|_s + 1)}{\ntopic^2 (\alpha - \nthres)}   
	\ge 
	\max\cbr{\frac{4 \xexpc^2  \| \forder \|^2_s }{\ntopic^2 (\alpha - \nthres)}, \frac{2 \xexpc^2}{\ntopic^2} \| \bZ\|_s}.
\end{align*}
Moreover, 
\begin{align*}
  \frac{8\xexpc \nthres}{\ntopic (\alpha - \nthres)}  (\xexpc  + 1) \| \bZ\|_s 
	\ge 
  \frac{4 \xexpc^2  \| \forder \|_s }{\ntopic^2 (\alpha - \nthres)} \ntopic \nthres.
\end{align*}
Therefore, 
\begin{align*}
\| \btE_{1, 1}^{-} \|_{s} \le \frac{4 \xexpc^2  \| \forder \|_s
}{\ntopic^2 (\alpha - \nthres)}  +  \frac{8 \xexpc (\xexpc+1)  \rho \| \forder \|_s
}{\ntopic (\alpha - \nthres)}.
\end{align*}

(2) By Lemma \ref{lem:update_E}, when $\bZ_{i, j} < 0$, 
\begin{align*}
  \btE_{j , i} \le  \frac{4 \xexpc^2  \| \forder^i \|_1
}{\ntopic^2 (\alpha - \nthres)}  \rbr{ \left| \forder_{i, j} \right| + \nthres}.
\end{align*}
When $\bZ_{i, j} \ge 0$, 
\begin{align*}
  \btE_{j, i} \le \frac{8\xexpc \nthres}{\ntopic (\alpha - \nthres)} \rbr{ \frac{\xexpc \| \forder^i \|_1 }{n} + \forder_{i, j}} + 2 \E[(x_j^*)^2] \forder_{i, j}
\end{align*}

Consider a fixed $i$. Let $G = \{ j \in S, \bZ_{i, j} \ge 0 \}$ and let $G^c = S - G$. We know that 
\begin{align*}
  \|[\btE_{1, 1}^+ ]_i \|_1 & = 
	\sum_{j \in [\ntopic] } [\btE_{1, 1}^+ ]_{j, i}
	\\
  & \le  \sum_{j \in G^c}  \frac{4 \xexpc^2  \| \forder^i \|_1 }{\ntopic^2 (\alpha - \nthres)}  \rbr{ \left| \forder_{i, j} \right| + \nthres}
	\\
  & \quad + \sum_{j \in G}\left(\frac{8\xexpc \nthres}{\ntopic (\alpha - \nthres)} \rbr{ \frac{\xexpc \| \forder^i \|_1 }{n} + \forder_{i, j}} + 2 \E[(x_j^*)^2] \forder_{i, j} \right)
	\\
  & \le  \frac{4 \xexpc^2  \| \forder \|_s}{\ntopic^2 (\alpha - \nthres)}  \rbr{  \nbr{ \forder}_s + n \nthres} 
	+ \frac{8\xexpc (\xexpc +1) \nthres}{\ntopic (\alpha - \nthres)} \| \forder\|_s +  \sum_{j \in G}  2 \E[(x_j^*)^2] \forder_{i, j} 
	\\
  & \le  \frac{4 \xexpc^2  \| \forder\|_s^2}{\ntopic^2 (\alpha - \nthres)} +  \frac{4 \xexpc^2  \| \forder \|_s}{\ntopic^2 (\alpha - \nthres)} n \nthres
	+  \frac{8\xexpc (\xexpc +1) \nthres}{\ntopic (\alpha - \nthres)} \| \forder\|_s
	+  \sum_{j \in S}  2 \E[(x_j^*)^2] \forder_{i, j} 
	\\
  & \le  \frac{12 \xexpc(\xexpc + 1)  \| \forder \|_s}{\ntopic^2 (\alpha - \nthres)}  \rbr{  \|\forder\|_s + n \nthres} + \sum_{j \in G} 2 \E[(x_j^*)^2] \forder_{i, j}.
\end{align*}
A similar bound holds for $\|[\btE_{1, 1}^+ ]^i \|_1$.

By the definition of $\bZ$, we know that
\begin{align*} 
  \bZ & = (\bSigma + \bE)^{-1} 
	\\
  & = \bSigma^{-1}  \sum_{k = 0}^{\infty} ( - \bE \bSigma^{-1})^k 
	\\
  & = \bSigma^{-1}  -  \bSigma^{-1}  \bE  \bSigma^{-1}  +    \bSigma^{-1} \sum_{k = 2}^{\infty} ( - \bE \bSigma^{-1})^k.
\end{align*}
Therefore, we know that for $i \not= j$, 
\[
  \bZ_{i, j} \le  -  [\bSigma^{-1}  \bE  \bSigma^{-1}]_{i, j}   + |  \sum_{k = 2}^{\infty}   \bSigma^{-1} [( - \bE \bSigma^{-1})^k ]_{i, j} |.
\]
This implies that 
\begin{align*}
  \sum_{j \in G}  \forder_{i, j} 
	& \le \sum_{j \in G}  \left(-  [\bSigma^{-1}  \bE  \bSigma^{-1}]_{i, j}   + \sum_{k = 2}^{\infty}\left|   \bSigma^{-1} [( - \bE \bSigma^{-1})^k ]_{i, j} \right| \right)
	\\
  & \le \frac{1}{(1 -  \ell)^2}\|\bE_{1, 1}^- \|_s + \frac{1}{1 -  \ell}\frac{\frac{\|\bE\|_s^2}{(1 -  \ell)^2}}{1 - \frac{\|\bE\|_s}{1 -  \ell}}
	\\
  & \le  \frac{1}{(1 -  \ell)^2}\|\bE_{1, 1}^- \|_s + \frac{\| \bE\|_s^2}{(1 - \ell)^2 (1 - \ell - \|\bE\|_s)}.
\end{align*}

Putting together, we complete the proof. 
\end{proof}

\begin{lem}[Update $\btE_{2, 1}$] \label{lem:main_update_col_down} 
Let $\btE_{2, 1}$ be defined as above, and suppose $\| \xi \|_{\infty} \le \rho < \alpha < 1$, $\bSigma \succeq (1 - \ell) \bI$ and $\bE_{1,2} \ge 0$, then we have
\begin{align*}	
  \| \btE_{2, 1} \|_s 
  &\le 
	\frac{12 \xexpc(\xexpc+1) \| \forder \|_s}{\ntopic^2 (\alpha - \nthres)}  \rbr{ \left\|\bZ \right\|_s +  \ntopic \nthres}  
	+ 2 \max_{j \in [\ntopic]} \{\E[(x_j^*)^2] \} \left(  \frac{\| \bE\|_s^2}{(1 - \ell)^2 (1 - \ell - \|\bE\|_s)} \right).
\end{align*}
\end{lem}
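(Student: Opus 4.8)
The plan is to recycle the proof of Lemma~\ref{lem:main_update_col_up}(2) almost verbatim. Indeed $\btE_{2,1}$ collects the entries $\btE_{j,i}$ with $j\notin S$ and $i\in S$, so always $i\neq j$ and Lemma~\ref{lem:update_E} applies to each such entry exactly as it did for $\btE_{1,1}$. First I would expand $\nbr{\btE_{2,1}}_s=\max\cbr{\nbr{\btE_{2,1}}_1,\nbr{\btE_{2,1}}_\infty}$. For the $l_1$ norm, fix a column $i\in S$ and split the rows $j\notin S$ into $G=\cbr{j:\forder_{i,j}\ge 0}$ and $G^c=\cbr{j:\forder_{i,j}<0}$; bound $\abr{\btE_{j,i}}$ on $G^c$ by case (1) of Lemma~\ref{lem:update_E}, and on $G$ by case (2) (using both the upper and the lower bound, so that the full $\abr{\btE_{j,i}}$ is controlled --- here the standing moment hypothesis $\xexpc^2=O(\xsl\ntopic)$ ensures $2\xexpc^2/\ntopic^2\le 2\E[(x_j^*)^2]$, so the positive-side estimate dominates the negative-side one). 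Summing over $j$, using $\sum_j\abr{\forder_{i,j}}\le\nbr{\forder}_s$ and $\abr{G^c}\le\ntopic$, every term not carrying the factor $\E[(x_j^*)^2]$ collapses --- exactly as in that earlier proof --- into $\frac{12\xexpc(\xexpc+1)\nbr{\forder}_s}{\ntopic^2(\alpha-\nthres)}(\nbr{\bZ}_s+\ntopic\nthres)$, leaving the residual $2\max_{k}\E[(x_k^*)^2]\cdot\sum_{j\in G}\forder_{i,j}$. The mirror computation with a fixed row $j\notin S$ summed over $i\in S$ handles $\nbr{\btE_{2,1}}_\infty$ and leaves $2\max_k\E[(x_k^*)^2]\cdot\sum_{i\in G'}\forder_{i,j}$ with $G'=\cbr{i\in S:\forder_{i,j}\ge 0}$.

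The only step that genuinely differs from Lemma~\ref{lem:main_update_col_up}(2) --- and the crux of the lemma --- is the bound on $\sum_{j\in G}\forder_{i,j}$ (and on $\sum_{i\in G'}\forder_{i,j}$). I would use the Neumann expansion $\forder=\bSigma^{-1}-\bSigma^{-1}\bE\bSigma^{-1}+\bSigma^{-1}\sum_{k\ge 2}(-\bE\bSigma^{-1})^k$, which for $i\neq j$ gives $\forder_{i,j}\le -\bSigma^{-1}_{i,i}\bE_{i,j}\bSigma^{-1}_{j,j}+\abr{\sbr{\bSigma^{-1}\sum_{k\ge 2}(-\bE\bSigma^{-1})^k}_{i,j}}$. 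The point is that for $i\in S$ and $j\notin S$ the entry $\bE_{i,j}$ lies in the block $\bE_{1,2}$, which is nonnegative by hypothesis, and $\bSigma$ is diagonal and positive; hence the first-order term $-\bSigma^{-1}_{i,i}\bE_{i,j}\bSigma^{-1}_{j,j}$ is $\le 0$ and disappears, and only the tail survives. Summing it and using submultiplicativity together with $\nbr{\bSigma^{-1}}_\infty\le 1/(1-\ell)$ and $\nbr{\bE\bSigma^{-1}}_\infty\le\nbr{\bE}_s/(1-\ell)$ yields $\sum_{j\in G}\forder_{i,j}\le\frac{\nbr{\bE}_s^2}{(1-\ell)^2(1-\ell-\nbr{\bE}_s)}$, and the $l_1$ version gives the same bound for $\sum_{i\in G'}\forder_{i,j}$. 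In Lemma~\ref{lem:main_update_col_up}(2) the $(1,1)$ block was not sign-constrained, so this step produced the extra summand $\frac{1}{(1-\ell)^2}\nbr{\bE_{1,1}^-}_s$; that summand is simply absent here, which is the whole reason the stated right-hand side is smaller. Plugging this into the estimates from the first paragraph and taking the maximum over the $l_1$ and $l_\infty$ versions gives the claimed inequality.

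I do not anticipate a real obstacle: past the $\bE_{1,2}\ge 0$ cancellation everything is a transcription of the earlier lemma, and the only mild care needed is the sign bookkeeping flagged above (because $\bE_{2,1}$ itself is not assumed nonnegative, $\btE_{2,1}$ may have negative entries, so one bounds $\abr{\btE_{j,i}}$ by whichever of the two case-(2) estimates is larger and then absorbs the $O(\xexpc^2/\ntopic^2)$ slack into the first summand). The point worth emphasizing is structural rather than technical: this cancellation is exactly what the Equilibration scheme needs, since scaling up the columns indexed by $S$ inflates $\bE_{2,1}$, so each ColumnUpdate must shrink $\bE_{2,1}$ without any feedback from the nonnegative block $\bE_{1,2}$, and the computation above is what certifies that shrinkage.
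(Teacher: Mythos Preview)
Your proposal is correct and takes essentially the same approach as the paper, whose own proof is the single sentence ``almost the same as Lemma~\ref{lem:main_update_col_up}, combined with the fact that $\bE_{1,2}\ge 0$ entry-wise.'' You have correctly identified and articulated the crux: for $i\in S$, $j\notin S$ the first-order Neumann term $-\bSigma^{-1}_{i,i}\bE_{i,j}\bSigma^{-1}_{j,j}$ is nonpositive because $\bE_{i,j}$ sits in the $\bE_{1,2}$ block, so the $\nbr{\bE_{1,1}^-}_s$ contribution present in Lemma~\ref{lem:main_update_col_up}(2) vanishes here. Your sign-bookkeeping remark (that $\btE_{2,1}$ is not sign-constrained, so one must control $\abr{\btE_{j,i}}$ via both sides of case~(2) and absorb the resulting $O(\xexpc^2/\ntopic^2)$ slack) is a detail the paper suppresses but which you handle correctly.
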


\begin{proof}[Proof of Lemma \ref{lem:main_update_col_down}]

The proof is almost the same as that of Lemma \ref{lem:main_update_col_up}, combined with the fact that $\bE_{1, 2} \ge 0$ entry-wise. 
\end{proof}

\subsubsection{Recurrence}

Recall that 
\[
  \bA = \bAg (\bSigma + \bE) + \bN 
\]
and recall that $\bE_{1, 1} $ is the submatrix indexed by $S \times S$, and $\bE_{1, 2}, \bE_{2, 1} , \bE_{2, 2} $ are defined according.
Recall that $\bM_S$ denote the submatrix of $\bM$ formed by columns indexed by $S$, and let $\bM_{-S}$ denote the submatrix formed by the other columns.

\begin{lem}[Recurrence] \label{lem:recurrence}
Suppose the conditions in Lemma~\ref{lem:main_col_update} hold. 
Then with probability at least $1-\delta$, after $\frac{2\ln\rbr{ \epsilon / (\gamma \ell)} }{\ln(1-\eta)}$ iterations, 
\begin{align*}
  (1-\ell) \bI & \preceq \bSigma^{(t)},
	\\
  \|(\bE_{1,1}^{(t)})^- \|_s & \le  \epsilon + h_1,
	\\
	\|(\bE_{1,1}^{(t)})^+ \|_s  & \le  \frac{r R}{(1 -  \ell)^2} (\epsilon+h_1)   + h_2 + \epsilon,
	\\
  \|(\bE_{2,1}^{(t)}) \|_s & \le \epsilon + h_2. 
\end{align*}
Also, after $\frac{\ln\rbr{ \epsilon' / U } }{\ln(1-\eta)}$ iterations, for both adversarial and unbiased noise,
\begin{align*}
\nbr{\bN_{-S}^{(t)}}_{\infty} \le U, \quad\nbr{\bN_S^{(t)}}_{\infty} \le (1-\epsilon)U.
\end{align*}
\end{lem}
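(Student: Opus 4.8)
The plan is to prove Lemma~\ref{lem:recurrence} by induction on $t$, carrying a set of invariants slightly stronger than the four bounds claimed. The key structural observation is that Algorithm~\ref{alg:col_update} only modifies the columns indexed by $S$, so the submatrices $\bE_{1,2}^{(t)}$, $\bE_{2,2}^{(t)}$ and $\bN_{-S}^{(t)}$ never change; they therefore retain the initialization properties from Lemma~\ref{lem:main_col_update} ($\bE_{1,2}^{(t)},\bE_{2,2}^{(t)}\ge0$ entry-wise, $\|(\bE_{1,2}^{(t)};\bE_{2,2}^{(t)})\|_s\le\ell$, $\|\bN_{-S}^{(t)}\|_\infty=\|\bN_{-S}^{(0)}\|_\infty\le U$) for free. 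The invariants I would maintain at the start of iteration $t$ are: $(1-\ell)\bI\preceq\bSigma^{(t)}\preceq u\bI$; $\|\bE_{1,1}^{(t)}\|_s\le\beta\ell$ and $\|\bE_{2,1}^{(t)}\|_s\le\beta\ell$ (so $\|\bE^{(t)}\|_s\le\beta\ell$, which, with $\beta\ell\le1/2$, keeps Lemma~\ref{lem:higher_order} usable with parameter $\ell_e=\beta\ell$); $\|\bN^{(t)}\|_\infty\le3U$, which via Lemma~\ref{lem:noise_term} and $2U\le1/(16\|(\bAg)^\dagger\|_\infty)$ gives $\|\xi^{(t)}\|_\infty\le\rho<\alpha$ using \eqnref{eqn:noise}; and the four quantitative bounds on $\|(\bE_{1,1}^{(t)})^\pm\|_s$, $\|\bE_{2,1}^{(t)}\|_s$, $\|\bN_S^{(t)}\|_\infty$ that the lemma asserts. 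The base case $t=0$ is the initialization hypothesis of Lemma~\ref{lem:main_col_update}.

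For the inductive step I would first bound $\forder^{(t)},\gorder^{(t)}$ via Lemma~\ref{lem:higher_order} --- in particular $\|\forder^{(t)}\|_s\le1/(1-\ell-\beta\ell)$ --- and check the hypotheses of the one-step lemmas. For the diagonal, combine Lemma~\ref{lem:update_diag} with the update $\bSigma^{(t+1)}_{i,i}=(1-\eta)\bSigma^{(t)}_{i,i}+\eta r_i\btSigma^{(t)}_{i,i}$ and run the same ``assume for contradiction $\bSigma^{(t+1)}_{i,i}<1-\ell$'' argument as in the proof of Theorem~\ref{thm:main_correlated_noise}: a violation pins $\bSigma^{(t)}_{i,i}$ into a thin band above $1-\ell$, where \eqnref{eq:diag2} forces $r_i\btSigma^{(t)}_{i,i}$ large enough that $\bSigma^{(t+1)}_{i,i}>\bSigma^{(t)}_{i,i}$, a contradiction; the upper bound $\preceq u\bI$ follows from the upper estimate of Lemma~\ref{lem:update_diag} together with \eqnref{eq:diag1}, and columns $i\notin S$ are untouched. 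For the columns in $S$, Lemma~\ref{lem:main_update_col_up}(1) bounds $\|(\btE_{1,1}^{(t)})^-\|_s$, Lemma~\ref{lem:main_update_col_up}(2) bounds $\|(\btE_{1,1}^{(t)})^+\|_s$ by a term controlled by $h_2$ plus a term proportional to $\|(\bE_{1,1}^{(t)})^-\|_s$, and Lemma~\ref{lem:main_update_col_down} bounds $\|\btE_{2,1}^{(t)}\|_s$ --- crucially with no $\|(\bE_{1,1})^-\|_s$ term, which is exactly where $\bE_{1,2}^{(t)}\ge0$ is used. Applying the column update $\bE^{(t+1)}=(1-\eta)\bE^{(t)}+\eta r_i\btE^{(t)}$ block by block, and bookkeeping the scalings $r_i\le r$ and the balance of the features in $S$ (this is where the particular forms of $h_1,h_2$ are reverse-engineered), gives
\begin{align*}
  \|(\bE_{1,1}^{(t+1)})^-\|_s &\le (1-\eta)\|(\bE_{1,1}^{(t)})^-\|_s + \eta h_1,\\
  \|(\bE_{1,1}^{(t+1)})^+\|_s &\le (1-\eta)\|(\bE_{1,1}^{(t)})^+\|_s + \eta\Big(\tfrac{rR}{(1-\ell)^2}\|(\bE_{1,1}^{(t)})^-\|_s + h_2\Big),\\
  \|\bE_{2,1}^{(t+1)}\|_s &\le (1-\eta)\|\bE_{2,1}^{(t)}\|_s + \eta h_2,
\end{align*}
up to the empirical-versus-population discrepancy $\bN_s$, which for $N=\mathrm{poly}(n,m,1/\epsilon,1/\delta)$ is, on an event of probability $\ge1-\delta$, smaller than any prescribed $\epsilon$-slack by a routine concentration bound (each decoded coordinate is bounded).

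To finish I would solve these recursions, e.g. with Lemma~\ref{l:simplerec}: $\|(\bE_{1,1})^-\|_s$ and $\|\bE_{2,1}\|_s$ decay geometrically at rate $1-\eta$ toward $h_1$ and $h_2$, so starting from $\le\gamma\ell$ they are $\le\epsilon+h_1$ and $\le\epsilon+h_2$ after $\tfrac{\ln(\epsilon/(\gamma\ell))}{\ln(1-\eta)}$ iterations; from then on, with $\|(\bE_{1,1})^-\|_s\le\epsilon+h_1$ fixed, the $\|(\bE_{1,1})^+\|_s$ recursion has fixed point $\tfrac{rR}{(1-\ell)^2}(\epsilon+h_1)+h_2$, reached to within $\epsilon$ after a further $\tfrac{\ln(\epsilon/(\gamma\ell))}{\ln(1-\eta)}$ iterations --- hence the factor $2$ in the iteration count. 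Conditions \eqnref{eqn:eterm} (and $\beta\ell\le1/2$) are precisely what keep $\|\bE_{1,1}^{(t)}\|_s,\|\bE_{2,1}^{(t)}\|_s\le\beta\ell$ throughout the transient, closing that invariant. The noise is handled in parallel: columns outside $S$ are static, and for columns in $S$, Lemma~\ref{lem:bound_error} (part (1) under (\textbf{N1}), part (2) under (\textbf{N2}), the latter also needing $\|(\bA^{(t)})^\dagger\nu\|_\infty\le\rho'<\alpha$, which Lemma~\ref{lem:noise_term}/Lemma~\ref{lem:pseudo_inverse} supply) bounds $\|\btN_S^{(t)}\|_\infty$ so that the column update gives $\|\bN_S^{(t+1)}\|_\infty\le(1-\eta)\|\bN_S^{(t)}\|_\infty+\eta(U_a+\epsilon')$ (resp. $U_n+\epsilon'$); by \eqnref{eqn:adv_noise} (resp. \eqnref{eqn:unbiased_noise}) this is $\le(1-\eta)\|\bN_S^{(t)}\|_\infty+\eta(1-\epsilon)U$, so starting from $\le2U$ it stays $\le2U$ and reaches $\le(1-\epsilon)U$ after $\tfrac{\ln(\epsilon'/U)}{\ln(1-\eta)}$ iterations. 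Combined with $\|\bN_{-S}^{(t)}\|_\infty\le U$ this gives $\|\bN^{(t)}\|_\infty\le3U$ (hence $\|\xi^{(t)}\|_\infty\le\rho$), closing the last invariant; a union bound over the $T$ iterations and the concentration events yields the stated probability.

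The main obstacle is the coupled recursion for the positive and negative parts of $\bE_{1,1}$: the positive block is fed by the negative block with a non-negligible coefficient $\tfrac{rR}{(1-\ell)^2}$, so one cannot control $\|(\bE_{1,1})^+\|_s$ in isolation and must argue it does not blow up over the long window in which $\|(\bE_{1,1})^-\|_s$ is still of order $\ell$ rather than $h_1$. Getting this to close is exactly what forces the specific choices of $h_1,h_2$ and the whole chain \eqnref{eqn:neg_eterm}--\eqnref{eqn:eterm}; verifying that those conditions make the induction go through --- and that the block split is genuinely compatible with the column-restricted update, so that $\bE_{1,2}$, $\bE_{2,2}$, $\bN_{-S}$ really do remain fixed with their sign and norm properties --- is the crux. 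The remainder is a faithful re-run of the one-step machinery already established for Theorem~\ref{thm:main_correlated_noise}.
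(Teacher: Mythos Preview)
Your proposal is correct and follows essentially the same approach as the paper: induction on $t$ carrying invariants on $\bSigma^{(t)},\bE^{(t)},\bN^{(t)}$, derived from the one-step Lemmas~\ref{lem:update_diag}, \ref{lem:main_update_col_up}, \ref{lem:main_update_col_down}, \ref{lem:bound_error}, yielding exactly the three scalar recursions you wrote, and then solved in two phases via Lemma~\ref{l:simplerec}.

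Two small points where the paper is a bit more explicit than your sketch. First, for the diagonal lower bound the paper does a direct case split $\bSigma^{(t)}_{i,i}\ge1$ versus $\bSigma^{(t)}_{i,i}\le1$ (using \eqnref{eq:diag1} and \eqnref{eq:diag2} respectively) rather than the contradiction-style argument from Theorem~\ref{thm:main_correlated_noise}; either works. Second, for the transient control of $\|\bE_{1,1}^{(t)}\|_s$ (the obstacle you correctly identify), the paper carries the finer invariants $\|(\bE_{1,1}^{(t)})^-\|_s\le\gamma\ell$ and $\|(\bE_{1,1}^{(t)})^+\|_s\le\tfrac{rR}{(1-\ell)^2}\gamma\ell+h_2$ separately, and then invokes Lemma~\ref{lem:rec_diffh} (simple coupling) on the pair of recursions to bound the sum uniformly by $(\beta-1)\ell$ via \eqnref{eqn:eterm}. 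Your coarser invariant $\|\bE_{1,1}^{(t)}\|_s\le\beta\ell$ is what you ultimately need, but closing it directly without tracking the two parts separately is awkward; the paper's decomposition makes the induction cleaner.
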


\begin{proof}[Proof of Lemma~\ref{lem:recurrence}]
We first prove the following claims by induction. \\
(1) $(1-\ell) \bI \preceq \bSigma^{(t)}$, \\
(2) 
\begin{align*}
  \|(\bE_{1,1}^-)^{(t)} \|_s & \leq \gamma \ell
  \\
  \|(\bE_{1,1}^+)^{(t)} \|_s & \leq \frac{rR}{(1- \ell)^2} \gamma\ell + h_2
	\\
  \|\bE_{2,1}^{(t)} \|_s & \leq \gamma \ell
	\\
  \|\bE_{1,2}^{(t)} \|_s & \leq \ell
	\\
	\|\bE_{2,2}^{(t)}\|_s & \leq \ell,
\end{align*}
(3) $ \|\bE^{(t)}\|_s \leq \beta \ell$, \\
(4) for adversarial noise, 
$
  \nbr{ \bN_S^{(t)} }_\infty  \le U + U_a, 
$ 
and 
$
\| \xi^{(t)} \|_{\infty}  \le \rho;
$
or for unbiased noise, 
$
  \nbr{ \bN_S^{(t)} }_\infty  \le U + U_u.
$ 

The basis case for $t=0$ is trivial by assumptions. Now assume they are true for iteration $t$ and show that they are true for iteration $t+1$.

(1) By the update of $\bSigma$, we have 
\begin{align*}
  \bSigma^{(t+1)} = (1-\eta) \bSigma^{(t)} + \eta r \btSigma^{(t)}.
\end{align*}

To lower bound $\bSigma^{(t+1)}_{i,i}$, we will consider two cases, $\bSigma^{(t)}_{i,i} \geq 1$ and $\bSigma^{(t)}_{i,i} \leq 1$.

For $\bSigma^{(t)}_{i,i} \geq 1$, by Lemma~\ref{lem:update_diag},
\begin{align*}
  \btSigma_{i, i } 
	& \ge    
	\E\sbr{\rbr{x_i^*}^2} \rbr{ 2 \bSigma^{-1}_{i, i} - 2 \abr{\gorder_{i,i} } } - \frac{2\xexpc}{\ntopic} \rbr{  \alpha +2\nthres  + \frac{\xexpc}{\ntopic} \bSigma^{-1}_{i, i} + \frac{2\xexpc}{\ntopic}  \nbr{ \sbr{ \gorder}^i  }_1   }
	\\
	& \ge - 2 R_i \abr{\gorder_{i,i}} -   \left( \frac{2\xexpc}{\ntopic} \rbr{  \alpha + 2\nthres  + \frac{\xexpc}{\ntopic} \bSigma^{-1}_{i, i} + \frac{2\xexpc}{\ntopic}  \nbr{ \sbr{ \gorder}^i  }_1   } \right).
\end{align*}

Hence, 
\begin{align*}
  \bSigma^{(t+1)}_{i,i} 
	& \ge (1-\eta) \bSigma^{(t)}_{i,i} - \eta \left( 2 r_i R_i \abr{\gorder^{(t)}_{i,i} } +  r_i\left( \frac{2\xexpc}{\ntopic} \rbr{  \alpha +2\nthres  + \frac{\xexpc}{\ntopic} (\bSigma_{i, i}^{(t)})^{-1} + \frac{2\xexpc}{\ntopic}  \nbr{ \sbr{ \gorder^{(t)}}^i  }_1   } \right) \right) 
	\\
	& \ge 1 - \eta \left( 1+ 2 r_i R_i \abr{\gorder^{(t)}_{i,i} } +  r_i \frac{2\xexpc}{\ntopic} \rbr{  \alpha +2\nthres  + \frac{\xexpc}{\ntopic}   + \frac{2\xexpc}{\ntopic}  \nbr{ \sbr{ \gorder^{(t)}}^i  }_1   } \right) 
	\\
  & \ge 1 - \eta \left( 1+ 2 r_i R_i \frac{1}{(1-\ell)^2}\frac{\beta \ell^2}{1- \beta \ell - \ell}  +  r_i \frac{2\xexpc}{\ntopic} \rbr{  \alpha +2\nthres  + \frac{\xexpc}{\ntopic}   + \frac{2\xexpc}{\ntopic}  \frac{\beta \ell (1-\beta\ell)}{(1-\ell)^2(1-\beta \ell - \ell)} \right)  }.
\end{align*}
where we use the bound on $\bV^{(t)}$.
By condition \eqnref{eq:diag1}, the claim follows. 

For $ \bSigma^{(t)}_{i, i } \le 1$, again by Lemma~\ref{lem:update_diag}, 
\begin{align*}
  \btSigma^{(t)}_{i, i } 
	\ge 
	\E\sbr{\rbr{x_i^*}^2}  \rbr{ 2  - 2 \abr{\gorder^{(t)}_{i,i} } } -    \left( \frac{2\xexpc}{\ntopic} \rbr{  \alpha +2\nthres  + \frac{\xexpc}{\ntopic} (\bSigma_{i, i}^{(t)})^{-1} + \frac{2\xexpc}{\ntopic}  \nbr{ \sbr{ \gorder^{(t)}}^i  }_1   } \right).
\end{align*}
Hence, 
\begin{align*}
  \bSigma^{(t+1)}_{i,i} 
	& = 
	(1-\eta) \bSigma^{(t)}_{i,i} + \eta r \btSigma^{(t)}_{i,i}  
	\\
  & \ge (1-\eta) (1-\ell) 
	\\
  & \quad +\eta \left( r_i R_i \rbr{ 2  - 2 \abr{\gorder^{(t)}_{i,i} } } -  r_i \left( \frac{2\xexpc}{\ntopic} \rbr{  \alpha +2\nthres  + \frac{\xexpc}{\ntopic} (\bSigma_{i, i}^{(t)})^{-1} + \frac{2\xexpc}{\ntopic}  \nbr{ \sbr{ \gorder^{(t)} }^i  }_1   } \right) \right) 
	\\
  & \ge (1-\eta) (1-\ell) +\eta  r_i R_i\rbr{ 2  - 2 \frac{1}{(1-\ell)^2}\frac{\beta \ell^2}{1- \beta \ell - \ell}  }
	\\
	& \quad - \eta  r_i \left( \frac{2\xexpc}{\ntopic} \rbr{  \alpha +2\nthres  + \frac{\xexpc}{\ntopic} \frac{1}{1 - \ell} + \frac{2\xexpc}{\ntopic}   \frac{\beta \ell(1-\beta \ell)}{(1-\ell)^2 (1- \beta \ell -\ell)}   } \right).
\end{align*}    
By condition \eqnref{eq:diag2}, the claim follows. 


(2) By Lemma~\ref{lem:main_update_col_up},
\begin{align*}
  \|(\btE_{1,1}^{(t+1)})^- \|_s & \le \frac{8 \xexpc (\xexpc+1)  \rho \| \forder^{(t)}  \|_s}{\ntopic (\alpha - \nthres)} 
	+ \frac{4 \xexpc^2  \| \forder^{(t)} \|_s (\| \forder^{(t)} \|_s + 1)}{\ntopic^2 (\alpha - \nthres)},
	\\
	\|(\btE_{1,1}^{(t+1)})^+ \|_s & \le  \frac{R}{(1 -  \ell)^2}\|(\bE_{1, 1}^-)^{(t)} \|_s  + \frac{R \| \bE^{(t)} \|_s^2}{(1 - \ell)^2 (1 - \ell - \|\bE^{(t)} \|_s)}  
	\\
  & \quad + \frac{12 \xexpc(\xexpc+1)  \| \forder^{(t)} \|_s}{\ntopic^2 (\alpha - \nthres)}  \rbr{ \left\|\forder^{(t)} \right\|_s +  \ntopic \nthres}.
\end{align*}
By the update rule, we have
\begin{align}
  \|(\bE_{1,1}^{(t+1)})^- \|_s 
  & \le (1-\eta)\|(\bE_{1,1}^{(t)})^-\|_s 
	\nonumber	\\
	& \quad +   r\eta \frac{8 \xexpc (\xexpc+1)  \rho}{(1-\ell - \beta \ell)\ntopic (\alpha - \nthres)} + \frac{4 \xexpc^2  }{(1-\ell - \beta\ell)\ntopic^2 (\alpha - \nthres)} \rbr{\frac{1}{(1-\ell - \beta\ell)} + 1} r\eta, 
	\nonumber \\
	& \le  (1-\eta)\|(\bE_{1,1}^{(t)})^-\|_s  + \eta h_1 
	\label{eqn:neg_eterm_re}
	\\
  \|(\bE_{1,1}^{(t+1)})^+ \|_s 
	& \le 
	(1-\eta) \|(\bE_{1,1}^{(t)})^+ \|_s  + r\eta\frac{R}{(1 -  \ell)^2}\|(\bE_{1, 1}^{(t)})^- \|_s  
	\nonumber \\
  & \quad +  r\eta \frac{R \beta^2 \ell^2 }{(1 - \ell)^2 (1 - \ell - \beta\ell)}  
	\nonumber \\
  & \quad + \frac{12 \xexpc(\xexpc+1)}{\ntopic^2 (\alpha - \nthres)(1- \ell - \beta \ell)}  \rbr{ \frac{1}{1- \ell - \beta \ell} +  \ntopic \nthres} r\eta 
	\nonumber \\
	& \le (1-\eta)\|(\bE_{1,1}^{(t)})^+\|_s  + r\eta\frac{R}{(1 -  \ell)^2}\|(\bE_{1, 1}^{(t)})^- \|_s   + \eta h_2 
	\label{eqn:pos_eterm_re}
\end{align}
where we use $\sym{\bE^{(t)}} \le \beta \ell $ and  $\| \forder^{(t)}  \|_s \le \frac{1}{1 - \ell - \beta\ell}$.

The claim on $\|(\bE_{1,1}^{(t+1)})^- \|_s$ follows from  \eqnref{eqn:neg_eterm_re} and the condition \eqnref{eqn:neg_eterm}.

For $\|(\bE_{1,1}^{(t+1)})^+ \|_s$, by induction \eqnref{eqn:pos_eterm_re} becomes 
\begin{align*}
  \|(\bE_{1,1}^{(t+1)})^+ \|_s 
	& \le   
	(1-\eta) \|(\bE_{1,1}^{(t)})^+ \|_s  + r\eta\frac{R}{(1 -  \ell)^2}\gamma \ell   + \eta h_2 \le \frac{rR}{(1- \ell)^2} \gamma\ell + h_2.
\end{align*}

Now we consider $\|(\bE_{2,1}^{(t+1)}) \|_s$. 
By Lemma~\ref{lem:main_update_col_down}, 
\begin{align}
  \|(\bE_{2,1}^{(t+1)}) \|_s 
	& \le (1-\eta) \|(\bE_{2,1}^{(t)}) \|_s  
	\nonumber \\
	& \quad +  r\eta \frac{R \beta^2 \ell^2 }{(1 - \ell)^2 (1 - \ell - \beta\ell)}  
	\nonumber \\ 
	& \quad +   \frac{12 \xexpc(\xexpc+1)  }{\ntopic^2 (\alpha - \nthres)(1- \ell - \beta \ell)}  \rbr{ \frac{1}{1- \ell - \beta \ell} +  \ntopic \nthres} r\eta
	\nonumber \\
	& = (1-\eta) \|(\bE_{2,1}^{(t)}) \|_s   + \eta h_2 \label{eqn:down_eterm}
	\\
  & \le \gamma \ell \nonumber
\end{align}
where the last line follows by condition \eqnref{eqn:pos_eterm} and induction. 

Finally, clearly we have $\sym{\bE_{1,2}^{(t+1)}} \leq \ell$ and $\sym{\bE_{2,2}^{(t+1)}} \leq \ell$, since they are not updated. 

(3) Note that  \eqnref{eqn:neg_eterm_re} \eqnref{eqn:pos_eterm_re} hold for all iterations up to $t+1$.
Then by Lemma~\ref{lem:rec_diffh}, we have
\begin{align}
	& \quad \|(\bE_{1,1}^{(t+1)})^- \|_s + \|(\bE_{1,1}^{(t+1)} )^+\|_s
	\nonumber\\
	& \le   \max\cbr{
	 \|(\bE_{1,1}^{(0)})^- \|_s +  \|(\bE_{1,1}^{(0)})^+ \|_s ,
	\|(\bE_{1,1}^{(0)})^+ \|_s +  h_1,
	h_2 + \rbr{ \frac{rR}{(1 -  \ell)^2}+1}  \|(\bE_{1,1}^{(0)})^- \|_s,
	h_2 + \rbr{ \frac{rR}{(1 -  \ell)^2}+1}  h_1	
	}. \nonumber
\end{align}

Since $h_1 \le \ell $ and $h_2 \le \ell$ by \eqnref{eqn:neg_eterm}\eqnref{eqn:pos_eterm}, and $\|(\bE_{1,1}^{(0)})^- \|_s +  \|(\bE_{1,1}^{(0)})^+ \|_s \le \ell$ by assumption, we have 
\begin{align}
	 \|(\bE_{1,1}^{(t+1)})^- \|_s + \|(\bE_{1,1}^{(t+1)} )^+\|_s
	& \le \max\cbr{	\ell + h_1, h_2 + \rbr{ \frac{rR}{(1 -  \ell)^2}+1}  \ell }.
\end{align}

Then we have by condition~\eqnref{eqn:eterm}, 
$$
  	\|(\bE_{1,1}^{(t+1)})^- \|_s + \|(\bE_{1,1}^{(t+1)})^+ \|_s
 \le   (\beta -1)\ell, ~~\|\bE^{(t+1)}\|_s \le \beta \ell.
$$

(4) Finally, we consider the noise.
We first consider the adversarial noise. 
Set the sample size $N$ to be large enough, so that
by Lemma~\ref{lem:bound_error}, we have
\begin{align*}
  \abr{ \titime{\btN}{t}_{i,j} }
  & \le \frac{4 \cnoise \xexpc }{(1 - 2 \ell)^2 n  (\alpha - \rho)} + \abr{ [\titime{\btN}{t}_s]_{i,j}} 
	 \le  \frac{8 \cnoise C_1}{n (\alpha - \rho)}
\end{align*}
and thus
\begin{align}
	\nbr{ \titime{\bN}{t+1} }_\infty 
	& \le (1 - \eta)  \nbr{ \titime{\bN}{t} }_\infty +  \eta  \frac{8 r \cnoise C_1}{ \alpha - \rho}. \label{eqn:ad_noise_re}
\end{align}
Then for any $t\ge 0$,
\begin{align*}
  \nbr{ \titime{\bN}{t} }_\infty \le \nbr{ \titime{\bN}{0 } }_\infty + \frac{8 r \cnoise C_1}{ \alpha - \rho} \le U + \frac{8 r \cnoise C_1}{ \alpha - \rho} \le 2U + U_a 
\end{align*}
where the last inequality is by the definition of $U_a$.
On the other hand, by Lemma \ref{lem:noise_term}, we have
\begin{align*}
  \| \xi^{(t)} \|_{\infty} 
	& \le 3 \| (\bAg)^{\dagger} \|_{\infty} (\|\titime{\bN}{t} \|_{\infty} + \cnoise )  
	\\
	& \le  3 \| (\bAg)^{\dagger} \|_{\infty}\rbr{ 2U + \frac{8 r \cnoise C_1}{ \alpha - \rho}  + \cnoise} 
	\\
	& \le \rho
\end{align*}
where the last inequality is due to condition~\eqnref{eqn:adv_noise}.

We now consider the unbiased noise, where the proof is similar. 
Set the sample size $N$ to be large enough, so that
by Lemma~\ref{lem:bound_error}, we have
\begin{align*}
  \abr{ \titime{\btN}{t}_{i,j} }
	& \le \frac{2 \xexpc \cnoise \rho' (1 + \| \bA^{\dagger} \bN^{(t)} \|_{\infty})}{(1 - 2 \ell)  n (\alpha - \rho')}  + \abr{ \sbr{\bN_s}_{i,j} }
	\\
	& \le \frac{8 \xexpc \cnoise^2   \nbr{(\bAg)^\dagger}_{\infty} }{(1 - 2 \ell)  n (\alpha -  2 \cnoise   \nbr{(\bAg)^\dagger}_{\infty})}  + \abr{ \sbr{\bN_s}_{i,j} }
	\\
	& \le \frac{10 \xexpc \cnoise^2   \nbr{(\bAg)^\dagger}_{\infty} }{(1 - 2 \ell)  n (\alpha -  2 \cnoise   \nbr{(\bAg)^\dagger}_{\infty})},
\end{align*}
and thus
\begin{align}
  \nbr{ \titime{\bN}{t+1}_S }_\infty 
	& \le (1 - \eta)  \nbr{ \titime{\bN}{t}_S }_\infty +  \eta  \frac{10 r \xexpc \cnoise^2   \nbr{(\bAg)^\dagger}_{\infty} }{(1 - 2 \ell)  (\alpha -  2 \cnoise   \nbr{(\bAg)^\dagger}_{\infty})}.
	\label{eqn:unbiased_noise_re}
\end{align}
Then for any $t\ge 0$,
\begin{align*}
  \nbr{ \titime{\bN}{t}_S }_\infty \le \nbr{ \titime{\bN_S}{0 } }_\infty + \frac{10 r \xexpc \cnoise^2   \nbr{(\bAg)^\dagger}_{\infty} }{(1 - 2 \ell)  (\alpha -  2 \cnoise   \nbr{(\bAg)^\dagger}_{\infty})} \le 2U + \frac{10 r \xexpc \cnoise^2   \nbr{(\bAg)^\dagger}_{\infty} }{(1 - 2 \ell)  (\alpha -  2 \cnoise   \nbr{(\bAg)^\dagger}_{\infty})} \le 2U + U_n
\end{align*}
where the last inequality is by the definition of $U_n$.
This completes the proof for the claims. 

Now, after proving the claims, we are ready to prove the last statement of the lemma.
First, by~\eqnref{eqn:neg_eterm_re} and Lemma~\ref{l:simplerec}, we have that after $\frac{\ln\rbr{ \epsilon / (\gamma \ell)} }{\ln(1-\eta)}$ iterations, 
\begin{align*}
  \|(\bE_{1,1}^{(t)})^- \|_s \le \epsilon + h_1. 
\end{align*}

Now \eqnref{eqn:pos_eterm_re} becomes
\begin{align}
  \|(\bE_{1,1}^{(t+1)})^+ \|_s  & \le (1-\eta)\|(\bE_{1,1}^{(t)})^+\|_s  + r\eta\frac{R}{(1 -  \ell)^2} (\epsilon+h_1)   + \eta h_2 
\end{align}

After an additional $\frac{\ln\rbr{ \epsilon / (\gamma \ell)} }{\ln(1-\eta)}$ iterations, by Lemma~\ref{l:simplerec},
\begin{align*}
 \|(\bE_{1,1}^{(t)})^+ \|_s  & \le  \frac{r R}{(1 -  \ell)^2} (\epsilon+h_1)   + h_2 + \epsilon
\end{align*}

Similarly, Lemma~\ref{l:simplerec} and \eqnref{eqn:down_eterm}, after $\frac{\ln\rbr{ \epsilon / (\gamma \ell)} }{\ln(1-\eta)}$ iterations,
$$
  \|(\bE_{2,1}^{(t)}) \|_s \le \epsilon + h_2. 
$$

$\nbr{ \titime{\bN}{t}_{-S} }_\infty$ does not change since it is not updated. Now consider $\nbr{ \titime{\bN}{t}_S }_\infty$.

For the adversarial noise, by \eqnref{eqn:ad_noise_re} and Lemma~\ref{l:simplerec},   after $\frac{\ln\rbr{ \epsilon' / U} }{\ln(1-\eta)}$ iterations,
$$
    \nbr{ \titime{\bN}{t}_S }_\infty \le \epsilon' + \frac{8 r \cnoise C_1}{ \alpha - \rho} \le (1-\epsilon)U
$$

where the last inequality is due to condition~\eqnref{eqn:adv_noise}.

For the unbiased noise, by \eqnref{eqn:unbiased_noise_re} and Lemma~\ref{l:simplerec},   after $\frac{\ln\rbr{ \epsilon' / U} }{\ln(1-\eta)}$ iterations,
$$
    \nbr{ \titime{\bN}{t}_S }_\infty \le \epsilon' + \frac{10 r \xexpc \cnoise^2   \nbr{(\bAg)^\dagger}_{\infty} }{(1 - 2 \ell)  (\alpha -  2 \cnoise   \nbr{(\bAg)^\dagger}_{\infty})} \le (1-\epsilon)U
$$

where the last inequality is due to condition~\eqnref{eqn:unbiased_noise}.

This completes the proof.
\end{proof}

\subsection{Equilibration: Rescale} \label{sec:rescale}

The input of of Algorithm \ref{alg:rescale} can be written as $\bA^{(0)} = \bAg (\bSigma^{(0)} + \bE^{(0)}) + \bN^{(0)}$.
The output $\bhA$ can be written as $\bhA = (\bAg \bD) (\bhSigma + \bhE) + \bhN$ where $\bhSigma$ is diagonal, and $\bhE$ is off diagonal, and $\bD$ is a diagonal matrix with  $\bD_{i,i} = \frac{1}{1 - \epsilon}$ for $i \in S$ and the rest being $1$.  
Recall that for a matrix $\bM$, let $\bM_{1, 1}$ denote the submatrix of $\bM$ indexed by $S \times S$, and define $\bM_{1, 2}, \bM_{2, 1}$ and $\bM_{2, 2}$ accordingly.
Also recall that $\bM_S$ denote the submatrix of $\bM$ formed by columns indexed by $S$, and let $\bM_{-S}$ denote the submatrix formed by the other columns.

\begin{lem}[Main: Rescale] \label{lem:rescale}
Let $\bA^{(0)} = \bAg (\bSigma^{(0)} + \bE^{(0)}) + \bN^{(0)}$ satisfies the condition in Lemma \ref{lem:main_col_update} and $\epsilon$ be defined as in Lemma \ref{lem:main_col_update}. Then the output of Algorithm~\ref{alg:rescale} is $\bhA = ( \bAg \bD) (\bhSigma + \bhE)  + \bhN$ satisfying
\begin{align*}
  (1 - \ell) \bI \preceq \bhSigma, \quad
  \| \bhE_{1, 1} \|_s \le (\gamma  - 1) \ell,  \quad 
	\|  \bhE_{2, 1} \|_s  \le (\gamma  - 1) \ell, \quad  
	\| (\bhE_{1, 2},  \bhE_{2, 2})\|_s \le \ell, \quad 
	\| \bhN_S\|_{\infty}  \le U, \quad \| \bhN_{-S}\|_{\infty}  \le U.
\end{align*}
Moreover, $\bhE_{1, 2} \ge 0$ and $\bhE_{2, 2} \ge 0$ entry-wise. 
\end{lem}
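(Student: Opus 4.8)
The plan is to run Algorithm~\ref{alg:rescale} in its two stages and track how the decomposition of the working matrix transforms under each. First I would apply Lemma~\ref{lem:main_col_update} to the input $\bA^{(0)}$, which satisfies all of that lemma's hypotheses by assumption. This produces the intermediate matrix that Algorithm~\ref{alg:rescale} calls $\btA$, and I would write $\btA = \bAg(\btSigma + \btE) + \btN$ with $(1-\ell)\bI \preceq \btSigma$, $\|\btE_{1,1}\|_s \le (\gamma-1)\ell$, $\|\btE_{2,1}\|_s \le (1-\epsilon)(\gamma-1)\ell$, $\|(\btE_{1,2};\btE_{2,2})\|_s \le \ell$, $\btE_{1,2},\btE_{2,2}\ge 0$ entrywise, $\|\btN_{-S}\|_\infty \le U$, and $\|\btN_S\|_\infty \le (1-\epsilon)U$. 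The reason for quoting Lemma~\ref{lem:main_col_update} in exactly this asymmetric form is that ColumnUpdate drives the $(2,1)$-block of $\btE$ and the $S$-columns of $\btN$ down by precisely the factor $(1-\epsilon)$ that the subsequent rescaling will undo.

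Next I would represent the rescaling step as right-multiplication by the diagonal matrix $\bD$ with $\bD_{i,i} = \frac{1}{1-\epsilon}$ for $i\in S$ and $\bD_{i,i}=1$ otherwise, so that $\bhA = \btA\bD$. Rewriting $\btA\bD = (\bAg\bD)\bigl(\bD^{-1}(\btSigma+\btE)\bD\bigr) + \btN\bD$ identifies the output decomposition $\bhA = (\bAg\bD)(\bhSigma + \bhE) + \bhN$ with $\bhSigma = \btSigma$ (diagonal matrices commute with $\bD$), $\bhE = \bD^{-1}\btE\bD$, and $\bhN = \btN\bD$. Writing out the blocks with the $S$-indices ordered first gives $\bhE_{1,1} = \btE_{1,1}$, $\bhE_{1,2} = (1-\epsilon)\btE_{1,2}$, $\bhE_{2,1} = \frac{1}{1-\epsilon}\btE_{2,1}$, and $\bhE_{2,2} = \btE_{2,2}$, while $\bhN_S = \frac{1}{1-\epsilon}\btN_S$ and $\bhN_{-S} = \btN_{-S}$.

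The claimed bounds then follow by substitution. $\bhSigma = \btSigma \succeq (1-\ell)\bI$ and $\|\bhE_{1,1}\|_s = \|\btE_{1,1}\|_s \le (\gamma-1)\ell$ are immediate, and $\|\bhE_{2,1}\|_s = \frac{1}{1-\epsilon}\|\btE_{2,1}\|_s \le (\gamma-1)\ell$, with the scaling up of $\btE_{2,1}$ cancelled exactly by its $(1-\epsilon)$ slack. Every entry of the column submatrix $(\bhE_{1,2};\bhE_{2,2})$ is, in absolute value, at most the corresponding entry of $(\btE_{1,2};\btE_{2,2})$ (its $S$-rows are multiplied by $1-\epsilon \le 1$, the rest unchanged), so both the induced $\ell_1$ and $\ell_\infty$ norms are monotone here and $\|(\bhE_{1,2};\bhE_{2,2})\|_s \le \ell$; moreover $\bhE_{1,2} = (1-\epsilon)\btE_{1,2}\ge 0$ and $\bhE_{2,2} = \btE_{2,2}\ge 0$ entrywise. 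Finally $\|\bhN_S\|_\infty = \frac{1}{1-\epsilon}\|\btN_S\|_\infty \le U$ and $\|\bhN_{-S}\|_\infty = \|\btN_{-S}\|_\infty \le U$.

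No step here is genuinely hard: all the content is carried by Lemma~\ref{lem:main_col_update}, and Rescale only performs a deterministic diagonal conjugation of $\bSigma+\bE$ together with a column scaling of $\bN$. The one place that demands care is the block bookkeeping --- keeping straight that column rescaling by $\frac{1}{1-\epsilon}$ scales the off-diagonal block $\bE_{2,1}$ \emph{up} while scaling $\bE_{1,2}$ \emph{down}, and checking that the margins recorded in Lemma~\ref{lem:main_col_update} for $\|\btE_{2,1}\|_s$ and $\|\btN_S\|_\infty$ were stated as exactly $(1-\epsilon)$ times their respective targets, which is precisely what lets the composition close.
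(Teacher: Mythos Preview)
Your proposal is correct and follows essentially the same approach as the paper's own proof: invoke Lemma~\ref{lem:main_col_update} to control $\btA$, then write the rescaling as right-multiplication by the diagonal $\bD$ and track how each block of $\bD^{-1}\btE\bD$ and each column block of $\btN\bD$ transforms. The block bookkeeping and the observation that the $(1-\epsilon)$ slack on $\|\btE_{2,1}\|_s$ and $\|\btN_S\|_\infty$ exactly cancels the $\tfrac{1}{1-\epsilon}$ scaling are precisely the points the paper makes.
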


\begin{proof}[Proof of Lemma \ref{lem:rescale}]
Note that $\btA = \bAg(\btSigma + \btE) + \btN$ for a diagonal matrix $\btSigma$, off-diagonal matrix $\btE$ and error matrix $\btN$. By lemma \ref{lem:main_col_update}, we have
$\btSigma \succeq (1 - \ell ) \bI$, error matrix $\|\btN_S\|_{\infty} \le (1-\epsilon)U$ and 
\[
  \| \btE_{1, 1} \|_s \le (\gamma  - 1) \ell,  \quad 
	\|  \btE_{2, 1} \|_s \le (1 - \epsilon)(\gamma - 1) \ell, \quad  
	\| (\btE_{1, 2}; \btE_{2, 2} )\|_s \le \ell
\]
and $\btE_{1, 2} \ge 0$ and $\btE_{2, 2} \ge 0$ entry-wise.

Therefore, by the rescaling rule: 
\begin{align*}
  \bhA & = \btA \bD =  \bAg(\btSigma + \btE) \bD + \btN \bD
	\\
  & = \bAg \bD (\btSigma + \bD^{-1 } \btE \bD) +  \btN \bD.
\end{align*}
Therefore, $\bhSigma = \btSigma \succeq (1 - \ell ) \bI$, $\|\bhN_S\|_{\infty} \le \frac{1}{1 - \epsilon} \| \btN_S \|_{\infty} \le U$. $\|\bhN_{-S}\|_{\infty} = \| \btN_{-S} \|_{\infty} \le U$ since it is not updated. 

For the $\bhE$ term, denote $\bD_1 = \diag\left( \frac{1}{1 - \epsilon}, \ldots,  \frac{1}{1 - \epsilon} \right) \in \mathbb{R}^{s \times s}$. We know that 
\begin{align*}
  \bhE_{1, 1} & = \bD_1^{-1} \btE_{1, 1} \bD_1 =\btE_{1, 1} 
	\\
  \bhE_{2,1} & = \btE_{2,1} \bD_1 = \frac{1}{1 - \epsilon} \btE_{2,1} 
	\\
  \bhE_{1,2} & = \bD_1^{-1}\btE_{1,2}  =(1 - \epsilon)\btE_{1,2} 
	\\
  \bhE_{2, 2} & = \btE_{2, 2}.
\end{align*}
This leads to 
\[
  \| \bhE_{1, 1} \|_s \le (\gamma  - 1) \ell,  \quad 
	\|  \bhE_{2, 1} \|_s \le(\gamma - 1) \ell, \quad  
	\| (\bhE_{1, 2}, \bhE_{2, 2} )\|_s \le \ell,
\]
with $\bhE_{1, 2}, \bhE_{2, 2} \ge 0$.
This completes the proof. 
\end{proof}

\subsection{Equilibration: Main algorithm} \label{sec:equilibration_main_algo}

\begin{lem}[Main: Equilibration] \label{lem:main_col_meta} 
Suppose the conditions in Lemma ~\ref{lem:rescale} each time Algorithm~\ref{alg:rescale}. Additionally, there exists constant $0 < b < 1$, $\kappa > 1$ and $u > 1$ such that $b\kappa > 1$ such that the initial $\lambda \ge \max_{ i \in [\ntopic] } \E[(x^*_i)^2]/b$, and the initial $\bSigma \preceq u \bI$. Furthermore, for any $\lambda \ge \min_{i\in [\ntopic]} \E[(x^*_i)^2] / \kappa$, 
\begin{align}
  \left( \frac{1}{1 - \ell} + h_6\right)^2 b  \lambda  + h^2_5 b \kappa \lambda  + h_3  & \le \rbr{ 1 - \frac{1}{100} } \lambda, 
	\label{eqn:est1}
	\\
  \left( \frac{1}{u} - h_6 \right)^2 (1-\epsilon)b\kappa \lambda - h^2_5   b\kappa \lambda - h_4 & \ge \rbr{ 1 + \frac{1}{100} }  \lambda, \quad \frac{1}{u} > h_6
	\label{eqn:est2}
	\\
	h_3 & \le \frac{1}{200} \min_{i\in [\ntopic]} \E[(x^*_i)^2], \label{eqn:est3}
	\\
	h_4 & \le \frac{1}{200} \min_{i\in [\ntopic]} \E[(x^*_i)^2], \label{eqn:est4}
\end{align}
where 
\begin{align*}
  h_3 & =  \frac{\xexpc^2 }{\ntopic^2} h_5 \left(h_5 + \frac{2}{1-\ell} \right),
	\\
	h_4 & =  \frac{\xexpc^2 }{\ntopic^2} h_5 \left(h_5 + \frac{2}{1-\ell} \right) + \frac{2 (\alpha + \nthres) \xexpc}{\ntopic (1-\ell)},
	\\
  h_5 & = \frac{(\gamma + 1) \ell (1 - (\gamma+1) \ell)}{(1-\ell)^2 (1 - (\gamma + 2) \ell)},
	\\
	h_6 & = \frac{(\gamma + 1)\ell^2}{(1 - \ell)^2 (1 - (\gamma + 2) \ell)}.
\end{align*}
Finally, set $N = \text{poly}(1/\min_{i\in [\ntopic]} \E[(x^*_i)^2], n, 1/\delta)$ large enough.

Then with probability at least $1-\delta$, the following hold.
During the execution of the algorithm, for any $j \in S$,
\begin{align*}
	\rbr{ \left( \frac{1}{u} - h_6 \right)^2 - \kappa h^2_5 - \frac{1}{100} } \frac{\E[(x_j^*)^2]}{(\bD_{j,j})^2} \le m_j \le \rbr{ \left( \frac{1}{1 - \ell} + h_6\right)^2 + \kappa h^2_5 + \frac{1}{100} } \frac{\E[(x_j^*)^2]}{(\bD_{j,j})^2}.
\end{align*}
Furthermore, the output of Algorithm~\ref{alg:meta} is $\bA = \bAg \bD (\bSigma + \bE) + \bN$
where $\bSigma$ is diagonal and $(1 - \ell ) \bI \preceq \bSigma$, $\bE$ is off diagonal and $\| \bE\|_s \le \gamma \ell$, $\bN$ satisfies $\| \bN \|_{\infty} \le 2 U$, and
\begin{align*} 
  \frac{\max_{i \in [\ntopic]} \frac{1}{\bD_{i,i}^2}\E[(x_i^*)^2]}{\min_{j \in [\ntopic]} \frac{1}{\bD_{j,j}^2}\E[(x_j^*)^2]}  \le \kappa.
\end{align*}
\end{lem}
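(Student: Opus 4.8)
The argument splits into a ``static'' piece and a ``dynamic'' piece. The static piece shows that the quantity $m_j=\hat\E[x_j^2]$ computed inside Algorithm~\ref{alg:meta} is, with high probability, within the stated multiplicative window of the \emph{rescaled} second moment $R^{\bD}_j:=\E[(x_j^*)^2]/\bD_{j,j}^2$. The dynamic piece is an induction over the (at most $\mathrm{poly}(\ntopic,1/\epsilon)$ many) passes of the inner and outer loops showing that the hypotheses of Lemma~\ref{lem:rescale} and the balance invariant \eqref{eqn:balanceS} (with ratio $\kappa$) are preserved, so that when the algorithm halts $S=[\ntopic]$ and the output has the claimed form.

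\textbf{Step 1 (estimation accuracy).} Every time Algorithm~\ref{alg:rescale} is called the working matrix is $\bA=\bAg\bD(\bSigma+\bE)+\bN$ with the block structure and small norms of Lemma~\ref{lem:rescale}, together with $(1-\ell)\bI\preceq\bSigma\preceq u\bI$. Treating $\bAg\bD$ as the ground truth and $\bD^{-1}x^*$ as the weights, Lemma~\ref{lem:decoding} gives $x=\phi_\alpha(\forder\bD^{-1}x^*+\xi)$ with $\forder=\bSigma^{-1}+\gorder$, where Lemma~\ref{lem:higher_order} bounds $|\gorder_{j,j}|\le h_6$ and $\sym{\gorder}\le h_5$ and Lemma~\ref{lem:noise_term} bounds $|\xi_j|\le\nthres$. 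Using $z-\alpha\le\phi_\alpha(z)\le|z|$ I sandwich $x_j$ between $\forder_{j,j}[\bD^{-1}x^*]_j$ and that quantity plus or minus $O(h_5\|x^*\|_\infty+\alpha+\nthres)$; squaring, taking expectations, and invoking Assumption~(\textbf{A2}) (independence and the moment bounds) isolates the leading term $\forder_{j,j}^2 R^{\bD}_j$ with $\forder_{j,j}\in[1/u-h_6,\ 1/(1-\ell)+h_6]$, a $\gorder$ off-diagonal contribution at most $h_5^2\max_i R^{\bD}_i\le\kappa h_5^2 R^{\bD}_j$ (using the balance invariant and $j\in S$), and additive cross-terms (from $x_l^*x_k^*$ with $l\ne k$, and from the offsets $\alpha,\nthres$) bounded by $h_3$ above and $h_4$ below. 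Conditions \eqref{eqn:est3}--\eqref{eqn:est4} then fold $h_3,h_4$ into a $\tfrac1{200}$-fraction of $R^{\bD}_j$ (again via the balance invariant, which compares $\min_i\E[(x_i^*)^2]$ with $R^{\bD}_j$), and a Hoeffding bound — valid since $x_j$ is bounded and $N=\mathrm{poly}$ — supplies the remaining slack, yielding exactly the two-sided bound on $m_j$ in the lemma.

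\textbf{Step 2 (loop invariants, termination, and growth of $S$).} I induct on the algorithm's steps, maintaining: (I1) the current $\bA,S$ satisfy the hypotheses of Lemma~\ref{lem:rescale} together with $\bSigma\preceq u\bI$ — preserved across each Rescale call by Lemma~\ref{lem:rescale} (and by the observation that rescaling columns of $S$ does not change $\bSigma$), and preserved when $S$ grows since enlarging $S$ only re-labels rows and columns among the blocks, inheriting the $\ge0$ sign conditions and the $(\gamma-1)\ell$ and $\ell$ sub-block bounds; (I2) the balance condition \eqref{eqn:balanceS} holds for the current $S,\bD$; (I3) $\lambda$ is calibrated so that \eqref{eqn:est1}--\eqref{eqn:est2} apply, i.e.\ $\max_i R^{\bD}_i\le b\kappa\lambda$ and every admitted $i\in S$ has $R^{\bD}_i$ inside the $\kappa$-window around $\lambda$. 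At the start these hold by the hypotheses ($S=\emptyset$, $\bD=\bI$, $\lambda\ge b^{-1}\max_i\E[(x_i^*)^2]$), and the bookkeeping in the inner loop ($\lambda\leftarrow(1-\epsilon)\lambda$, $\bD_{j,j}\leftarrow\bD_{j,j}/(1-\epsilon)$, $m_j\leftarrow(1-\epsilon)^2m_j$ for $j\in S$) is consistent with re-estimating every $m_j$, so Step~1 keeps applying. Inside the inner loop each pass scales up the columns indexed by $S$ (shrinking $R^{\bD}_i$, $i\in S$, by $(1-\epsilon)^2$) and shrinks $\lambda$ by $(1-\epsilon)$; since $\max_{j\notin S}\E[(x_j^*)^2]$ is a fixed positive number (positive by Assumption~(\textbf{A2})), Step~1 forces some $m_j\ge\lambda$ after finitely many passes, so the inner loop halts — and condition \eqref{eqn:est1} guarantees it does not halt prematurely (no $j\notin S$ with $R^{\bD}_j<b\lambda$ has $m_j\ge\lambda$). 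When it exits, let $j^\star\in\arg\max_{j\notin S}m_j$; condition \eqref{eqn:est2} with Step~1 gives $R^{\bD}_{j^\star}\ge(\mathrm{const})\,\lambda\ge\min_{i\in S}R^{\bD}_i$, so adding $\{j:m_j\ge\lambda\}$ to $S$ preserves \eqref{eqn:balanceS} and (I3). Since at least one index is added per outer round, the outer loop halts after $\le\ntopic$ rounds with $S=[\ntopic]$.

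\textbf{Step 3 (conclusion) and the main obstacle.} With $S=[\ntopic]$ only the $(1,1)$ block survives, so (I1) gives $\bA=\bAg\bD(\bSigma+\bE)+\bN$ with $\bSigma\succeq(1-\ell)\bI$, $\sym{\bE}=\sym{\bE_{1,1}}\le(\gamma-1)\ell\le\gamma\ell$, and $\|\bN\|_\infty=\|\bN_S\|_\infty\le U\le2U$, while (I2) gives the balance bound $\max_i R^{\bD}_i/\min_j R^{\bD}_j\le\kappa$; a union bound over the $\mathrm{poly}(\ntopic,1/\epsilon)$ loop passes (for the Step~1 events, and for the sampling events inside each Rescale call, the latter already handled by Lemma~\ref{lem:rescale}) gives total failure probability at most $\delta$. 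Taking $\kappa=2$ this produces a ground truth $\bAg\bD$ and weights $\bD^{-1}x^*$ satisfying $\xsu\le2\xsl$, which is precisely the hypothesis of Theorem~\ref{thm:main_correlated_noise}. The delicate part is Step~1 combined with threading (I3) through the loops: one must show the estimation window $[(1/u-h_6)^2-\kappa h_5^2-\tfrac1{100},\ (1/(1-\ell)+h_6)^2+\kappa h_5^2+\tfrac1{100}]$ is narrow enough that the multiplicative distortions accumulated over the $\Theta(\log(1/\epsilon))$ rescalings within an inner-loop run and the $\le\ntopic$ outer rounds never push the ratio past $\kappa$, and that the ReLU offset $\alpha$ together with the $\gorder x^*$ and noise cross-terms really cost only $h_3,h_4$; checking that conditions \eqref{eqn:est1}--\eqref{eqn:est4} on $h_3,h_4,h_5,h_6$ are exactly what makes this go through is the technical heart of the argument.
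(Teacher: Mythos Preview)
Your proposal follows essentially the paper's approach: an induction over inner-loop passes maintaining block-norm bounds on $\bSigma,\bE,\bN$ (your (I1), the paper's claims (1)--(3)) together with a $\lambda$-centered balance control (your (I2)--(I3), the paper's claims (4)(c)--(d)), plus a separate estimation lemma (your Step~1, the paper's Lemma~\ref{lem:est_f_w}) bounding $\E[x_j^2]$ in terms of the rescaled second moments via $\bSigma^{-1}_{j,j}$, $\gorder$, and the cross-term errors $h_3,h_4,h_5,h_6$.

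A few places where your sketch needs tightening before it goes through. First, the paper keeps the sandwich $b\lambda^{(p)}\le R^{\bD}_i$ for $i\in S$ and $R^{\bD}_i\le b\kappa\lambda^{(p)}$ for all $i\in[\ntopic]$ as the primary invariant, rather than the balance ratio directly; this is what makes conditions \eqref{eqn:est1}--\eqref{eqn:est2} line up cleanly, and your (I3) should be stated this way. Second, when the inner loop exits with $m_{j^\star}\ge\lambda$, the lower bound $R^{\bD}_{j^\star}\ge b\lambda$ comes from the \emph{contrapositive of} \eqref{eqn:est1} (the upper estimation bound), not from \eqref{eqn:est2} as you write; condition \eqref{eqn:est2} is what guarantees the inner loop eventually exits (once $R^{\bD}_j\ge(1-\epsilon)b\kappa\lambda$ for some $j\notin S$). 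Third, the invariant $\bSigma\preceq u\bI$ is only maintained for coordinates $j\notin S$ (those columns are never touched by ColumnUpdate), and that suffices since the lower estimation bound is only invoked for $j\notin S$ or for $j$ at the moment it joins $S$. Fourth, when $S$ grows the new $\bE_{1,1}$ absorbs rows and columns from the old $\bE_{1,2},\bE_{2,1}$, so its norm jumps from $(\gamma-1)\ell$ back up to at most $(\gamma-1)\ell+\ell=\gamma\ell$; this is precisely why Rescale must first drive $\sym{\bE_{1,1}},\sym{\bE_{2,1}}$ down to $(\gamma-1)\ell$, a point your ``inheriting the sub-block bounds'' language elides.
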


\begin{proof}[Proof of Lemma \ref{lem:main_col_meta}]
We prove the lemma by induction.
For notational convenience, let us introduce a counter $(p)$ denoting the number of times the inner while cycle has been executed, and denote $\bA$ as $\bA^{(p)}$.
Recall that for a matrix $\bM \in \Real^{n \times n}$ and index set $S \subseteq [n]$, let $\bM_{1,1}$ denote the submatrix indexed by $S \times S$, and 
$\bM_{1,2}, \bM_{2,1}$ and $\bM_{2,2}$ are defined accordingly. Also, let $\bM_S$ denote the submatrix formed by the columns indexed by $S$, and $\bM_{-S}$ the submatrix formed by the other columns.

Our inductive claims are as follows. At the beginning of each inner while cycle,
\begin{align*}
  \bA^{(p)} & = \bAg \bD^{(p)} \left(\bSigma^{(p)} + \bE^{(p)} \right) + \bN^{(p)}
\end{align*}
where $\bD^{(p)}$ and $\bSigma^{(p)}$ are diagonal,
$\bE^{(p)}$ are off diagonal satisfying 
\begin{itemize}
	\item[(1)] $ (1-\ell) \bI \preceq \bSigma^{(p)}$, 
	\item[(2)] $\bE^{(p)}_{1, 2} \ge 0$ and $\bE^{(p)}_{2, 2} \ge 0$ entry-wise and
\begin{align*}
  \sym{ \bE^{(p)}_{1, 1} } & \leq \gamma \ell,
	\\
  \sym{ \bE^{(p)}_{2, 1} } & \leq \gamma \ell,
	\\
  \sym{ (\bE^{(p)}_{1, 2}; \bE^{(p)}_{2, 2}) }  & \le \ell,
\end{align*}   
  \item[(3)] $\bN_{-S}^{(p)}\le U$ and $\bN_S^{(p)} \leq 2 U$, 
  \item[(4)] We have 
		\begin{itemize}
		\item[(a)] When $\E[(x_{j}^*)^2] < b \lambda^{(p)}, j \notin S$, then $m_j \le \lambda^{(p)}$,
		\item[(b)] When $\E[(x_{j}^*)^2] \ge (1 - \epsilon) b \kappa \lambda^{(p)}, j \notin S$, then $m_j > \lambda^{(p)}$,
		\end{itemize}
		and consequently, 
		\begin{itemize} 
			\item[(c)] $\forall i \in S, b \lambda^{(p)} \le \frac{\E[(x_i^*)^2] }{\left(\bD^{(p)}_{i,i} \right)^2}$,
			\item[(d)] $\forall i \in [\ntopic], \frac{\E[(x_i^*)^2] }{\left(\bD^{(p)}_{i,i} \right)^2} \le b \kappa \lambda^{(p)} $.
		\end{itemize}
\end{itemize}

The claims are trivially true at initialization, so we proceed to the induction. Assume the claim is true at time $p$, we proceed to show it is true at time $p+1$. 

First, consider (1), (2) and (3). 
By Lemma \ref{lem:rescale}, after applying the rescaling algorithm,
$(1 - \ell) \bI \preceq \bSigma^{(p)}$ and
\begin{align*}
  \| \bE^{(p)}_{1, 1} \|_s \le (\gamma  - 1) \ell,  \quad 
	\|  \bE^{(p)}_{2, 1} \|_s  \le (\gamma  - 1) \ell, \quad  
	\| (\bE^{(p)}_{1, 2},  \bE^{(p)}_{2, 2})\|_s \le \ell, \quad 
	\| \bN^{(p)}_S\|_{\infty}  \le U, \quad \| \bN^{(p)}_{-S}\|_{\infty}  \le U.
\end{align*}
Moreover, $\bE^{(p)}_{1, 2} \ge 0$ and $\bE^{(p)}_{2, 2} \ge 0$ entry-wise. 
Observe that when moving from time $p$ to $p+1$, potentially the algorithm includes new elements in $S$. 
Then
\begin{align*}
  \| \bE^{(p + 1)}_{1, 1} \|_s  \le \| \bE^{(p)}_{1, 1} \|_s  + \max\{ \| \bE^{(p)}_{2, 1} \|_s , \| \bE^{(p)}_{1, 2} \|_s \} \le (\gamma - 1) \ell+ \ell = \gamma \ell
\end{align*}
Where the last inequality used the fact that $\gamma < 2$. Similarly, 
\begin{align*}
  \| \bE^{(p + 1)}_{2, 1} \|_s  \le \| \bE^{(p)}_{2, 1} \|_s  + \| \bE^{(p)}_{2, 2} \|_s\le (\gamma - 1) \ell+ \ell = \gamma \ell.
\end{align*}
Also, $\|(\bE_{1, 2}^{(p+1)}, \bE_{2, 2}^{(p+ 1)} ) \|_s \le \|(\bE_{1, 2}^{(p)}, \bE_{2, 2}^{(p)} ) \|_s \le \ell$, and $(\bE_{1, 2}^{(p+1)}, \bE_{2, 2}^{(p+ 1)} ) \ge 0$ entry-wise. 
Furthermore, $\| \bN^{(p+1)}_{-S}\|_{\infty}  \le \| \bN^{(p)}_{-S}\|_{\infty}  \le U$ and 
\begin{align*}
	\| \bN^{(p+1)}_S\|_{\infty}  \le \| \bN^{(p)}_S\|_{\infty} + \| \bN^{(p)}_{-S}\|_{\infty} \le 2U.
\end{align*}
Hence, (1), (2) and (3) are also true at time $(p+1)$.

Finally, we proceed to (4). Since (a)(b) are true at time $p$, (c)(d) are true at time $p+1$. \footnote{Note that in (b), the factor $(1-\epsilon)$ is needed to ensure (d) is true at time $p+1$.} 
Furthermore, when $\lambda \le \min_{i\in [\ntopic]} \E[(x^*_i)^2] /\kappa$, it is guaranteed that all $[n] \subseteq S$, so we only need to prove that when $\lambda \ge \min_{i\in [\ntopic]} \E[(x^*_i)^2] /\kappa$, (a)(b) are also true at time $p+1$. 

To prove (a)(b) are true at time $p+1$, we will use Lemma \ref{lem:est_f_w}. Note that since $\bA$ has been scaled, so $\bAg \bD$ should be regarded as the ground truth matrix $\bAg$ in Lemma \ref{lem:est_f_w}. 
We first make sure its assumption is satisfied. First, $\nbr{\bN}_\infty \le 3U$ and $\nbr{(\bAg \bD)^\dagger}_\infty \le \nbr{(\bAg)^\dagger}_\infty$.
By Lemma~\ref{lem:noise_term} and condition~\eqnref{eqn:noise}, the assumption in Lemma \ref{lem:est_f_w} is satisfied. 

We are now ready to prove (a). By Lemma \ref{lem:est_f_w},
\begin{align*}
  \E[x_j^2] 
	& \le  
	\left(\bSigma^{-1}_{j,j} + |\bV_{j,j}| \right)^2 \frac{\E[(x_j^*)^2] }{\bD^{(p+1)}_{j,j}}
	+  \| [\bV]^j \|^2_2 \max_{k \in [n]} \frac{\E[(x_k^*)^2] }{\bD^{(p+1)}_{k,k}}
	+  \| [\bV]^j \|_1 \rbr{ \| [\bV]^j \|_1 + 2 \bSigma^{-1}_{j,j} } \frac{\xexpc^2}{\ntopic^2}.
\end{align*}
By Lemma~\ref{lem:higher_order}, $|\bV_{j,j}| \le h_6$,  $\| [\bV]^j \|^2_2 \le \| [\bV]^j \|^2_1 \le h^2_5$, so
\begin{align*}
  \E[x_{j}^2] & \le \left( \frac{1}{1 - \ell} + h_6 \right)^2 \frac{\E[(x_j^*)^2] }{(\bD^{(p+1)}_{j,j})^2}  + h^2_5  \max_{k \in [n]} \frac{\E[(x_k^*)^2] }{(\bD^{(p+1)}_{k,k})^2} + h_3. 
\end{align*}
By (d), $ \max_{k \in [n]} \frac{\E[(x_k^*)^2] }{(\bD^{(p+1)}_{k,k})^2}  \le b\kappa \lambda$, so for any $j \notin S$ with $\E[(x_j^*)^2] = \frac{\E[(x_j^*)^2] }{(\bD^{(p+1)}_{j,j})^2} < b \lambda$, we have
\begin{align*}
  \E[x_{j}^2]  & \le  \left( \frac{1}{1 - \ell} + h_6\right)^2 b  \lambda  + h^2_5 b \kappa \lambda  + h_3. 
\end{align*}
By using large enough sample, with high probability, the empirical estimation 
\begin{align*}
  \hat\E[x_{j}^2]  & \le  \E[x_{j}^2] + \frac{1}{100}\lambda \le \lambda
\end{align*}
where the last step is by condition \eqnref{eqn:est1}. 

As for (b), by Lemma \ref{lem:est_f_w} we have
\begin{align*}
  \E[x_j^2] 
	& \ge  \left(\bSigma^{-1}_{j,j}  - |\bV_{j,j}| \right)^2  \frac{\E[(x_j^*)^2] }{(\bD^{(p+1)}_{j,j})^2} - \| [\bV]^j \|_2^2 \max_{k \in [n]} \frac{\E[(x_k^*)^2] }{(\bD^{(p+1)}_{k,k})^2}  - \left( \frac{\xexpc^2 }{\ntopic^2}  \| [\bV]^j \|_1  ( \| [\bV]^j \|_1  + 2 \bSigma^{-1}_{j,j} ) + \frac{2 (\alpha + \nthres) \xexpc}{\ntopic} \bSigma^{-1}_{j,j} \right)
	\\
	& \ge \left( \frac{1}{u} - h_6 \right)^2 \frac{\E[(x_j^*)^2] }{(\bD^{(p+1)}_{j,j})^2} - h^2_5  \max_{k \in [n]} \frac{\E[(x_k^*)^2] }{(\bD^{(p+1)}_{k,k})^2} - h_4.
\end{align*}
The last step uses that $\bSigma^{-1}_{j,j} \le u$, which is by the initial condition assumed and that it is not updated for $j \not\in S$.
Putting in the bound that $ \frac{\E[(x_k^*)^2] }{(\bD^{(p+1)}_{k,k})^2}   \le b\kappa \lambda$, then for any $j\notin S$ with $\E[(x_j^*)^2] = \frac{\E[(x_{j}^*)^2]}{(\bD_{j,j}^{(p+1)})^2}  \ge (1 - \epsilon) b \kappa \lambda$, we have 
\begin{align*}
  \E[x_{j}^2]  
	& \ge \left( \frac{1}{u} - h_6 \right)^2 (1-\epsilon)b\kappa \lambda - h^2_5   b\kappa \lambda - h_4.
\end{align*}
Again, use large enough sample to ensure that with high probability
\begin{align*}
  \tilde\E[x_{j}^2]  \ge \E[x_{j}^2]  - \frac{1}{100}\lambda 	\ge \lambda
\end{align*}
where the last step follows from condition \eqnref{eqn:est2}.
This completes the proof of the induction.

We now prove the statements of the lemma. The statement about the output follows from the above claims. What is left is to prove that $m_j (j \in S)$ approximates $\E[(x_j^*)^2]$ well.
Since $m_j$ for $j \in S$ is updated along with $\bD_{j,j}$, we only need to check the right after adding $j$ to $S$, the statement holds. 
Suppose the time point is $p$, we have 
\begin{align*}
  \E[x_{j}^2] & \le \left( \frac{1}{1 - \ell} + h_6 \right)^2 \frac{\E[(x_j^*)^2] }{(\bD^{(p)}_{j,j})^2}  + h^2_5  \max_{k \in [n]} \frac{\E[(x_k^*)^2] }{(\bD^{(p)}_{k,k})^2} + h_3. 
\end{align*}
Since $j$ is in $S$, by the claims (c)(d) we have 
\[
  \max_{k \in [n]} \frac{\E[(x_k^*)^2] }{(\bD^{(p)}_{k,k})^2} \le b\kappa \lambda^{(p)} \le \kappa \frac{\E[(x_j^*)^2] }{(\bD^{(p)}_{j,j})^2}.
\]
Since $N$ is large enough so that 
\[
  \E[x_{j}^2] \le \E[(x_{j}^*)^2] \rbr{ 1 + \frac{1}{200}}. 
\]
Combined these with the condition \eqnref{eqn:est3}, we have
\[
 m_j \le \rbr{ \left( \frac{1}{1 - \ell} + h_6\right)^2 + \kappa h^2_5 + \frac{1}{100} } \frac{\E[(x_j^*)^2]}{(\bD_{j,j})^2}.
\]
The upper bound on $m_j$ can be bounded similarly. This completes the proof of the lemma.
\end{proof}

The following is the lemma used in the proof of Lemma~\ref{lem:main_col_meta}. 
\begin{lem}[Estimate of feature weight] \label{lem:est_f_w}
Suppose $\abr{\xi_i} \le \nthres < \alpha$ for any example and every $i \in [\ntopic]$, and suppose $\bSigma \succeq \frac{1}{2} \bI$. 
Then
\begin{align*}
  \E[x_i^2] 
	& \ge  \left(\bSigma^{-1}_{i, i}  - |\bV_{i, i}| \right)^2 \E[(x_i^*)^2] - \| [\bV]^i \|_2^2 \max_{j \in [n]} \E[(x_j^*)^2]
	\\
  & \quad - \left( \frac{\xexpc^2 }{\ntopic^2}  \| [\bV]^i \|_1  ( \| [\bV]^i \|_1  + 2 \bSigma^{-1}_{i, i} ) + \frac{2 (\alpha + \nthres) \xexpc}{\ntopic} \bSigma^{-1}_{i, i} \right)
	\\
  \E[x_i^2] 
	& \le  
	\left(\bSigma^{-1}_{i,i} + |\bV_{i, i}| \right)^2 \E[(x_i^*)^2] 
	+  \| [\bV]^i \|^2_2 \max_{j \in [n]} \E[(x_j^*)^2] 
	+  \| [\bV]^i \|_1 \rbr{ \| [\bV]^i \|_1 + 2 \bSigma^{-1}_{i,i} } \frac{\xexpc^2}{\ntopic^2}.
\end{align*}
\end{lem}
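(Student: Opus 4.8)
The plan is to follow the strategy of the proof of Lemma~\ref{lem:update_diag}. By Lemma~\ref{lem:decoding} the decoding satisfies $x_i = \phi_\alpha\big([\forder x^*]_i + \xi_i\big)$ with $\forder = \bSigma^{-1} + \bV$, and since $\bSigma$ is diagonal, $[\forder x^*]_i = \forder_{i,i} x^*_i + w_i$ where $\forder_{i,i} = \bSigma^{-1}_{i,i} + \bV_{i,i}$ and $w_i := \sum_{j\neq i}\bV_{i,j} x^*_j$ involves only the off-diagonal entries of the $i$-th row of $\bV$. I will use repeatedly that by Assumption~(\textbf{A2}) the $x^*_j$ are independent with $\E[x^*_j]\le\xexpc/\ntopic$ and $\E[(x^*_j)^2]\le\max_k\E[(x^*_k)^2]$, the properties of $\phi_\alpha$ from Property~\ref{prop:relu}, and $\abr{\xi_i}\le\nthres<\alpha$.

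For the upper bound: since $\phi_\alpha(z+\xi_i)=\max\{z+\xi_i-\alpha,0\}\le\max\{z-(\alpha-\nthres),0\}=\phi_{\alpha-\nthres}(z)$ and $\alpha-\nthres>0$, Property~\ref{prop:relu} gives $0\le x_i\le\abr{[\forder x^*]_i}$, hence $\E[x_i^2]\le\E\big[([\forder x^*]_i)^2\big]$. Expanding $([\forder x^*]_i)^2=\forder_{i,i}^2(x^*_i)^2+2\forder_{i,i}x^*_i w_i + w_i^2$ and taking expectations, the cross moments $\E[x^*_j x^*_k]$ with $j\neq k$ factor, so $\E[w_i^2]\le\nbr{[\bV]^i}_2^2\max_k\E[(x^*_k)^2]+\big(\tfrac{\xexpc}{\ntopic}\sum_{j\neq i}\abr{\bV_{i,j}}\big)^2$ and $\abr{\E[\forder_{i,i}x^*_i w_i]}\le\abr{\forder_{i,i}}\tfrac{\xexpc^2}{\ntopic^2}\sum_{j\neq i}\abr{\bV_{i,j}}$. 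Using $\abr{\forder_{i,i}}\le\bSigma^{-1}_{i,i}+\abr{\bV_{i,i}}$, writing $u=\nbr{[\bV]^i}_1$, $v=\abr{\bV_{i,i}}$ so $\sum_{j\neq i}\abr{\bV_{i,j}}=u-v$, and invoking the elementary identity $(u-v)^2+2v(u-v)=u^2-v^2\le u^2$, these cross terms collapse to at most $\nbr{[\bV]^i}_2^2\max_k\E[(x^*_k)^2]+\tfrac{\xexpc^2}{\ntopic^2}\nbr{[\bV]^i}_1\big(\nbr{[\bV]^i}_1+2\bSigma^{-1}_{i,i}\big)$, while $\forder_{i,i}^2\le(\bSigma^{-1}_{i,i}+\abr{\bV_{i,i}})^2$; this is exactly the claimed upper bound.

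For the lower bound: since $x_i\ge0$, the choice $Y=\bSigma^{-1}_{i,i}x^*_i$ gives $\E[x_i^2]\ge 2\E[x_i Y]-\E[Y^2]=2\bSigma^{-1}_{i,i}\E[x_i x^*_i]-\bSigma^{-2}_{i,i}\E[(x^*_i)^2]$ (using $\bSigma^{-1}_{i,i}>0$). To lower bound $\E[x_i x^*_i]$ I would use $x_i\ge[\forder x^*]_i+\xi_i-\alpha$ from Property~\ref{prop:relu} and multiply by the nonnegative $x^*_i$, getting $\E[x_i x^*_i]\ge\forder_{i,i}\E[(x^*_i)^2]+\E[x^*_i w_i]+\E[x^*_i\xi_i]-\alpha\E[x^*_i]$, where by (\textbf{A2}) and $\abr{\xi_i}\le\nthres$ the last three terms are at least $-\tfrac{\xexpc^2}{\ntopic^2}\sum_{j\neq i}\abr{\bV_{i,j}}-\tfrac{(\alpha+\nthres)\xexpc}{\ntopic}$. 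Substituting back, $\E[x_i^2]\ge(\bSigma^{-2}_{i,i}+2\bSigma^{-1}_{i,i}\bV_{i,i})\E[(x^*_i)^2]-\tfrac{2\xexpc^2}{\ntopic^2}\bSigma^{-1}_{i,i}\sum_{j\neq i}\abr{\bV_{i,j}}-\tfrac{2(\alpha+\nthres)\xexpc}{\ntopic}\bSigma^{-1}_{i,i}$; then $\bSigma^{-2}_{i,i}+2\bSigma^{-1}_{i,i}\bV_{i,i}\ge(\bSigma^{-1}_{i,i}-\abr{\bV_{i,i}})^2-\bV_{i,i}^2\ge(\bSigma^{-1}_{i,i}-\abr{\bV_{i,i}})^2-\nbr{[\bV]^i}_2^2$ and $2\bSigma^{-1}_{i,i}\sum_{j\neq i}\abr{\bV_{i,j}}\le\nbr{[\bV]^i}_1\big(\nbr{[\bV]^i}_1+2\bSigma^{-1}_{i,i}\big)$ recover the claimed lower bound (absorbing the stray $\bV_{i,i}^2\E[(x^*_i)^2]$ into the $\nbr{[\bV]^i}_2^2\max_k\E[(x^*_k)^2]$ term).

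I expect the main obstacle to be the bookkeeping that turns $\forder_{i,i}$ — which arises naturally in both computations — into the cleaner constants $\bSigma^{-1}_{i,i}\pm\abr{\bV_{i,i}}$ of the statement, i.e.\ checking that the slack provided by the $\nbr{[\bV]^i}_2^2$ and $\nbr{[\bV]^i}_1^2$ buffer terms suffices to absorb every $\bV_{i,i}$-dependent cross term; this is exactly what the $(u-v)^2+2v(u-v)\le u^2$ type estimates achieve. Everything else is a routine second-moment expansion exploiting the independence of the coordinates of $x^*$.
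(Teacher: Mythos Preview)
Your proposal is correct, and in fact somewhat cleaner than the paper's argument, but it proceeds along a genuinely different route.

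For the upper bound, the paper does not simply square the pointwise inequality $0\le x_i\le |[\forder x^*]_i|$ as you do. Instead it writes $x_i^2\le \big((|\bV_{i,i}|+\bSigma^{-1}_{i,i})x_i^*+\sum_{j\ne i}|\bV_{i,j}|x_j^*\big)\,x_i$ and then bounds each factor $\E[x_k^*\,x_i]$ separately via an auxiliary claim of the form $\E[x_k^*\phi_\alpha(\langle w,x^*\rangle+\chi)]\le |w_k|\E[(x_k^*)^2]+\frac{\xexpc^2}{\ntopic^2}\|w\|_1$. Your direct second–moment expansion of $([\forder x^*]_i)^2$ using independence of the coordinates of $x^*$, followed by the collapsing identity $(u-v)^2+2v(u-v)=u^2-v^2\le u^2$, gets to the same final estimate with less machinery.

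For the lower bound, the paper again uses a ``product'' inequality: from $x_i\ge L:=\bSigma^{-1}_{i,i}x_i^*+\langle [\bV]^i,x^*\rangle-\rho-\alpha$ and $x_i\ge 0$ it deduces $x_i^2\ge L\,x_i$, then expands $L$ and bounds each $\E[x_k^*\,x_i]$ term via the same auxiliary claim and via the bound (\ref{eqn:diag_update1}) from Lemma~\ref{lem:update_diag}. Your approach is instead a completion of the square, $x_i^2\ge 2x_iY-Y^2$ with $Y=\bSigma^{-1}_{i,i}x_i^*$, reducing the problem to lower-bounding $\E[x_ix_i^*]$, which you do exactly as in (\ref{eqn:diag_update1}). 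The subsequent algebra absorbing the leftover $\bV_{i,i}^2\E[(x_i^*)^2]$ into $\|[\bV]^i\|_2^2\max_k\E[(x_k^*)^2]$ and the off-diagonal moment terms into $\|[\bV]^i\|_1(\|[\bV]^i\|_1+2\bSigma^{-1}_{i,i})$ is valid as you describe.

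In short: the paper favours a uniform template (bound $x_i^2$ by ``linear-in-$x^*$ times $x_i$'' and invoke the auxiliary claim), which makes the two halves look symmetric but requires proving and repeatedly applying that claim; your route is a more elementary square-and-expand computation that avoids the claim entirely. Both yield exactly the stated bounds.
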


\begin{proof}[Proof of Lemma \ref{lem:est_f_w}]
By the decoding rule, 
\begin{align*}
  x_i & = \left[\phi_{\alpha}(\bA^{\dagger} [\bAg x^* + \nu] )\right]_i
	\\
  & = \left[\phi_{\alpha}\rbr{ \rbr{ \bSigma^{-1}   +  \gorder } x^*  + \xi }   \right]_i.
\end{align*}
Let $[\bV]^i = v$ and $\Sigma_{i, i}^{-1} = \sigma$, then we can rewrite above as
\begin{align*}
  x_i = \phi_{\alpha } (\sigma x_i^* + \langle v, x^* \rangle + \xi_i)
\end{align*}
which implies that 
\begin{align} 
 \sigma x_i^*  + \langle v, x^* \rangle - \rho  - \alpha 
\le x_i \le 
\abr{ \sigma x_i^* +  \langle v, x^* \rangle}.
  \label{eq:ulbound} 
\end{align}

First, consider the lower bound.
\begin{align*}
  \E[x_i^2] & \ge \E\left[\left(\sigma x_i^*  + \langle v, x^* \rangle  -\rho  - \alpha  \right) \phi_{\alpha} (\sigma x_i^* + \langle v, x^* \rangle + \xi_i) \right]
\end{align*}

The following simple lemma is useful. 
\begin{claim}
Let $\chi$ be a variable such that $|\chi| \le \alpha $, then for every $w \in \mathbb{R}^{\ntopic}$, $k \in [\ntopic]$, 
\begin{align}
  \E[x_k^* \phi_\alpha(\langle w, x^* \rangle + \chi)]  & \le  |w_k| \E[(x_k^*)^2] +  \frac{\xexpc^2}{n^2}\sum_{j \not= k} |w_j |
	\\
	& \le  |w_k| \E[(x_k^*)^2] +  \frac{\xexpc^2}{n^2}\nbr{w }_1.
\label{eqn:fw_claim}
\end{align}
\end{claim}
\begin{proof}
The proof is a direct observation that when $|\chi|  < \alpha$, 
$$
  \phi_\alpha(\langle w, x \rangle + \chi) \le | \langle w, x \rangle| \le \langle |w|, x \rangle
$$ 
where $|w|$ is the entry wise absolute value. 
\end{proof}

Therefore, we can obtain the following bounds.

(1). By \eqnref{eqn:diag_update1} in Lemma~\ref{lem:update_diag}, we have
\begin{align*}
  \E[x_i^* \phi_{\alpha } (\sigma x_i^* + \langle v, x^* \rangle + \xi_i) ] \ge  \bSigma^{-1}_{i, i}\E\sbr{\rbr{x_i^*}^2} - \frac{(\alpha + \nthres) \xexpc}{\ntopic} - \E \sbr{ \rbr{x_i^*}^2 }  \abr{\gorder_{i,i} }  - \frac{\xexpc^2}{\ntopic^2}  \nbr{ \sbr{ \gorder}^i  }_1, 
\end{align*}
(2). By \eqnref{eqn:fw_claim} in the above claim,  
\begin{align*}
  \E[x_j^* \phi_{\alpha } (\sigma x_i^* + \langle v, x^* \rangle + \xi_i) ] \le |v_j| \E[(x_j^*)^2] + \frac{\xexpc^2}{n^2} (\| v\|_1 + \sigma),
\end{align*}
(3). By \eqnref{eq:ulbound}, for $j \neq i$, 
\begin{align*}
  \E[ \phi_{\alpha } (\sigma x_i^* + \langle v, x^* \rangle + \xi_i) ] 
	\le 
	\E[ |\sigma x_i^* + \langle v, x^* \rangle |] 
	\le 
	\frac{(\sigma + \| v\|_1) \xexpc}{\ntopic}.
\end{align*}
Putting together, we can obtain
\begin{align*}
  \E[x_i^2] 
	& \ge  \left(\bSigma^{-1}_{i, i}  - |\bV_{i, i}| \right)^2 \E[(x_i^*)^2] - \| [\bV]^i \|_2^2 \max_{j \in [n]} \E[(x_j^*)^2]
	\\
  & \quad - \left( \frac{\xexpc^2 }{\ntopic^2}  \| [\bV]^i \|_1  ( \| [\bV]^i \|_1  + 2 \bSigma^{-1}_{i, i} ) + \frac{2 (\alpha + \nthres) \xexpc}{\ntopic} \bSigma^{-1}_{i, i} \right).
\end{align*}

Second, we proceed to the upper bound. Similarly as the lower bound, by \eqnref{eq:ulbound}, we have 
\begin{align*} 
  \E[x_i^2] 
  & \leq 
  \E\left[\left( (|v_i| + \sigma) x_i^* + \sum_{j \neq i} |v_j| x_j^*  \right) \phi_{\alpha } (\sigma x_i^* + \langle v, x^* \rangle + \xi_i) \right] 
	\\ 
  & = (|v_i| + \sigma) \E[x_i^* \phi_{\alpha } (\sigma x_i^* + \langle v, x^* \rangle + \xi_i)] + \sum_{j \neq i} |v_j| \E[x_j^* \phi_{\alpha} (\sigma x_i^* + \langle v, x^* \rangle + \xi_i)  ].
\end{align*}  

For the first summand, same as in (2), by \eqnref{eqn:fw_claim} in the above claim we get 
\begin{align*}
   \E[x_i^* \phi_{\alpha } (\sigma x_i^* + \langle v, x^* \rangle + \xi_i)] 
	& \le (\sigma + |v_i|) \E[(x_i^*)^2] + \frac{\xexpc^2}{\ntopic^2} \|v\|_1,
	\\ 
  \E[x_j^* \phi_{\alpha} (\sigma x_i^* + \langle v, x^* \rangle + \xi_i)  ] 
  & \le 
	|v_j| \E[(x_j^*)^2] + \frac{\xexpc^2}{n^2} (\| v\|_1 + \sigma).
\end{align*} 

Therefore, we get
$$
  \E[x_i^2] \le \left( \bSigma^{-1}_{i,i}    + |\bV_{i, i}| \right)^2 \E[(x_i^*)^2] +   \| [\bV]^i \|_1(\| [\bV]^i \|_1 + 2 \bSigma^{-1}_{i,i}) \frac{\xexpc^2}{\ntopic^2} +  \| [\bV]^i \|^2_2 \max_{j \in [n]} \E[(x_j^*)^2].
$$ 

which completes the proof.
\end{proof}

\subsection{Main theorem} \label{sec:equilibration_thm}
 
\equilibration*

\begin{proof}[Proof of Theorem~\ref{thm:equilibration}]

The theorem follows from Lemma~\ref{lem:main_col_meta} (taking union bound over all the iterations and setting a proper $\delta$), if the conditions are satisfied. So in the following, we first specify the parameters and then verify the conditions in Lemma~\ref{lem:main_col_update} and Lemma~\ref{lem:main_col_meta}.

Recall that $\ell = 1/50$. Define $u = 1+\ell$, $\gamma = 3/2$, $\beta = 4$, $\kappa = 2$, $b=3/4$, and let $\epsilon < 1/1000$.

\paragraph{Conditions in Lemma~\ref{lem:main_col_update}.} For \eqnref{eq:diag1}, we need to compute $r_i R_i$ and the the third term. Note that by the induction in Lemma~\ref{lem:main_col_meta}, the $m_j$ is an good approximation of $\E[(x_j^*)^2]/(\bD_{j,j})^2$. Furthermore, when Lemma~\ref{lem:main_col_update} is applied in Lemma~\ref{lem:main_col_meta}, it is applied on the ground-truth matrix $(\bAg)' = \bAg \bD$ and $(x_j^*)' = x_j^* / \bD_{j,j}$, so $m_j$ is a good approximation of $\E[((x_j^*)')^2]$. Then 
\[
 r_i R_i  = \frac{3 \E[((x_i^*)')^2] }{5 m_i} \le 
\frac{3}{5	\rbr{ \left( \frac{1}{u} - h_6 \right)^2 - \kappa h^2_5 - \frac{1}{100} }  }. 
\]
For the third term, first note that $ C_1^3 \le \abscc c_2^2 n$, and thus $C_1^2 \le \abscc c_2 n$ by $C_1 > c_2$. Furthermore, $r_i = O(1/m_i) = O(n/c_2)$ for $i \in S$. Plugging in the parameters, we know that the third term is less than $1/1000$ when $\abscc$ is sufficiently small. Then \eqnref{eq:diag1} can be verified by plugging the parameters.

Similarly, for \eqnref{eq:diag2}, we can compute $r_i R_i$ and let $\abscc$ small enough so that the second term is less than $1/1000$, and then verify the condition.

For \eqnref{eqn:neg_eterm} \eqnref{eqn:pos_eterm} and \eqnref{eqn:eterm}, we need to bound $h_1$ and $h_2$, which in turn relies on $r$ and $rR$. Since for $i \in S$, $r_i = O(n/c_2)$, $r = O(n/c_2)$. Then similar to the argument as above, $h_1 < 2/10000$ when $\abscc$ is sufficiently small. when Lemma~\ref{lem:main_col_update} is applied in Lemma~\ref{lem:main_col_meta}, it is applied on the ground-truth matrix $(\bAg)' = \bAg \bD$ and $(x_j^*)' = x_j^* / \bD_{j,j}$. By the induction claims there,
$\max_{j\in [n]} \E[((x_j^*)')^2] $ differ from $\min_{j\in S} \E[((x_j^*)')^2]$ by a factor of at most $\kappa$, so $rR \le \frac{3\kappa}{5}$. So the first term can be computed. The second term is less than $1/10000$ when $\abscc$ is small enough. Then $h_2$ can be computed. And the conditions can be verified.

Condition \eqnref{eqn:noise} is true since $\max\cbr{\cnoise, \|\bN\|_\infty} = O(\frac{c_2^2}{C_1^3 \|(\bAg)^\dagger\|_\infty})$.
Condition \eqnref{eqn:adv_noise} is true by setting $\epsilon' < U/8$ and by $U_a < U/8$ and $U = \|\bN\|_\infty \le O(\frac{c_2^2}{C_1^3 \|(\bAg)^\dagger\|_\infty})$. 
Similarly, condition \eqnref{eqn:adv_noise} is true by setting $\epsilon' < U/8$ and by $U_n < U/8$ and $ \|\bN\|_\infty$ is sufficiently small.

\paragraph{Conditions in Lemma~\ref{lem:main_col_meta}.}
First, consider \eqnref{eqn:est3} and \eqnref{eqn:est4}. As mentioned above, since $C_1^3 = O(c_2^2 n)$ and $C_1^2 = O(c_2 n)$, then $h_3$  and $h_4$ can be  made sufficiently small to satisfy the conditions. \eqnref{eqn:est1} and \eqnref{eqn:est2} can be verified by plugging \eqnref{eqn:est3} and \eqnref{eqn:est4} and the assumption that $\lambda \ge \min_{i \in [n]} \E [(x_i^*)^2] / \kappa$. 

This completes the proof.
\end{proof}

\section{Auxiliary lemmas for solving recurrence} \label{sec:recurrence}

The following lemmas are used when solving recurrence in our analysis. 

\begin{lem}[Coupling update rule] \label{lem:coupling}
Let $\{a_t\}_{t = 0}^{\infty},  \{b_t \}_{t = 0}^{\infty}$ be sequences of non-negative numbers such that for  fixed values $h \ge 0$, $\eta \in [0, 1]$, $R > 4 r > 0$:
\begin{eqnarray*}
a_{t + 1} &\le& (1 - \eta) a_t +  \eta r b_t + \eta h
\\
b_{t + 1} &\le& (1 - \eta) b_t +  \frac{\eta}{R} a_t + \eta h
\end{eqnarray*}

Then the following two properties holds:
\begin{enumerate}
\item $$\forall t \ge 0, a_t + b_t \le a_0 + b_0 + \frac{Rr + 2R + 1}{R - r}h $$
\item For all $\epsilon  > 0$, when $t \ge \ln \frac{a_0 + b_0}{8 \eta \epsilon}$, we have: 
$$a_t \le   \frac{R(r + 1)}{R - r} h + \epsilon ,  \quad b_t \le \frac{R + 1}{R - r} h + \epsilon $$
\end{enumerate}
\end{lem}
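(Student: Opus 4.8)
The plan is to analyze the two coupled recurrences by passing to a linear combination $c_t := a_t + \mu b_t$ for a well-chosen constant $\mu > 0$, turning the $2\times 2$ system into a single scalar recurrence of the form $c_{t+1} \le (1 - \lambda \eta) c_t + (\text{const})\,\eta h$, from which both claimed properties follow by standard geometric-series bookkeeping. First I would add the two inequalities after multiplying the second by $\mu$: this gives $a_{t+1} + \mu b_{t+1} \le (1-\eta)(a_t + \mu b_t) + \eta(r + \tfrac{\mu}{R}) a_t + \eta \mu\, b_t \cdot (\text{something})$; I need to collect the coefficient of $a_t$ and of $b_t$ on the right and force both to be bounded by a common factor times $(a_t + \mu b_t)$. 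Concretely, I want to choose $\mu$ so that $r + \mu/R \le \theta$ and $\mu \le \theta \mu$ type balancing holds; the natural choice is to pick $\mu \in (r, R)$ (for instance $\mu$ solving $\mu = \sqrt{Rr}$ or simply $\mu = 2r$, using $R > 4r$) so that the $a_t$-coefficient $r + \mu/R$ is strictly less than $\mu$ and the $b_t$-coefficient $\mu$ times the original multiplier stays controlled. With $R > 4r$ one checks $r + \mu/R < \mu$ strictly, with a gap of order $\mu - r - \mu/R$, which is the contraction we exploit.

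Having obtained $c_{t+1} \le (1 - \delta\eta) c_t + C\eta h$ for explicit $\delta \in (0,1]$ and $C = 1 + \mu$ (or the appropriate sum of constants), Property 1 follows immediately: unrolling gives $c_t \le (1-\delta\eta)^t c_0 + C\eta h \sum_{k=0}^{t-1}(1-\delta\eta)^k \le c_0 + \frac{C\eta h}{\delta\eta} = c_0 + \frac{C}{\delta} h$, and then $a_t + b_t \le c_t$ (if $\mu \ge 1$) or $a_t + b_t \le \frac{1}{\min(1,\mu)} c_t$, which I would match against the stated bound $\frac{Rr + 2R + 1}{R-r}h$ by verifying the constants $C/\delta$ and $c_0 \le a_0 + b_0$ line up — this is the routine-calculation step where I would just need to be careful that the chosen $\mu$ makes the algebra produce exactly (or at most) the claimed coefficient. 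For Property 2, I would split $c_t \le (1-\delta\eta)^t c_0 + \frac{C}{\delta}h$ and observe that once $(1-\delta\eta)^t c_0 \le \delta\eta\epsilon'$ for a suitable $\epsilon'$ — which happens for $t \gtrsim \frac{1}{\delta\eta}\ln\frac{c_0}{\delta\eta\epsilon'}$, and the stated threshold $t \ge \ln\frac{a_0+b_0}{8\eta\epsilon}$ should absorb the constants since $\delta$ is an absolute constant given $R > 4r$ — the transient term is $\le \epsilon$. Then $a_t \le c_t \le \frac{C}{\delta}h + \epsilon$ and $b_t \le \frac{1}{\mu}c_t \le \frac{C}{\mu\delta}h + \epsilon/\mu$, and I would reconcile these with the claimed $\frac{R(r+1)}{R-r}h$ and $\frac{R+1}{R-r}h$ by the concrete choice of $\mu$.

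The main obstacle I anticipate is not conceptual but bookkeeping: getting the constants to come out to \emph{exactly} the stated forms $\frac{Rr+2R+1}{R-r}$, $\frac{R(r+1)}{R-r}$, $\frac{R+1}{R-r}$ rather than merely $O(\cdot)$ versions of them. This forces a specific $\mu$ rather than a convenient round number; I expect the right choice is $\mu$ related to $R/(R-r)$-type weights — possibly treating $a_t$ and $b_t$ slightly asymmetrically, e.g.\ bounding $a_t$ via one linear combination and $b_t$ via another, or using the substitution $b_t \le$ (bound from second recurrence fed into the first). A cleaner route that avoids a single $\mu$: first derive Property 1's telescoped bound directly from summing the two recurrences (getting $(a_{t+1}+b_{t+1}) \le (1-\eta)(a_t+b_t) + \eta(r + 1/R)(\cdot) + 2\eta h$, then using $r + 1/R < 1$ to contract), and separately, for Property 2, plug the Property-1 uniform bound on $b_t$ (resp.\ $a_t$) back into the $a_t$ (resp.\ $b_t$) recurrence to get a decoupled scalar recurrence with the mixing term now a constant, then unroll. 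I would also need the elementary fact $(1-\eta)^t \le e^{-\eta t}$ and $\ln$-manipulation to match the stated iteration count, and I should double-check the edge case $\eta = 0$ (recurrences say $a,b$ nonincreasing-ish, trivial) and $\eta = 1$ separately.
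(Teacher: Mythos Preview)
Your single-linear-combination approach is essentially one half of what the paper does, and it would yield bounds of the right order, but you are missing the key trick that produces the \emph{exact} constants $\frac{R(r+1)}{R-r}$ and $\frac{R+1}{R-r}$. These are not arbitrary: they are the fixed point of the system when the inequalities are replaced by equalities. Concretely, solving $a^* = r b^* + h$ and $b^* = a^*/R + h$ gives precisely $a^* = \frac{R(r+1)}{R-r}h$ and $b^* = \frac{R+1}{R-r}h$, and their sum is $\frac{Rr+2R+1}{R-r}h$. The paper shifts by this fixed point first, defining $c_t = a_t - a^*$, $d_t = b_t - b^*$, which kills the inhomogeneous term $h$ entirely. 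Only then does it diagonalize, and it uses \emph{both} eigenvectors $c_t \pm \sqrt{R/r}\,d_t$ (with eigenvalues $1 - \eta \pm \eta\sqrt{r/R}$), not just one, so it can solve for $c_t$ and $d_t$ separately and read off tight bounds on each. Your choice $\mu = \sqrt{Rr}$ is exactly the right eigenvector weight, but without the fixed-point shift you will not recover the stated constants, only comparable ones --- and you correctly flag this as the obstacle.

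Your alternative route at the end has a genuine gap: you write ``using $r + 1/R < 1$ to contract'', but the hypothesis $R > 4r > 0$ does \emph{not} imply $r + 1/R < 1$ (take $r = 10$, $R = 50$). So simply summing the two recurrences does not give a contraction in general; the contraction factor really is $1 - \eta(1 - \sqrt{r/R})$, which requires the eigenvector weighting, not the unweighted sum.
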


\begin{proof}[Proof of Lemma \ref{lem:coupling}]

Observe that the update rule is equivalent to 
\begin{eqnarray*}
\left(a_{t + 1} - \frac{R(r + 1)}{R - r} h \right) &\le& (1 - \eta) \left(a_t  - \frac{R(r + 1)}{R - r} h \right)+  \eta r \left( b_t  - \frac{R + 1}{R - r} h \right)
\\
\left(b_{t + 1} - \frac{R + 1}{R - r} h\right) &\le& (1 - \eta) \left(b_t - \frac{R + 1}{R - r} h\right) +  \frac{\eta}{R} \left(a_t  - \frac{R(r + 1)}{R - r} h \right)
\end{eqnarray*}

Therefore, define $c_t = a_t  - \frac{R(r + 1)}{R - r} h$ and $d_t = b_t - \frac{R + 1}{R - r} h$, we can rewrite above as:
\begin{eqnarray*}
c_{t + 1} &\le& (1 - \eta) c_t +  \eta r d_t 
\\
d_{t + 1} &\le& (1 - \eta) d_t +  \frac{\eta}{R} c_t 
\end{eqnarray*}

Since we just need to upper bound $c_t, d_t$. without lose of generality, we can assume that 
\begin{eqnarray*}
c_{t + 1} &=& (1 - \eta) c_t +  \eta r d_t 
\\
d_{t + 1} &=& (1 - \eta) d_t +  \frac{\eta}{R} c_t 
\end{eqnarray*}

Which implies that 
\begin{eqnarray*}
\left(c_{t + 1}  + \sqrt{\frac{R}{r}} d_{t + 1} \right)&=&\left( 1 - \eta + \eta \sqrt{\frac{r}{R} }\right) \left(c_{t}  + \sqrt{\frac{R}{r}} d_{t} \right)
\\
\left(c_{t + 1}  - \sqrt{\frac{R}{r}} d_{t + 1} \right)&=&\left( 1 - \eta - \eta \sqrt{\frac{r}{R} }\right) \left(c_{t}  - \sqrt{\frac{R}{r}} d_{t} \right)
\end{eqnarray*}

Which can be simplified to 
\begin{eqnarray*}
\left(c_{t }  + \sqrt{\frac{R}{r}} d_{t } \right)&=&\left( 1 - \eta + \eta \sqrt{\frac{r}{R} }\right)^t \left(c_{0}  + \sqrt{\frac{R}{r}} d_{0} \right)
\\
\left(c_{t }  - \sqrt{\frac{R}{r}} d_{t } \right)&=&\left( 1 - \eta - \eta \sqrt{\frac{r}{R} }\right)^t \left(c_{0}  - \sqrt{\frac{R}{r}} d_{0} \right)
\end{eqnarray*}

Therefore, we can solve
$$c_t = \frac{1}{2} \left[\left( 1 - \eta + \eta \sqrt{\frac{r}{R} }\right)^t + \left( 1 - \eta - \eta \sqrt{\frac{r}{R} }\right)^t  \right] c_0 + \frac{1}{2}\sqrt{\frac{R}{r}}   \left[\left( 1 - \eta + \eta \sqrt{\frac{r}{R} }\right)^t - \left( 1 - \eta - \eta \sqrt{\frac{r}{R} }\right)^t  \right] d_0 $$

$$d_t = \frac{1}{2}\sqrt{\frac{r}{R}} \left[\left( 1 - \eta + \eta \sqrt{\frac{r}{R} }\right)^t - \left( 1 - \eta - \eta \sqrt{\frac{r}{R} }\right)^t  \right] c_0 + \frac{1}{2}  \left[\left( 1 - \eta + \eta \sqrt{\frac{r}{R} }\right)^t + \left( 1 - \eta - \eta \sqrt{\frac{r}{R} }\right)^t  \right] d_0$$

Observe that for every $ t \ge 0$, $a \ge b \ge 0$, $a^t - b^t \le (a - b) t a^{t - 1}$

Which implies: 
$$\left( 1 - \eta + \eta \sqrt{\frac{r}{R} }\right)^t - \left( 1 - \eta - \eta \sqrt{\frac{r}{R} }\right)^t   \le 2 t \eta \sqrt{\frac{r}{R}} \left( 1 - \eta + \eta \sqrt{\frac{r}{R} }\right)^{t - 1} $$

Therefore, when $c_0, d_0 \ge 0$, 
$$c_t \le \left( 1 - \eta + \eta \sqrt{\frac{r}{R} }\right)^t c_0 +  t \eta  \left( 1 - \eta + \eta \sqrt{\frac{r}{R} }\right)^{t - 1} d_0$$

Moreover, $$d_t \le \frac{r}{R} \eta \left( 1 - \eta + \eta \sqrt{\frac{r}{R} }\right)^t  c_0 + \left( 1 - \eta + \eta \sqrt{\frac{r}{R} }\right)^t d_0 $$

Taking the optimal $t$, we obtain $c_t + d_t\le c_0 + d_0$, which implies that 
$$a_t + b_t \le a_0  + b_0 + \frac{Rr + 2R + 1}{R - r}h$$

On the other hand, when $t \ge \ln \frac{c_0 + d_0}{8 \eta \epsilon}$, $c_t, d_t  \le \epsilon$, which implies that 
$$a_t \le   \frac{R(r + 1)}{R - r} h + \epsilon ,  \quad b_t \le \frac{R + 1}{R - r} h + \epsilon.$$

\end{proof}

\begin{lem}[Simple coupling] \label{lem:rec_diffh}
Let $\{a_t\}_{t = 0}^{\infty},  \{b_t \}_{t = 0}^{\infty}$ be sequences of non-negative numbers such that for  fixed values $h_1, h_2 \ge 0$, $\eta \in [0, 1]$, $r > 0$:
\begin{eqnarray*}
a_{t + 1} &\le& (1 - \eta) a_t + \eta h_1
\\
b_{t + 1} &\le& (1 - \eta) b_t +  \eta s a_t + \eta h_2 
\end{eqnarray*}
Then
\begin{eqnarray*}
a_t & \le  & u_a := \max\cbr{a_0, h_1}, 
\\
 b_t & \le & \max\cbr{b_0,  h_2 + s u_a }.
\end{eqnarray*}
\end{lem}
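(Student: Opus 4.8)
The plan is to prove both bounds by a straightforward induction on $t$, using the single observation that for $\eta \in [0,1]$ the affine map $x \mapsto (1-\eta)x + \eta c$ is non-decreasing in $x$ and is a convex combination of $x$ and $c$, hence bounded above by $\max\{x,c\}$; combined with the non-negativity of all the quantities involved, this makes the recurrences collapse into monotone fixed-point estimates. The only structural point to respect is that the bound for $\{b_t\}$ feeds on the bound for $\{a_t\}$ at the \emph{same} index, so one must control $\{a_t\}$ first.

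First I would handle $\{a_t\}$. Set $u_a := \max\{a_0, h_1\}$. The base case $a_0 \le u_a$ is immediate. For the inductive step, assume $a_t \le u_a$; then since $h_1 \le u_a$, $a_t \le u_a$, and $\eta, 1-\eta \ge 0$,
\[
  a_{t+1} \le (1-\eta)a_t + \eta h_1 \le (1-\eta)u_a + \eta u_a = u_a .
\]
This closes the induction and gives $a_t \le u_a$ for all $t$.

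Next I would handle $\{b_t\}$, now treating $a_t \le u_a$ as known for every $t$. Set $u_b := \max\{b_0, h_2 + s u_a\}$; the base case $b_0 \le u_b$ is immediate. For the inductive step, assume $b_t \le u_b$; plugging $a_t \le u_a$ and $b_t \le u_b$ into the second recurrence and using $s u_a + h_2 \le u_b$ gives
\[
  b_{t+1} \le (1-\eta)b_t + \eta s a_t + \eta h_2 \le (1-\eta)u_b + \eta(s u_a + h_2) \le (1-\eta)u_b + \eta u_b = u_b ,
\]
which closes the induction and yields $b_t \le \max\{b_0,\, h_2 + s u_a\}$, as claimed.

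\textbf{Anticipated difficulty.} There is essentially no obstacle here; this is a routine two-step monotone induction. The only things to be careful about are (i) that all quantities ($a_t$, $b_t$, $h_1$, $h_2$, $s$, $\eta$, $1-\eta$) are non-negative, so that the inequalities survive multiplication, and (ii) the ordering of the two inductions, since the estimate for $\{b_t\}$ invokes $a_t \le u_a$ at the running index.
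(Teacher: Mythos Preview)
Your proposal is correct and matches the paper's own proof in substance. The paper phrases the argument by first rewriting the recurrences as $(a_{t+1}-h_1)\le(1-\eta)(a_t-h_1)$ and $(b_{t+1}-h_2)\le(1-\eta)(b_t-h_2)+\eta s a_t$ and then ``solving'' them, but this is just a cosmetic variant of your convex-combination induction; the underlying monotone fixed-point idea and the ordering (bound $a_t$ first, then feed $a_t\le u_a$ into the $b$-recurrence) are identical.
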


\begin{proof}
We have 
\begin{eqnarray*}
(a_{t + 1} - h_1)&\le& (1 - \eta) (a_t - h_1)
\\
(b_{t + 1} - h_2)  &\le& (1 - \eta) (b_t - h_2) +  \eta s a_t
\end{eqnarray*}

Solving the first one gives
$$
 a_t \le  u_a := \max\cbr{a_0, h_1}.
$$

Then 
\begin{eqnarray*}
(b_{t + 1} - h_2)  &\le& (1 - \eta) (b_t - h_2) +  \eta s u_a
\end{eqnarray*}

leads to 
$$
 b_t \le \max\cbr{b_0,  h_2 + s u_a }.
$$
\end{proof}

\begin{lem}[Simple recursion] \label{l:simplerec}  
Let $\{a_t\}_{t = 0}^{\infty}$ be a sequences of non-negative numbers such that for  fixed values $h \ge 0$, $\eta \in [0, 1]$, 
\[
  a_{t + 1} \le (1 - \eta) a_t +  \eta h.
\]
Then, 
\[
  a_{t} \leq (1-\eta)^t a_0 + h,
\]
and thus for $t \ge \frac{\ln (\epsilon/a_0)}{\ln (1 - \eta)}$, we have 
\[
  a_t \le  \epsilon + h.
\]
\end{lem}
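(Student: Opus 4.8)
The plan is to remove the constant term $\eta h$ by a shift of origin, turning the affine recursion into a pure geometric decay, and then take logarithms. First I would set $b_t := a_t - h$. Substituting the hypothesis $a_{t+1} \le (1-\eta)a_t + \eta h$ gives $b_{t+1} = a_{t+1} - h \le (1-\eta)a_t + \eta h - h = (1-\eta)(a_t - h) = (1-\eta) b_t$, so the inhomogeneous term is absorbed exactly.

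Next I would prove $b_t \le (1-\eta)^t b_0$ by induction on $t$: the base case $t=0$ is an equality, and for the step I multiply the inductive hypothesis by the scalar $1-\eta$, which is nonnegative because $\eta \le 1$ (so the inequality direction is preserved), and chain with $b_{t+1} \le (1-\eta)b_t$. Undoing the substitution, $a_t = b_t + h \le (1-\eta)^t(a_0 - h) + h = (1-\eta)^t a_0 + \bigl(1 - (1-\eta)^t\bigr)h$; since $0 \le (1-\eta)^t \le 1$ and $h \ge 0$, the last summand is at most $h$, which yields the first claim $a_t \le (1-\eta)^t a_0 + h$.

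For the convergence-time statement I would use that for $\eta \in [0,1)$ we have $\ln(1-\eta) \le 0$, so dividing $t\,\ln(1-\eta) \le \ln(\epsilon/a_0)$ through by $\ln(1-\eta)$ reverses the inequality to $t \ge \ln(\epsilon/a_0)/\ln(1-\eta)$; hence for $t$ at least this threshold one gets $(1-\eta)^t \le \epsilon/a_0$, i.e. $(1-\eta)^t a_0 \le \epsilon$, and combining with the first claim gives $a_t \le \epsilon + h$. The degenerate cases are disposed of by inspection: if $\eta = 0$ the hypothesis forces $a_t \le a_0 \le a_0 + h$ and the threshold is vacuous unless $\epsilon \ge a_0$ in which case the bound already holds at $t=0$; if $\eta = 1$ then $a_t \le h$ for all $t \ge 1$; and if $a_0 = 0$ or $a_0 \le \epsilon$ the bound holds at $t=0$ and is preserved by the recursion. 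There is no genuine obstacle in this lemma; the only thing requiring care is the sign bookkeeping — that $1-\eta \ge 0$ lets one multiply inequalities through it, and that $\ln(1-\eta) \le 0$ forces the reversal when dividing — together with the handful of edge cases just listed.
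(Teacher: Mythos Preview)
Your proposal is correct and takes essentially the same approach as the paper: the paper proves $a_t \le (1-\eta)^t a_0 + h$ by direct induction on $t$, computing $a_{t+1} \le (1-\eta)\bigl((1-\eta)^t a_0 + h\bigr) + \eta h = (1-\eta)^{t+1}a_0 + h$, and then asserts the time bound without further comment. Your shift $b_t = a_t - h$ is a cosmetic repackaging of the same induction (and in fact yields the slightly sharper intermediate bound $a_t \le (1-\eta)^t a_0 + (1-(1-\eta)^t)h$ before you discard it), and you supply the logarithm argument and edge cases that the paper omits.
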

\begin{proof}

We will prove by induction that $a_{t} \leq (1-\eta)^t a_0 + h$, which implies the statement of the lemma. 
The base case is trivial, so we proceed to the induction: 
$$ a_{t+1} \leq (1-\eta)\left((1-\eta)^t a_0 + h\right) + \eta h \leq (1-\eta)^{t+1} a_0 + h$$
as we need. 
\end{proof}

\section{Detailed discussion about related work} \label{sec:detailed_relatedwork}

\subsection{Non-negative matrix factorization} 

The area of non-negative matrix factorization (henceforth NMF) has a rich empirical history, starting with the work of \cite{LeeSeu99}. In that paper, the authors propose two algorithms based on alternating minimization, one in KL divergence norm, and the other in Frobenius norm. They observe that these heuristics work quite well in practice, but no theoretical understanding of it is provided. 


On the theoretical side, \cite{AroGeKanMoi12} provide a fixed-parameter tractable algorithm for NMF: namely when if the matrix $\bA \in \mathbb{R}^{m \times n}$and $\bX \in \mathbb{R}^{n \times N}$, they provide an algorithm that runs in time $(mN)^n$. This is prohibitive unless $n$ is extremely small. Furthermore, the algorithm is based on routines from algebraic geometry, so its tolerance to noise is fairly weak. More precisely, if there are matrices $\bA^*$, $\bX^*$, s.t. 
$$ \|\bY - \bA^* \bX^*\|_F \leq \epsilon \bY$$         
their algorithm produces matrices $\bA, \bX$, s.t. 
$$ \|\bY - \bA^* \bX^*\|_F \leq O(\epsilon^{1/2} n^{1/4}) \bY$$         
They further provide matching hardness results: namely they show there is no algorithm running in time $(mN)^{o(n)}$ unless there is a sub-exponential running time algorithm for 3-SAT. 
They also study the problem under separability assumptions about the feature matrix. \cite{bhattacharyya2016nonnegative} studies the problem under heavy noise setting, but also needs assumptions related to separability, such as the existence of dominant features. Also, their noise model is different from ours. 

\subsection{Topic modeling} 

A closely related problem is topic modeling. Topic models are a generative model for text data, using the common bag-of-words assumption. In this case, the columns of the matrix $\bA^*$ (which have norm 1) can naturally be interpreted as topics, with the entries being the emmision probabilities of words in that topic. The vectors $x^*$ in this case also will have norm 1, and can be viewed as distributions over topics. In this way, $y^* = A^* x^*$ can be viewed as the vector describing the emission probabilities of words in a given document: first a topic $i$ is selected according to the distribution $x^*$, then a word is selected from topic $i$ according to the distribution in column $[\bA^*]^i$. 
There also exist work that assume $x^*_i \in [0,1]$ and are independent (e.g., ~\cite{zhu2012sparse}), which is closely related to our model.

The distinction from NMF is that when documents are fairly short, the empirical frequencies of the words in the document might be very far from $y^*$. For this reason, typicall the algorithms with provable guarantees look at the empirical covariance matrix of the words, which will concentrate to the true one when the number of documents grows, even if the documents are very short. This, however, results in algorithms that scale quadratically in the vocabulary size, which often is prohibitive in practice. 
Also note that since $x^*$ is assumed to have norm 1 in topic modeling, it does not satisfy our assumption (\textbf{A2}). However, there also exist work on topic modeling~\cite{zhu2012sparse} that do not restrict $x^*$ is assumed to have norm 1 and can satisfying our assumption. 

There is a rich body of empirical work on topic models, starting from the seminal work on LDA due to \cite{blei2003latent}. Typically in empirical papers the matrices $\bA^*$, as well as the vectors $x^*$ are learned using \emph{variational inference}, which can be interpreted as a kind of alternating minimization in KL divergence norm, and in the limit of infinite-length documents converges to the \cite{LeeSeu99} updates (\cite{AwaRis15}).        
 
From the theoretical side, there was a sequence of works by \cite{arora1},\cite{arora2}, as well as \cite{anandkumartopic}, \cite{ding2013topic}, \cite{ding2014efficient} and \cite{kannantopic}. All of these works are based on either spectral or combinatorial (overlapping clustering) approaches, and need certain ``non-overlapping'' assumptions on the topics. For example, \cite{arora1} and \cite{arora2} assume that the topic-word matrix contains ``anchor words''. This means that each topic has a word which appears in that topic, and no other.
\cite{anandkumartopic} on the other hand work with a certain expansion assumption on the word-topic graph, which says that if one takes a subset S of topics, the number of words in the support of these topics should be at least $|S| + s_{max}$, where $s_{max}$ is the maximum support size of any topic. 

Finally, in the paper \cite{AwaRis15} a version of the standard variational inference updates is analyzed in the limit of infinite length documents. The algorithm there also involves a step of ``decoding'', which recovers correctly the support of a given sample, and a ``gradient descent'' step, which updates $\bA^*$ in the direction of the gradient of a KL-divergence based objective function. However, \cite{AwaRis15} requires quite strong assumptions on both the warm start, and the amount of ``non-overlapping'' of the topics in the topic-word matrix. 
   
\subsection{ICA} 

In the problem of independent component analysis (henceforth ICA, also known as blind-source separation), one is given samples $y = \bA^* x^* + \eta$, where the distribution on the samples $x^*$ is independent for each coordinate, the 4-th moment of $x_i^*$ is strictly smaller than that of a Gaussian and $\bA^*$ has full rank.  
The classic papers \cite{comon1994independent} and \cite{frieze1996learning} solved this problem in the noiseless case, with an approach based on cumulants, and \cite{arora2012provable} solved it in another special case, when the noise $\eta$ is Gaussian (albeit with an unknown covariance matrix). 

Our approach is significantly more robust to noise than these prior approaches, since it can handle both adversarial noise and zero mean noise. This is extremely important in practice, as often the nature of the noise may not be precisely known, let alone exactly Gaussian.  

\subsection{Non-convex optimization via gradient descent} 

The framework of having a ``decoding'' for the samples, along with performing a gradient descent-like update for the model parameters has proven successful for dictionary learning as well, which is the problem of recovering the matrix $\bA^*$ from samples $y = \bA^* x^* + \eta$, where the matrix $\bA^* \in \mathbb{R}^{m \times n}$ is typically long (i.e. $n \gg m$) and $x^*$ is sparse. (No non-negativity constraints are imposed on either $\bA$ or $x^*$.)  In this scenario, the columns of $\bA^*$ are thought of as a \emph{dictionary}, and each sample $y$ is generated as a (noisy) sparse combination of the columns of the dictionary.      

The original empirical work which proposed an algorithm like this (in fact, it suggested that the V1 layer processes visual signals in the same manner) was due to \cite{olshausen1997sparse}. In fact, it is suggest that similar families of algorithms based on ``decoding'' and gradient-descent are neurally plausible as mechanisms for a variety of tasks like clustering, dimension-reduction, NMF, etc. (\cite{mitya1,mitya2,mitya3,mitya4}) 

A theoretical analysis of it came latter due to \cite{aroradictionary1}. They showed that with a suitable warm start, the gradient calculated from the ``decoding'' of the samples is sufficiently correlated with the gradient calculated with the correct value $x^*$, therefore allowing them to show the algorithm converges to a matrix $\bA$ close to the ground truth $\bA^*$. However, the assumption made in \cite{aroradictionary1} is that the columns of $\bA^*$ are \emph{incoherent}, which means that they have $l_2$ norm bounded by 1, and inner products bounded by $O(\frac{1}{\sqrt{m}})$. \footnote{This is satisfied when the columns are random unit vectors, and intuitively says the columns of the dictionary are not too correlated.}    

The above techniques are not directly applicable to our case, as we don't wish to have any assumptions on the matrix $\bA^*$. Additionally, the incoherence assumptions on the matrix $\bA^*$ used in  \cite{aroradictionary1}, in the case when $\bA^*$ needs to be non-negative and has $l_1$ column-wise norm would effectively imply that the columns of $\bA^*$ have very small overlap. 

\end{document}